\newcommand{\RR}{\mathbb{R}}
\newcommand{\Nat}{\mathbb{N}}
\newcommand{\Id}{{I}}%
\DeclareMathOperator{\dist}{dist} 
\renewcommand{\P}{\mathbb{P}} %
\newenvironment{proofof}[1]{\par\noindent{\textbf{Proof of #1. }}}{\hfill $\blacksquare$\\}
\title{Clustering with Tangles: \\Algorithmic Framework and Theoretical Guarantees}
\author{Solveig Klepper\textsuperscript{1} \and
Christian Elbracht\textsuperscript{2} \and 
Diego Fioravanti \textsuperscript{1} \and
Jay Lilian Kneip\textsuperscript{2} \and
Luca Rendsburg\textsuperscript{1} \and
Maximilian Teegen\textsuperscript{2} \and
Ulrike von Luxburg\textsuperscript{1}\\\\
\begin{minipage}[t]{0.45\textwidth}
\textsuperscript{1}  Department of Computer Science\\
University of Tübingen\\
Maria von Linden Str. 6, 72074 Tübingen \\Germany 
\end{minipage}\hspace{0.5cm}
\begin{minipage}[t]{0.4\textwidth}
\textsuperscript{2}  Department of Mathematics\\
University of Hamburg \\
Bundesstraße 55, 20146 Hamburg \\
Germany
\end{minipage}}
\date{November 8, 2022}
\begin{document}

\maketitle

\begin{abstract}
Originally, tangles were invented as an abstract tool in mathematical graph theory to prove the famous graph minor theorem. In this paper, we showcase the practical potential of tangles in machine learning applications. 
Given a collection of cuts of any dataset, tangles aggregate these cuts to point in the direction of a dense structure.  
As a result, a cluster is softly characterized by a set of consistent pointers. 
This highly flexible approach can solve clustering problems in various setups, ranging from questionnaires over community detection in graphs to clustering points in metric spaces. 
The output of our proposed framework is hierarchical and induces the notion of a soft dendrogram, which can help explore the cluster structure of a dataset. 
The computational complexity of aggregating the cuts is linear in the number of data points. Thus the bottleneck of the tangle approach is to generate the cuts, for which simple and fast algorithms form a sufficient basis. In our paper we construct the algorithmic framework for clustering with tangles, prove theoretical guarantees in various settings, and provide extensive simulations and use cases. Python code is available on github. 
\end{abstract}

\section{Introduction}\label{sec:introduction}
In this paper, we present {tangles}, a new tool that can be used for clustering, to the machine learning community.
Tangles are an established concept in mathematical graph theory. 
They were initially introduced by~\citet{GMX} as a mechanism to study highly cohesive structures in graphs and have since become a standard tool in the analysis of other discrete structures \citep{Die:2018}. 
Recently, \citet{TanglesSocial} suggested applying the abstract notion of tangles beyond their original context to data clustering problems. 
The purpose of our paper is to make this suggestion come true. We translate abstract mathematical notions into practical algorithms, prove theoretical guarantees for the performance of these algorithms, and demonstrate the usefulness and flexibility of the new approach in diverse applications. \\

The mechanism of tangles is very different from all of the current clustering algorithms we know. 
To introduce this concept, we consider the example of a personality traits questionnaire, in which a group of persons answers a set of binary questions.
Based on the answers, we would like to identify groups of like-minded persons and characterize their associated mindsets, such as being ``narcissistic''. 
One would expect that persons sharing a mindset agree on many relevant statements; for example, most narcissists would agree on the statement ``I have a strong will to power''.
Accordingly, we would like to {\em softly characterize} a mindset by saying that most persons with this mindset answer similarly to most questions.
We can formalize this idea using tangles. First, we interpret every question as a bipartition of all the persons who participated in the questionnaire. This bipartition (equivalently, cut) splits the set of persons into the ones answering ``yes'' versus the ones answering ``no''. 
Let us assume that most persons who share a mindset give the same answers to most questions.
Visualized in terms of cuts, we can say that persons of the same mindset tend to lie ``on the same side'' of most of the cuts. 
We now  assign an orientation to each of the cuts to identify one side of the respective bipartition: we orient the cut to ``point towards'' the group of persons. Assume for the moment that we already know the mindset that we want to describe. The description then consists of the chosen orientations, indicating the ``typical way'' of answering all the questions. Conversely, the orientations of all the cuts identify a group of persons:  the persons that the cuts point towards. See Figure~\ref{fig:intro_2}.

\begin{figure}[tb]
    \centering
    \begin{subfigure}[t]{0.4\textwidth}
        \includegraphics[width=\textwidth]{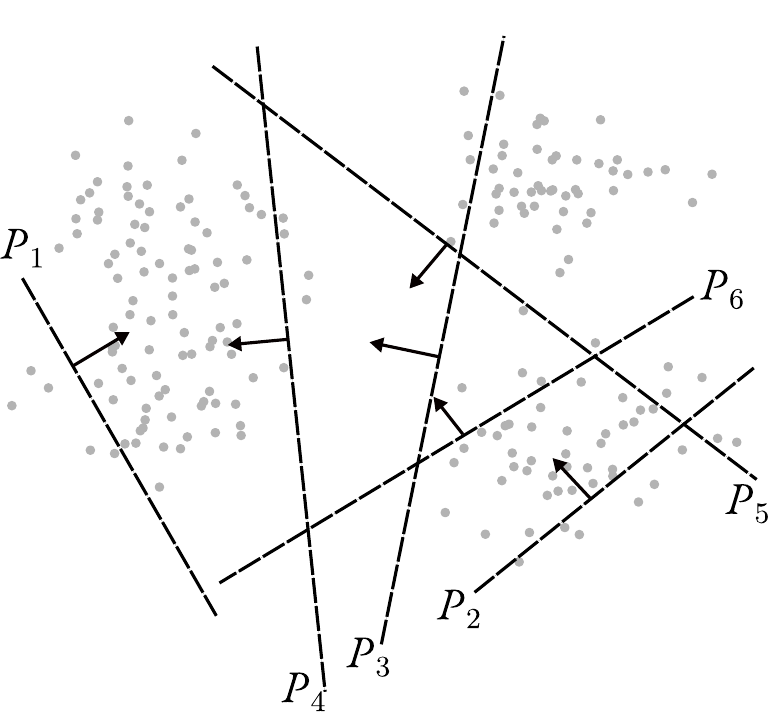}
    \end{subfigure}
    \hspace{0.1\textwidth}
    \begin{subfigure}[t]{0.4\textwidth}
        \includegraphics[width=\textwidth]{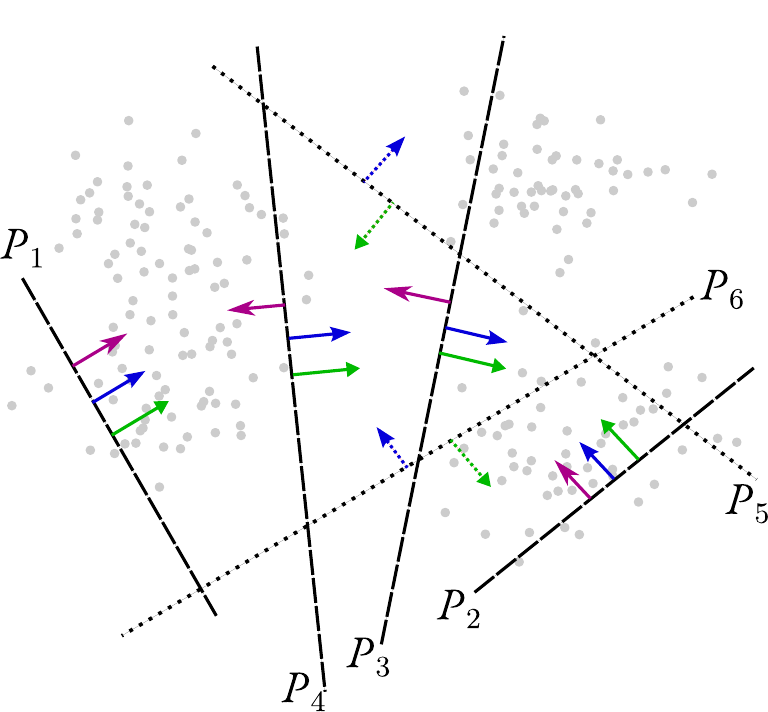}
    \end{subfigure}
    \caption{We consider a set of points and six cuts. The left image visualizes one possible tangle (consistent orientation). The right image visualizes three additional tangles, that exist on (sub)sets of the same cuts. When constructing the tangle search tree, we would first obtain two tangles on the set $\{P_1, ..., P_4\}$: the purple and the green/blue tangle. The green and the blue tangle share the orientations of the more informative cuts but differ in bipartition $P_5$ and $P_6$, indicated by dashed arrows. Lower down in the hierarchy, we get three tangles on the whole set of cuts $\{P_1, ..., P_6\}$: green, blue and the black tangle visualized in the left picture.}
    \label{fig:intro_2}
\end{figure}

More generally, the tangle framework is as follows. Given a dataset, in the first step, we construct a set of bipartitions of the data. These cuts can be constructed in a quick and dirty manner; all we need is that they provide a little information regarding the cluster structure of the points. In a second step, we then find ``consistent'' orientations of these cuts. Typically, there will be several consistent orientations. Each of them is one particular ``tangle'' of the data. In a final step, the tangles can then be converted into meaningful output, for example, a hard or soft clustering of the dataset or even a soft dendrogram (see Figure~\ref{fig:cool_heatmap}).\\

{\bf What are the benefits of this approach?}
The tangle approach is very general and highly flexible.
Instead of assigning cluster memberships to individual objects, tangles characterize a cluster indirectly by a set of pointers. 
This flexible representation mitigates the problem of dealing with ambiguous cases and naturally entails a hierarchical structure.
Tangles require as input only a collection of cuts of the dataset. When choosing these cuts, we can incorporate prior knowledge that we might have about our problem. We do not require a particular data representation. Quite the contrary: tangles can be applied to many different scenarios such as feature-based data, metric data, graph data, and questionnaire data. For exemplary use cases see Sections~\ref{sec:use_case:binary}, Section~\ref{sec:use_case:graphs} and Section~\ref{sec:use_case:metric}.
From a conceptual point of view, tangles resemble the boosting approach for classification, where one aggregates many weak classifiers -- slightly better than chance -- to obtain a strong classifier.
Tangles aggregate many ``weak'' cuts that contain a large chunk of a cluster on one side to obtain a holistic, ``strong'' view of the cluster structure of a dataset.
The computational complexity of the tangle approach is composed of two parts: constructing cuts in the pre-processing phase and orienting the cuts in the central part of the algorithm. This central part of the algorithm is only linear in the number of data points. That means that given a simple way of constructing a set of cuts in the pre-processing phase, the whole approach is fast and works for large-scale datasets.  \\

\begin{figure}[t]
    \centering
    \includegraphics[width=0.9\textwidth]{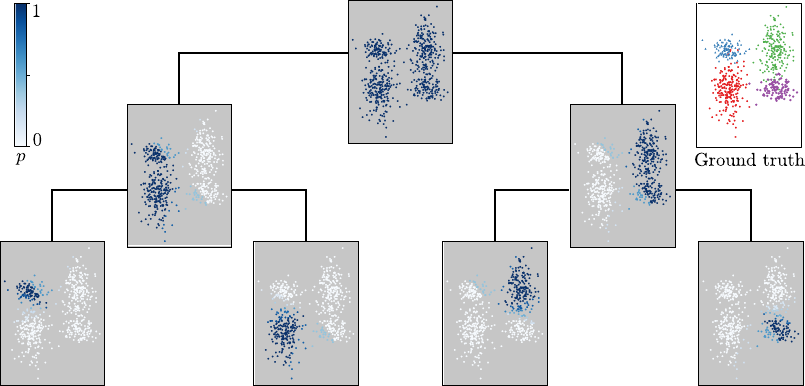}
    \caption{A soft dendrogram as possible post-processing of tangles (Appendix~\ref{sub:post_processing}).
    The estimated probability that a point belongs to the respective cluster is given by~$p$.
    }
    \label{fig:cool_heatmap}
\end{figure}

{Our contributions} are as follows:

\begin{itemize}
\item 
{\bf Algorithmic framework.}
We translate the abstract notion of tangles from the mathematical literature to a more practical version for machine learning in Section~\ref{sec:tangles}. 
We then develop a highly flexible algorithmic framework for clustering. We propose a basic version in Section~\ref{sec:algorithms} and refer to Appendix~\ref{app:alg_details} for further extensions and details. 

\item 
{\bf Simulations and experiments.}
To demonstrate the flexibility of the tangle approach, we provide case studies in three different scenarios: a questionnaire scenario in Section~\ref{sec:use_case:binary}, a graph clustering scenario in Section~\ref{sec:use_case:graphs}, and a feature-based scenario in Section~\ref{sec:use_case:metric}. 
In each of these sections, we outline different properties of the tangle approach. 
Generally, we compare tangles to other state-of-the-art algorithms in the respective domains, for example, spectral clustering in the graph clustering domain or $k$-means in the feature-based domain.

\item 
{\bf Theoretical guarantees.}
In each of the three scenarios, we prove theoretical guarantees. Given a statistical model for the questionnaire setting, we prove that tangles always discover the ground truth under specific parameter choices. We prove the same for the graph clustering scenario in a stochastic block model. Finally, we investigate theoretical guarantees on feature-based clustering for interpretable clustering.

\item 
{\bf Python package.}
We implemented the central part of the algorithm and different options for pre- and post-processing. The code and basic examples are publicly available at: \href{https://github.com/tml-tuebingen/tangles/tree/vanilla}{https://github.com/tml-tuebingen/tangles/tree/vanilla}.

\end{itemize}

The strength of tangles is not that they outperform all other algorithms; this would be pretty unrealistic. Instead, we are intrigued by how flexible and how generic the tangle approach turns out to be, while at the same time producing results that are comparable to many state-of-the-art algorithms in many domains.
All in all, we consider this paper as a proof of concept for a completely new approach to data clustering.

\section{Tangles: Notation and definitions}\label{sec:tangles}

Tangles originate in mathematical graph theory, where they are treated in much more generality than what we need in our paper (cf.\ \cite{Die:2018} for an overview, and Section~\ref{sec:related} for more discussion and pointers to literature). Through our joint effort between mathematicians and machine learners, we condensed the general tangle theory to what we believe is the essence of tangles needed for applications in machine learning. We present this condensed version below. For readers with a mathematical tangle theory background, we provide a translation dictionary of the essential terms in Appendix~\ref{app:sub:translation}. \par \medskip

Consider a set $\Objects=\{\object_1\dotsc,\object_\numobjects\}$ of arbitrary objects. 
A subset $A\subset\Objects$ induces a {\bf bipartition} or {\bf cut} of the data into the set and its complement $P=\{A, A^\complement\}$. 
In order to construct tangles, we will 
consider a set of initial cuts $\Partitions=\{\{A_1, A_1^\complement\},\dotsc, \{A_\numpartitions, A_\numpartitions^\complement\}\}$. 
We consider a single cut useful if it does not separate many similar objects. The more it cuts through dense regions, the less insight we get into the cluster structure. This intuition is being quantified in terms of a {\bf cost function} $c : \Partitions \to \RR$, indicating the ``quality'' of a cut. This cost function needs to be chosen application-dependent; see later for examples. The set of bipartitions and associated costs hold all the necessary information for tangles to discover the dataset's structure. 
Tangles operate by assigning an {\bf orientation} to all cuts. For a single cut $P=\{A, A^\complement\}$, an orientation simply ``points'' towards one of the sides. We denote the orientation pointing from $A$ to $A^\complement$ by $\orientr{P} = (A, A^\complement)$ or simply $\orientr{P} = A^\complement$. For a set of cuts $\Partitions$, we define an orientation $O_{\Partitions}$ by choosing one side for each cut, giving an orientation to every $P \in \Partitions$. We write $A \in O_\mathcal{P}$ if $\{A, A^\complement\} \in \mathcal{P}$ and $O_\mathcal{P}$ orients it towards $A$. The intuition is that orientations can characterize clusters, but not every orientation of cuts characterizes a cluster: the orientations need to be ``consistent'' in some way. For a meaningful orientation, we have to ensure that the chosen sides of all the cuts point to one single structure. This consistency is precisely the purpose of tangles and is captured in the following definition.

\begin{definition}[{\bf Consistency and Tangles}]
Let $\Partitions$ be a set of bipartitions on a set $V$. 
For a fixed parameter $\agree \in \Nat$, an orientation $O_\Partitions$ of $\Partitions$ is {\it consistent} if all sets of three of oriented cuts have at least $\agree$ objects in common:
\begin{align}\label{eq:consistency_cond}
  \quad\forall A,B,C\in O_\Partitions: \;\;\;  \abs{A\cap B\cap C}\geq a\,.
\end{align}
We call Eq. \eqref{eq:consistency_cond} the\emph{ consistency condition} and $\agree$ the \emph{agreement parameter}.
A consistent orientation of $\Partitions$ is called a \emph{$\Partitions$-tangle}. If clear from the context, we drop the dependency on $\Partitions$ and say \emph{tangle}.
\end{definition}
At this point, the reader might wonder why we consider an intersection of exactly three cuts in Eq. \eqref{eq:consistency_cond}. The short answer is that there are good mathematical reasons for this choice. 
One can prove that considering the intersection of at least three cuts guarantees that there exist at most as many distinct tangles as there are data points --- which makes perfect sense in the application of data clustering. If one uses the intersection of only two cuts in Eq. \eqref{eq:consistency_cond}, then there might be up to $2^{2^|V|}$ many tangles, which is undesirable both from a conceptual as well as a computational point of view. 
On the other hand, it turns out that choosing sets of more than three in 
Eq.  \eqref{eq:consistency_cond} does not produce more powerful mathematical results, but considerably increases the computational complexity. We explain more details about the question of three in Appendix~\ref{app:sub:tangles_triples}.
\par \medskip
In what follows, we will often sort the cuts according to their cost $\cost(P)$, and start orienting the (more useful) low-cost cuts before moving on to the (less useful) high-cost cuts. To this end, we sometimes introduce a parameter $\CostUpper\in\mathbb{R}$ that specifies the set of cuts we are interested in, namely the subset $\Partitions_\CostUpper=\{P\in\Partitions~|~\cost(P)\leq\CostUpper\}\subseteq\Partitions$. We say a tangle on the set $\Partitions_\CostUpper$ is of order $\CostUpper$. \\
In the following, we build intuition on the above definitions using the running example of the questionnaire that we already hinted in the introduction. 

\par \bigskip 
{\bf Example (Questionnaire)}
A set $\Objects$ of $\numobjects$ persons takes a questionnaire of $\numpartitions$ binary questions. The goal is to discover groups of persons who answer most questions similarly. If such a group exists, we say they share the same mindset.
We interpret each question as a cut in $\Objects$, separating persons based on their answer to this question. In this way, the questionnaire defines a set $\Partitions$ of $\numpartitions$ cuts.
Generally, the cuts in $\Partitions$ can split $\Objects$ at different levels of granularity, depending on how general a question is. Some of the cuts might not be informative; for example, a person's hair color is mostly independent of other personality traits. 
To judge on an abstract level how useful a cut might be, we introduce a cost function $\cost\colon\Partitions\to\mathbb{R}$.
In our example, we judge the similarity of two persons, or rather their answered questionnaires $v,w \in  \{0,1\}^m$, by counting how many questions they have answered the same way: $sim(v, w) = \sum_{i=1}^m \mathds{1}\{v_i = w_i\}$.
We then define the cost of a cut as the mean over the similarities over all possible pairs of separated persons: $c(A) = \frac{1}{\abs{A}\cdot(\numobjects-\abs{A})} \sum_{v\in A, w\in A^\complement}\Sim(v, w)$.

We now process the cuts in increasing order of costs: most useful cuts come first, and less useful cuts come later.
This approach is equivalent to repeatedly setting the threshold  $\CostUpper$ and restricting our attention to the set $\Partitions_\CostUpper$. Increasing the order $\CostUpper$ enables us to discover a hierarchy of substructures. For a small order, we can only distinguish between coarse structures (such as extroverts and introverts), while for a larger order, we include cuts that further separate them into more fine-grained structures. For any given $\CostUpper$, we need to find an orientation of the cuts in $\Partitions_\CostUpper$ that ``points towards a cluster'', as formalized in Definition~\ref{eq:consistency_cond}. 
Concretely, we need to set the agreement parameter $a$ and invoke an algorithm that discovers consistent orientations of all orders of cuts.
Once we have found consistent orientations, we need to post-process them to the final output. This output could consist of a description of all mindsets in terms of the typical way of answering questions; it could be a hard clustering of the persons or a soft hierarchical clustering. We will introduce the algorithm and all these notions in the next section. \par\medskip

\section{Basic algorithms for tangle clustering} \label{sec:algorithms}
In this section, we present the basic algorithmic framework for clustering with tangles.
On a high level, this requires the following three independent steps: finding the initial set of cuts (Section~\ref{sub:alg_cuts}), orienting cuts to identify tangles (Section~\ref{sub:alg_tangles}), and post-processing tangles to clusterings (Section~\ref{sub:post_processing}). To allow for a deeper understanding of the framework we give intuition on parameters and how they interact in Sections~\ref{sub:key_parameters} and \ref{sub:design_choices}. For more algorithmic details we refer to  Appendix~\ref{app:alg_details}. 
In Sections \ref{sec:use_case:binary} --  \ref{sec:use_case:metric} we will then spell out all details in three different application settings. Python code of the basic version as well as examples can be found on github~\footnote{https://github.com/tml-tuebingen/tangles/tree/V1.0}.\\

\subsection{Constructing the initial set of cuts}
\label{sub:alg_cuts}

The first step for finding tangles is to construct a set of initial cuts $\Partitions$. This construction is very much problem-dependent, and in our pipeline for finding tangles, it has the flavor of a pre-processing step. We can distinguish two principal scenarios that occur in different types of applications:

{\bf Predefined cuts.}  
In our running example of a questionnaire, each question induces a natural cut of the data space: the persons who answered ``yes'' versus those who answered ``no''. The set of the cuts induced by all questions is a natural candidate for the desired set ~$\Partitions$. In this case, we can interpret tangles as a typical way of answering the questionnaire.
More generally, if the objects in $V$ are described by discrete, continuous, or ordinal features, we can consider a collection of half-spaces of the form $\{x_i\leq k\}$. The resulting cuts (and consequently, the tangles) have a simple form and result in interpretable output. See Section \ref{sub:binary_motivation} for an example of interpretable clustering. 

{\bf Cuts by simple pre-processing.} 
If no natural choice for cuts exists or if they are not flexible enough, it is necessary to invoke another algorithm that produces the initial cuts in a pre-processing phase.
In this case, we can view tangles as a boosting mechanism that allows us to use a fast, greedy heuristic for producing decent cuts, which then get aggregated to a tangle and can be processed to clustering. 
One example of such a setting is graph clustering. Here we could construct initial cuts by the Kernighan-Lin (KL) algorithm \citep{kernighan_graph-partitioning1970} and then use tangles to infer the cluster structure on the graph. The complexity of this approach is $\mathcal{O}(rn^2 log~n)$, for $n$ nodes and $r$ iterations of the KL-Algorithm. Another example is clustering in Euclidean spaces, where we can quickly construct initial partitions with the help of random projections in a one-dimensional subspace. The complexity of this approach is $\mathcal{O}(n^2).$
\par\medskip

Below, we will study cut-finding strategies in three specific  settings: binary questionnaires (Section~\ref{sec:use_case:binary}), graphs (Section~\ref{sec:use_case:graphs}) and metric/feature data (Section~\ref{sec:use_case:metric}). In Section~\ref{sub:sbm:synthetic} and \ref{sub:metric:synthetic} we additionally review the influence and trade-off between a large versus a small set of initial cuts, and in Appendix~\ref{sub:sbmrandomcuts} we discuss why purely random initial cuts are not a good idea.

\subsection{Setting the key parameters} 
\label{sub:key_parameters}
Once we have fixed a set of partitions, we need to find consistent orientations of these partitions, that is, the tangles. This first requires some parameter choices: we need to define a cost function $\cost$ of the cuts (to be able to order the cuts according to their usefulness) and choose the agreement parameter $\agree$ (which is related to the size and the number of clusters we expect to find). 
A natural choice for the cost function is the sum of similarities between separated objects $\cost(\{A, A^\complement\})=\sum_{v\in A, w\in A^\complement}\Sim(v, w)$. We often also normalize this cost function by dividing it by the number of pairs $\abs{A}(\numobjects-\abs{A})$. We discuss the influence of normalizing in Appendix~\ref{app:subsec:cost}.
The agreement parameter $\agree$ roughly fixes the smallest size of the clusters that tangles discover. See Section~\ref{sub:design_choices} for a discussion of all parameters.

\subsection{Orienting cuts to identify tangles} \label{sub:alg_tangles}
Once we have the data and fixed all parameters, we face the following algorithmic challenge:\\\vspace*{0.01cm}\\
\fbox{\begin{minipage}{0.985\textwidth}
\vspace*{0.01cm}
Given a set of initial cuts $\Partitions$ of $\Objects$ and a cost function, for every $\CostUpper$ identify all orientations of $\Partitions_\CostUpper$ that satisfy the consistency condition Eq.~\eqref{eq:consistency_cond}.
\vspace*{0.01cm}
\end{minipage}}\\
\vspace*{0.01cm}\\
The naive approach of testing every possible orientation for consistency is infeasible. Instead, we are now going to construct a tree-based search algorithm that achieves this task more efficiently.   The algorithm proceeds by looking at one cut after the other, starting with the lowest cost cuts. It maintains a tree, the \emph{tangle search tree}, of the possible orientations of all the cuts considered. The critical observation is that processing a tree branch can be stopped once a cut cannot be oriented consistently any more.

The algorithm's output is a labeled binary tree as depicted in Figure~\ref{fig:dendogram_intuition_b}. Each node in the tree corresponds to one specific orientation of one particular cut. 
We construct the tree in such a way that each of its nodes corresponds to exactly one tangle. Precisely, for a node $t$ on level~$i$ the node labels on the path from the root to $t$ form a consistent orientation of $\{P_1, \dots, P_i\}$, that is, a $\Partitions_\CostUpper$-tangle for $\CostUpper = {\cost(P_i)}$.\\

The tangle search tree algorithm proceeds as follows. 
We first sort all cuts in $\Partitions$ by increasing cost and list them as $P_1 = \{A_1,A_1^\complement\}, \dotsc, P_\numpartitions= \{A_\numpartitions,A_\numpartitions^\complement\}$.
We now perform something like a breadth-first search on possible orientations. 
We initialize the tree with an unlabeled root on level 0.
We now iterate over the $P_i$. In the $i$-th step, for both sides $A \in \{A_i, A_i^\complement\}$ and every node $t$ on level $i-1$, we check whether adding the orientation $A$ to the tangle identified with node $t$ is consistent.
If it is, we add a child node to $t$, labeled with $A$ (see Algorithm~\ref{alg:tangles_main} and Appendix~\ref{app:sub:algorithm}).
In the resulting tree, each node represents a tangle, and each leaf represents a maximal tangle, one that cannot be extended to a tangle of a larger set $\Partitions_\CostUpper$.

\begin{algorithm}[tb]
    \KwData{Set of cuts $\Partitions = \{P_i = \{A_i, A_i^\complement\}\}_i$ with cost function $c : \Partitions \to \mathbb{R}$, agreement parameter $\agree$}
    \KwResult{Tangle Search Tree $T$}
    
    $T$ $\leftarrow$ \text{empty tree with root}\;
    sort $P_i$ increasing according to $\CostUpper_i = c(P_i)$\;
    \For{$P_i \in \Partitions$}{
        \For{tangle $\tangle \in $ nodes of layer $i$ of T}{
            \If{consistent(~$\tangle \cup \{A_i\}$)}{
                add $A_i$ as right child of $\tangle$ to $T$ \;
            }
            \If{consistent(~$\tangle \cup \{A_i^\complement\}$)}{
                add $A_i^\complement$ as left child of $\tangle$ to $T$ \;
            }
        }
    }
    \Return{T}
    \caption{tangle search tree}
    \label{alg:tangles_main}
\end{algorithm}

The algorithm has complexity $O(\numobjects\ell \theight^3)$ where~$\numobjects$ is the number of objects in our dataset, ~$\theight$ is the height of the tangle search tree, and ~$\ell$ is the number of its leaf nodes. The number of leaf nodes is bound by the number of nodes or the height of the tree: $\ell \le n$ and $\ell \le 2^h$; usually we observe $\ell \ll n$. In practice, we find that the worst-case complexity is rarely attained (Figure~\ref{fig:time_vs_cuts}).
The height $\theight$ is upper bounded by the number of cuts $\numpartitions$. 
The tangles at the leaf nodes correspond to the smallest clusters.
The agreement parameter $\agree$ indirectly controls both $\ell$ and $\theight$. Increasing $\agree$ makes the Eq.~\eqref{eq:consistency_cond} more restrictive and thus cuts the tree quicker.

\subsection{Post-processing the tangles into soft or hard clusterings}
\label{sub:post_processing}

The output of Algorithm~\ref{alg:tangles_main} is a tangle search tree, which reveals the cluster structure of a dataset from the cut point of view.
Strictly speaking, it is inappropriate to think of tangles as subsets; instead, they "point towards a region" without making statements about individual objects. 
Nevertheless, traditional clustering objectives are concerned with assigning individual objects to clusters. In order to achieve this with tangles, we post-process the tangle search tree in different ways resulting in hierarchical, soft, and hard clustering. 
To this end, we propose a procedure that builds on the hierarchical nature of the tangle search tree to convert it into a ``soft dendrogram''.

\begin{figure}[tbh]
    \centering
    \begin{subfigure}[c]{.35\textwidth}
        \centering
        \includegraphics[width=\textwidth]{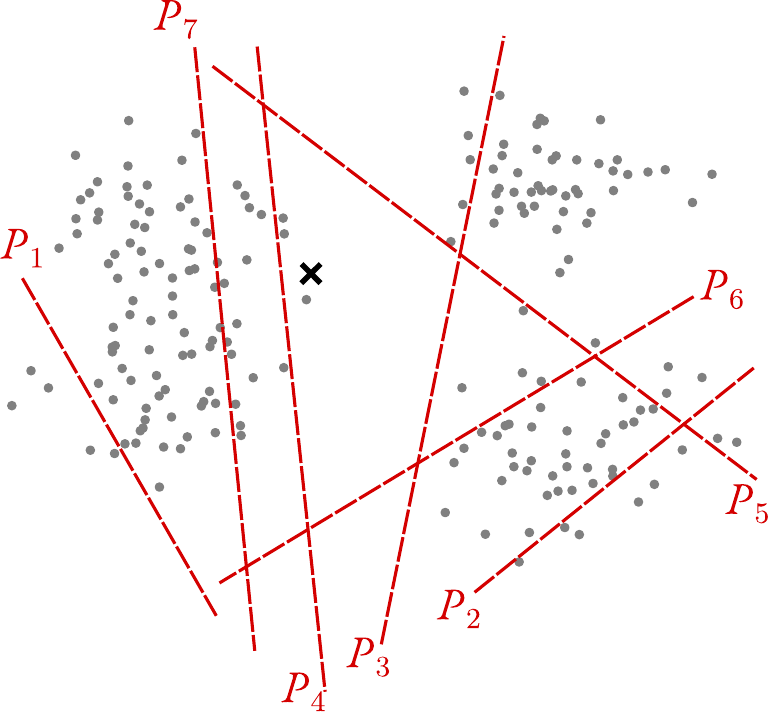}
        \caption{Dataset $\Objects$ and cuts $\Partitions=\{P_1,\dotsc,P_7\}$.}
        \label{fig:dendogram_intuition_a}
    \end{subfigure}\hfill
    \begin{subfigure}[c]{.35\textwidth}
        \centering
        \includegraphics[width=0.7\textwidth]{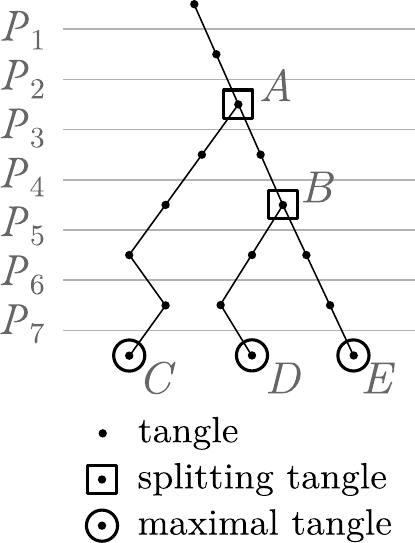}
        \caption{Tangle search tree $\tree(\Partitions)$.}
        \label{fig:dendogram_intuition_b}
    \end{subfigure}\hfill
    \begin{subfigure}[c]{.2\textwidth}
        \centering
        \includegraphics[width=\textwidth]{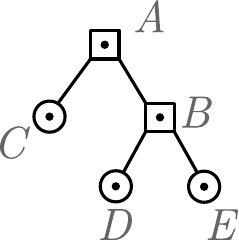}
        \caption{Condensed tangle search tree $\tree^\ast(\Partitions)$.}
        \label{fig:dendogram_intuition_c}
    \end{subfigure}
    \caption{Example of tangles for a dataset in $\mathbb{R}^2$.}
    \label{fig:dendogram_intuition}
\end{figure} 

For a given set of partitions $\Partitions=\{P_1,\dotsc,P_\numpartitions\}$ sorted increasingly by their costs $\cost(P_i)$, let $\tree=\tree(\Partitions)$ be the corresponding tangle search tree obtained from Algorithm~\ref{alg:tangles_main}, see Figure~\ref{fig:dendogram_intuition_a}~and~\ref{fig:dendogram_intuition_b} for an example.
The tangle search tree is constructed hierarchically on the cuts, which serves as a proxy for what we are eventually interested in, namely, a hierarchy of the cluster structure of the objects. 
As a further simplification, we will explain below how to transform the tangle search tree into a simplified, condensed tree. Like a dendrogram, the condensed tree $\tree^\ast$ indicates how a dataset organizes into substructures. We call every internal node a splitting tangle as its two subtrees correspond to tangles that point to different regions and thus split the data. However, for a single object, a splitting tangle does not induce a binary decision as to whether the object belongs to the left or right branch.  
Instead, we will assign a probability for belonging to a specific tangle for every node and tangle.

{\bf Contracting the tree.} We first condense the tree to the splitting tangles and ignore bipartitions that do not give information about the cluster structure, for example, $P_1$. For every splitting tangle, we identify the cuts responsible for the split and thus 'characterizing' for separating the two dense structures.
The intuition becomes clear from Figure~\ref{fig:dendogram_intuition_a}:
For the first splitting tangle $\tangle$ at the node $A$, $\tangle_{|A}$, we see that the set of cuts $\{P_3, P_4, P_7\}$ gives information about the separation between the left and the right structure $\mathcal{P}(\tangle_{|A}) = \{P_3, P_4, P_7\}$. For the splitting tangle $\tangle_{|B}$, we get $\mathcal{P}(\tangle_{|B}) = \{P_5, P_6\}$, separating the upper from the lower structure on the right side.

We derive this information from the tangle search tree as follows. For a cut $P$ to be characterizing for a splitting tangle $\tangle$, we require every tangle corresponding to a leaf in one subtree to orient $P$ one way and every tangle corresponding to a leaf in the other subtree to orient $P$ the other way. Considering the splitting tangle at node A, $P_7$ is characterizing: it is oriented to the left in all paths in the left subtree and to the right in all paths in the right subtree. The same holds for cuts $P_3$ and $P_4$. In contrast, the cut $P_6$ is not characterizing as it is oriented both; to the left and right side within the right subtree. So $P_6 \notin \mathcal{\tangle_{[B]}}$.
In this sense, the cuts in $\Partitions(\tangle)$ are the ones that help in distinguishing between the subtrees of $\tangle$. More formally, let $P(\tangle)$ be the orientation of $P$ in a tangle $\tangle$ and let $T_\tangle^{(left)}$ be the left subtree and $T_\tangle^{(right)}$ be the right subtree of the node at a tangle $\tangle$. Then we define the set of characterizing cuts as
\begin{align*}\label{eq:decider_cuts}
    \Partitions(\tangle) \coloneqq \{P\in\Partitions~|~\forall\text{ leaves } \tangle^l\in T_\tangle^{(left)},\text{ leaves }\tangle^{r}\in T_\tangle^{(right)}~\colon~P(\tangle^l)\neq P(\tangle^r)\}\,.
\end{align*}
Based on this information, we condense the tree as shown in Figure~\ref{fig:dendogram_intuition_c} and track the set of characterizing cuts for each of the splitting tangles.

{\bf Computing the soft clustering.} We now use these cuts to determine how likely an object $\object\in\Objects$ belongs to the right subtree of $\tangle$ (the left subtree is then implicit, so we focus on the right side). We chose the set such that all cuts in $\Partitions(\tangle)$ serve the same purpose of subdividing $\tangle$ into two substructures. For every point $v$ and every splitting tangle $\tau$, we compute the fraction of characterizing cuts oriented towards the point $v$ by the overall number of characterizing cuts $|\mathcal{P}(\tangle)|$. 
As not all cuts are equally fundamental as measured by their costs $\cost(P)$, we include a weighting of the cuts with a non-increasing function $h:\mathbb{R}\to\mathbb{R}, P \mapsto e^{-c(P)}$.
$\{P\in\Partitions(\tangle)~|~v\in P^{(right)}\}$ is the set of characterizing cuts that are oriented towards $v$ in the right side of the tree. We assign a probability $p_\tangle^{(right)}$ of belonging to the right
\begin{wrapfigure}[]{h}{0.5\textwidth}
    \centering
    \vspace*{-4pt}
    \includegraphics[width=0.43\textwidth]{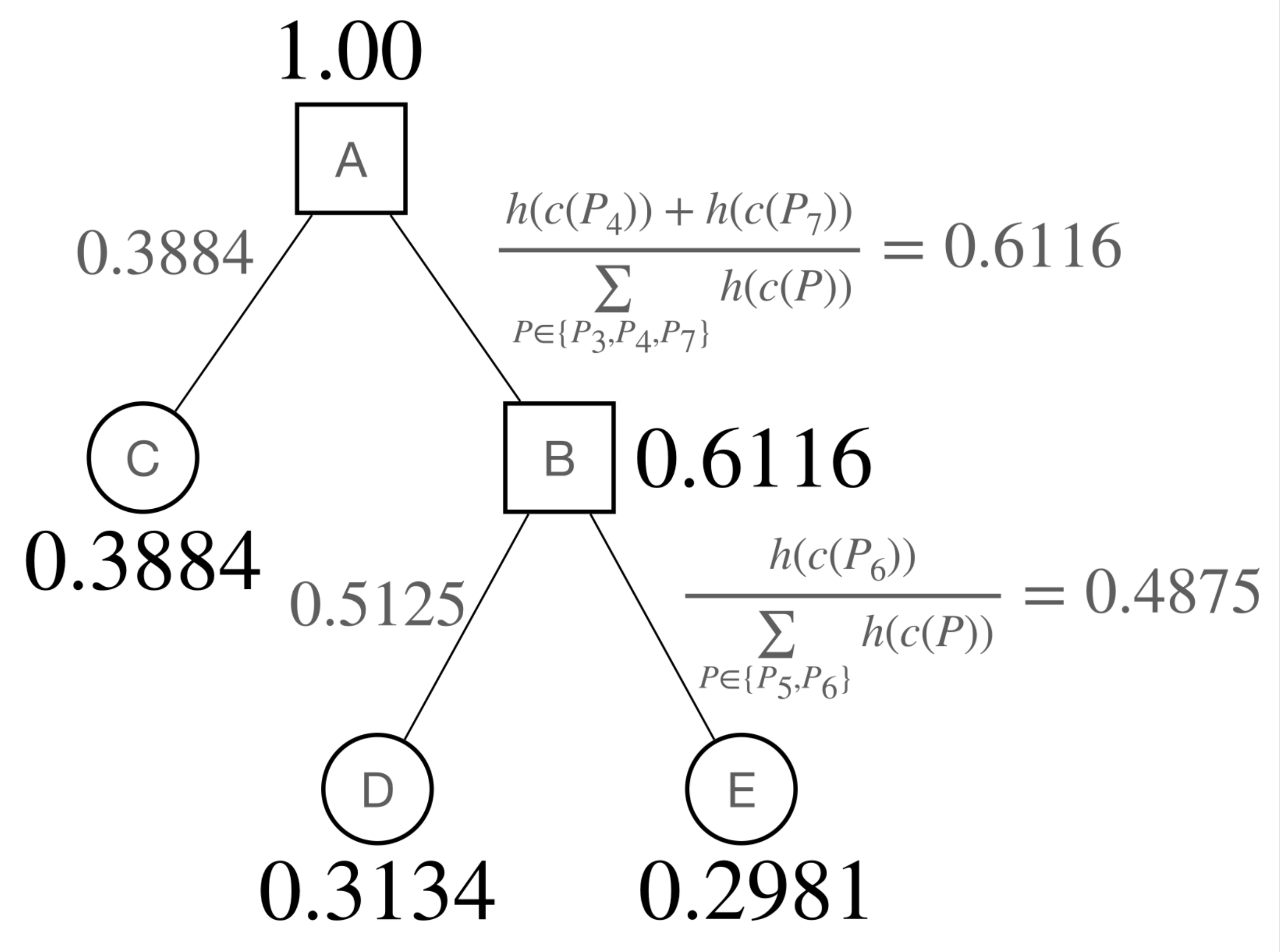}
    \vspace*{-2pt}
    \caption{Tree $\tree^\ast$ with node and edge attributes for a fixed object $v_x\in V$.}
    \vspace*{-4pt}
    \label{fig:soft_dendogram_object}
\end{wrapfigure} 
subtree at a tangle $\tangle$ to every node $v$: 

\begin{equation}
    \label{eq:object_edge_prob}
    p^\text{(right)}_\tangle(v)=\frac{\sum_{P\in\{P\in\Partitions(\tangle)~|~v\in P^{(\text{right})}\}}h(\cost(P))}{\sum_{P\in\Partitions(\tangle)}h(\cost(P))}\,.
\end{equation}

Based on these probabilities, we define the probability $p_t(\object)$ that $\object$ arrives at node $t$ as the product of the edge probabilities along the unique path from the root to $\tangle$. 
For a single point $v_x$ marked with an $x$ in Figure~\ref{fig:dendogram_intuition_a} and the given characterizing cuts we get the tree $T^*$ with the probabilities as shown in Figure~\ref{fig:soft_dendogram_object}.\\
\textbf{Computing the hard clustering.} If desired, we can now assign points to a hard clustering: We assign each point to the tangle with the highest probability based on the soft clustering. For example, the point whose tree is shown in Figure~\ref{fig:soft_dendogram_object} would be assigned to the tangle C represented by the leftmost leaf in the tree. 
The result is a hard clustering.\par\smallskip
In our experiments, we sometimes apply heuristics, such as pruning ``bad'' branches of the tree, to avoid spurious tangles. We discuss algorithm improvements in Appendix~\ref{app:subsec:gap}.

\subsection{The ingredients and how they influence the output}
\label{sub:design_choices}

{\bf Initial cuts.} 
The utility of tangles depends strongly on the initial set of cuts $\Partitions$ because the tangles' contribution is to aggregate information that is present in $\Partitions$. If there is no cut in $\Partitions$ that separates meaningful substructures, neither will the tangles. The better the cuts, the better the clustering we can derive from tangles. However, choosing the set of cuts is not as critical as one might think for two reasons: (1)  Useless cuts do not interfere with tangles: 
very unbalanced cuts, such as $P=\{\{\object\}, \Objects\setminus\{\object\}\}$, always get oriented towards their larger side and have little impact on the consistency condition;  %
meaningless cuts, such as random cuts, have high costs and are considered last, quickly resulting in inconsistent orientations only.
(2) It is not necessary that $\Partitions$ contains high-quality cuts --- otherwise, the whole approach would be somewhat pointless. It is enough to have some ``reasonable'' initial cuts. We will demonstrate this in experiments and partly in theory below and in the appendix.\\

{\bf The parameter $\CostUpper$.}
The parameter $\CostUpper$ controls the granularity of the tangles. 
Restricting our attention to a subset $\Partitions_\CostUpper$ with a small parameter $\CostUpper$ will identify large subgroups in the data.
As $\CostUpper$ increases, the corresponding $\Partitions_\CostUpper$-tangles can identify smaller, less separate clusters, but at the same time, orientations towards larger, more separated clusters may become inconsistent.
Eventually, when $\CostUpper$ gets too large, we might not find any consistent orientation anymore.
We typically do not set $\CostUpper$ to a fixed value but generate a whole hierarchy of clusterings for increasing values of $\CostUpper$, as described in Section~\ref{sub:alg_tangles}.\\

{\bf Agreement parameter $\agree$.}
The agreement parameter~$\agree$ controls the minimal degree to which the sides of an orientation have to agree. 
When chosen too small, the consistency condition induced by $\agree$ may be too weak so that tangles identify substructures that we would not consider cohesive. 
On the other hand, we should not choose $\agree$ larger than the smallest cluster we want to discover. 
Indeed, in practice, $\agree$ should be slightly smaller than the smallest cluster to allow for noise. 
The more the cuts in $\Partitions$ respect the cluster structure and especially the richer the set $\Partitions$ is, the more we can reduce $\agree$ without erroneously identifying incohesive structures as tangles.\\

\section{Use Case: Binary Questionnaire}
\label{sec:use_case:binary}
The most intuitive application for tangles is data coming from a binary questionnaire. In the following, we will give a better intuition about the different aspects of tangles in this practical setting using a simple real-world dataset.

\subsection{Case study}
\label{sub:binary_motivation}

As a simple instance, we chose the Narcissistic Personality Inventory questionnaire \citep{npi}, sometimes abbreviated npi in the following. Raskin and Hall developed the test in 1979, and it since then has become one of the most widely utilized personality measures for non-clinical levels of the trait narcissism. The dataset is accessible via \url{https://openpsychometrics.org/_rawdata/} and contains 40 binary questions answered by 11243 participants. 
Each question consists of a pair of statements, for example, ``I am not sure if I would make a good leader'' vs. ``I see myself as a good leader''. See Appendix~\ref{app:npi_questions}  for the full list of questions. 
Every participant is asked to choose the option that they most identify with. 
If a participant identifies with both equally, they should choose which statement is more important in their opinion.
The developers handcrafted an evaluation score for the dataset: 
For every pair of statements, one statement gets assigned a score of $0$, and the other one a score of $1$. 
Each participant's final score $s_{npi}$ is defined as the sum of the scores of the answers, resulting in a number between 0 and 40. 
The higher the score $s_{npi}$, the more narcissistic a person is assumed to be.
Figure \ref{fig:npi:frequencies} visualizes the frequencies of the participants over the score $s_{npi}$. We consider $s_{npi}$ as the baseline in the following.

For our experiments, we use each question as a natural bipartition of the persons $V$ into two sets $\{A, A^\complement\}$ where $A \in V$ is the set of persons choosing the first statement.
This approach gives us one bipartition for each question, resulting in 40 cuts. To measure the similarity of two participants, we use the Hamming similarity between to answered questionnaires $u, v \in \{0, 1\}^m$
\begin{align}
    s(u, v) &= \sum_{i=1}^m\mathds{1}\{u_i = v_i\}.
\end{align}
To assign a cost to a  bipartition $\{A,A^c\}$ we then average this similarity over all pairs of persons of complementary sets: 
\begin{equation} \label{eq:average_hamming_sim}
    c(\{A,A^\complement\}) = \frac{1}{|A| \cdot |A^\complement|} \sum_{u\in A,v\in A^\complement} s(u,v)
\end{equation}

\begin{figure}[tb]
    \centering
    \begin{minipage}[t]{0.4\textwidth}
        \centering
        \includegraphics{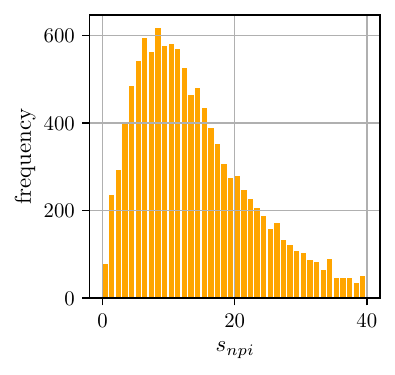}
        \caption{Frequencies of the hand-designed score $s_{npi}$ in the dataset.}
        \label{fig:npi:frequencies}
    \end{minipage}\hspace*{0.05\textwidth}
\end{figure}

From Figure~\ref{fig:npi:frequencies}, it is evident that this data does not reveal a clear cluster structure when we only consider the score $s_{npi}$.

Now we study the dataset from the tangle point of view. We naively apply tangles to the whole dataset without pre-processing: we use the data of all 11243 participants, consider all 40 questions as bipartitions and choose a small $\agree$ of $1500$. 
We use the algorithm described in Section~\ref{sub:alg_tangles} to generate the tangle search tree. To avoid clustering on noise, we prune paths of length one, as described in Section~\ref{sub:post_processing}.

The tangle algorithm returns exactly one tangle $\tau$ in this setting. This outcome is what we would expect from Figure \ref{fig:npi:frequencies}, which already hints that the dataset does not contain a coarse cluster structure. 
The tangle $\tau$ orients 39 of the 40 questions towards one dense structure, and only one question does not get assigned an orientation by $\tau$ (question \#1).
The orientations specified in $\tau$ represent the ``stereotypical way'' by which persons of the corresponding mindset answer all the questions.
Recap that this does not necessarily mean that a person in the dataset answered all the questions precisely this way.
When we compare these orientations to the hand-crafted orientations by the inventors of the study, we find that  $\tau$ discovers the ``correct'' assignments to all 39 questions!  
This outcome is remarkable: while the original study hand-designed the orientation of the questions (that is, which statement is 1 and which is 0), our algorithm discovers these orientations on its own. The only difference is that $\tau$ inverts all orientations: it points toward the larger group of people, which is the group of non-narcissistic persons, while in the original study, the authors oriented the question to point toward the minority group, the narcissistic people.
So our first finding is that tangles reveal the same information as the authors hand-crafted into the data, but in a completely unsupervised manner. 

We can now try to improve these results. Which questions are most important, and are there questions that we do not need to consider?
We run a second experiment to demonstrate how tangles distinguish between different clusters.
Based on the discovered tangle $\tau$, we assign a score $s_{\tau}$ to each of the participants:
For every participant $u \in V$, we compute the Hamming distance of her answers $q_i$ to the stereotype answers $\tau_{i}$ given by tangle $\tau$:
$s_{\tau}(u) = \sum_{i=1}^m\mathds{1}\{q_i \neq \tau_{i}\}.$ 
This score measures how much a participant's answers deviate from the typical non-narcissistic person. 
$s_\tau$ takes values between $0$ and $39$, and the higher the score, the more narcissistic we believe a person to be.
As expected by the fact that the tangle orientation essentially coincides with the hand-crafted orientation, the correlation coefficient between $s_{npi}$ and $s_\tau$ is very high, 0.996.
We use our new score $s_{\tau}$ to sample a subset of participants that is balanced in terms of the score $s_\tau$: we randomly sample 18 participants that have score  $s_{\tau}=0$, another 18 participants that have score $s_{\tau}=1$, and so forth. This results in a subset of $18 \cdot 40 = 720$ participants.

We now apply tangles to this new dataset. As before, we use all the 40 bipartitions given by the questions and the same cost function as before.
We set our agreement parameter $\agree$ to 150 and prune paths of length one.
On this balanced dataset, the tangle algorithm returns two tangles, indicating that within this balanced subset of the data, there is a cluster structure with two dense structures.
\begin{figure}[tbh]
    \centering
    \begin{subfigure}[t]{0.33\textwidth}
        \centering
        \includegraphics{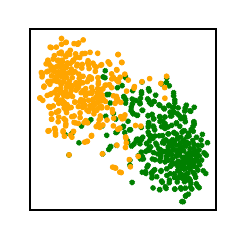}
        \caption*{hard clustering}
    \end{subfigure}~
    \begin{subfigure}[t]{0.33\textwidth}
        \centering
        \includegraphics{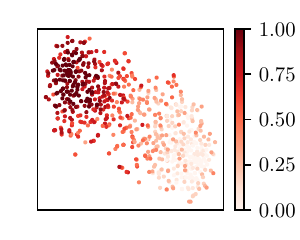}
        \caption*{soft clustering}
    \end{subfigure}~
    \begin{subfigure}[t]{0.33\textwidth}
        \centering
        \includegraphics{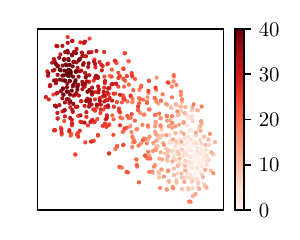}
        \caption*{$s_{npi}$}
    \end{subfigure}
    \caption{Hard and soft clustering output using the post-processings described in Section~\ref{sub:post_processing}. We used tSNE to embed the questionnaire participants into two dimensions. The left figure shows the output of the hard clustering post-processing. The middle image shows the soft clustering for tangle A, and the right figure visualizes the hand-crafted scores: $s_{npi}$.
    }
    \label{fig:npi:balanced_post-processing}
\end{figure}
If we wish to do so, we could now use our hard clustering output (Section~\ref{sub:post_processing}) and assign each participant to one cluster, labeling them as either narcissistic or not, cf. Figure~\ref{fig:npi:balanced_post-processing} left.  However, this approach is very restrictive, and given our knowledge about the data, namely that there are scores on a large range and not binary classes, it seems inappropriate. Instead, we are interested in a soft output assigning a probability to each participant belonging to each cluster. We calculate these probabilities by our post-processing described in Section~\ref{sub:post_processing}. The result can be seen in Figure~\ref{fig:npi:balanced_post-processing}. We plotted the sampled subset using tSNE \citep{maaten_visualizing_2008} to embed the points into two dimensions. The two clusters in Figure~\ref{fig:npi:balanced_post-processing} (left) correspond to one tangle each, and we assign points by their probability of belonging to one or the other tangle. Figure~\ref{fig:npi:balanced_post-processing} (middle) visualizes our soft clustering output that indicates the probabilities of belonging to one tangle. In this case, we plot the probabilities of belonging to $\tau_A$, which points toward the upper left structure. 
In the right image of Figure~\ref{fig:npi:balanced_post-processing} we visualize the score $s_{npi}$ as a reference.
Figure \ref{fig:npi:correlation_balanced} shows the correlation between the hand-crafted score $s_{npi}$ and the probability of being narcissistic based on the answers returned by the algorithm. The correlation coefficient is again very high, with a value of $0.944$. 

\begin{figure}[tbh]
    \centering
    \begin{minipage}[t]{0.45\textwidth}
        \centering
        \includegraphics{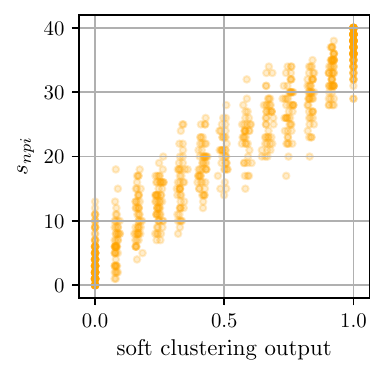}
        \caption{Correlation of the true score function and the deviation from the found tangle.}
        \label{fig:npi:correlation_balanced}
    \end{minipage}\hfill
    \begin{minipage}[t]{0.45\textwidth}
        \centering
        \includegraphics{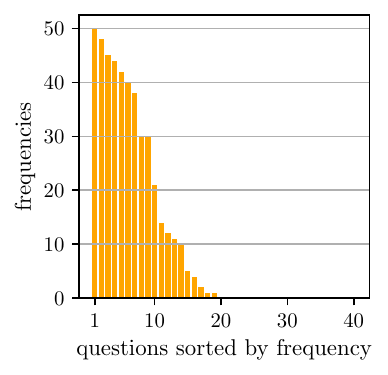}
        \caption{Frequencies of the important questions within 50 runs on random balanced sub-samples of the data.}
        \label{fig:npi:frequencies_questions}
    \end{minipage}
\end{figure}
Looking at the tangle search tree, we can now calculate the characteristic cuts that help distinguish between the two clusters.
The splitting tangle has eight characteristic cuts, meaning eight questions are essential to separate the two dense structures.
Note that we can get variation in these results due to balanced sub-sampling, and the above shows one possible example.
To support our claims, we ran the same procedure 50 times. Each time we sampled a random balanced subset of the data. 
The algorithm identifies between minimal five and maximal 11 important questions. We take their union, which results in an overall of 18 questions that seem to be important for splitting the data. This shows that the important questions overlap and underpins the claim that there are questions of little interest for the task. Figure~\ref{fig:npi:frequencies_questions} shows the frequencies of questions and Table~\ref{tab:characterizing_questions} lists the 5 most important statements which where among the characterizing ones in at least 42 of the 50 runs. We list all statements of the dataset in Appendix~\ref{app:npi_questions}.

\begin{table}[ht]
    \begin{tabular}{p{0.1\textwidth}||p{0.1\textwidth}||p{0.34\textwidth}||p{0.34\textwidth}}
        \# & question number & statement A & statement B \\\hline 
        \hline
        50 & 12 &  I like to have authority over other people. &  I don't mind following orders.\\
        \hline
        48 & 11 &  I am assertive. &  I wish I were more assertive.\\
        \hline
        45 & 5 &  The thought of ruling the world frightens the hell out of me. &  If I ruled the world it would be a better place.\\
        \hline
        44 & 31 &  I can live my life in any way I want to. &  People can't always live their lives in terms of what they want.\\
        \hline
        42 & 32 &  Being an authority doesn't mean that much to me. &  People always seem to recognize my authority.\\

    \end{tabular}
    \caption{Characterizing question. Most left column gives the number of occurrences of the question within 50 runs. The question number is the one given by the dataset.}
    \label{tab:characterizing_questions}
\end{table}

\subsection{Theoretical guarantees: Binary Questionnaire}\label{sub:theory_mindset}

This section proposes a generative model to simulate mindsets in binary questionnaires. Based on this model, we prove that for suitable parameter choices, tangles \emph{recover} the mindsets; that is, the set of all tangles coincides with the set of all mindsets with high probability. 
We refer to Appendix~\ref{app:sub:proofs_questionnare} for the proofs.

\subsubsection{Generative model}

We simulate $\numobjects$ persons that answer a questionnaire with $\numpartitions$ questions. We start by generating $\nummindsets$ ground truth mindsets $\Mindset_1,\dots,\Mindset_\nummindsets\in\{0,1\}^\numpartitions$\!. Each vector $\mu_i$ describes one specific way of answering all $m$ questions, so it represents the stereotype person with the corresponding mindset~$i$. We generate the entries of each ground truth mindset vector by independent, fair coin throws.
For every $\Mindset_i$, we generate a corresponding group $\Objects_i$ of $\numobjects/\nummindsets$ persons. We now choose a noise probability $p \in(0,0.5)$ and let every person $\object\in\Objects_i$ answer question $s$ as indicated by $\Mindset_i(s)$ with probability $1-p$ and give the opposite answer with probability $p$ (independently across questions and persons). 
The union of the groups then forms the total population $\Objects=\bigcup_{i=1}^\nummindsets\Objects_i$.
Based on the answers, each question $s$ induces a cut of $\Objects$ into the set $A_s^0=\{\object\in\Objects\mid \object(s)=0\}$ and its complement $(A_s^0)^\complement = A_s^1=\{\object\in\Objects\mid \object(s)=1\}$, where $\object(s)\in \{0,1\}$ denotes the answer of person $\object$ to question $s$;
the collection of these cuts is denoted by $\Partitions$.
Since the questions induce the cuts, there is a natural one-to-one relationship between orientations of~$\Partitions$ and vectors in $\{0, 1\}^\numpartitions$\!. Using this relationship, we say that the tangles recover the mindsets if the set of all $\Partitions$-tangles coincides with the set of all mindsets $\{\Mindset_1,\dots,\Mindset_\nummindsets\}$. \\

When sampling the ground truth mindsets, we need to ensure that the vectors $\mu_i$ are not degenerate because the vectors by accident support more than $k$ tangles. We discuss the corresponding non-degeneracy-condition in Appendix~\ref{app:sub:proofs_questionnare} (Assumption~\ref{ass:orientations}), where we also prove that this condition is satisfied with high probability. 

\subsubsection{Main result in the questionnaire setting}

The following theorem states that the orientations induced by the ground truth mindsets give rise to tangles and that all tangles on $\Partitions$ correspond to mindsets with high probability.

\begin{theorem}[Tangles recover the ground truth mindsets]\label{thm:mindsetseqtangles} 
Assume that the model parameters $n, m, k$ and $p$ and the tangle parameter $a$ satisfy $p<1/(\nummindsets+3)$ and $\agree\in \left(p\numobjects, (1-3p)\numobjects/k\right)$. Let $\Partitions$ be the set of cuts induced by questions in the questionnaire. Then with high probability, the mindsets correspond to tangles: 
\begin{enumerate}[nosep]
    \item The probability that at least one of the mindsets does not induce a tangle is upper bounded by $\nummindsets\numpartitions\exp\braces{-2\numobjects(\nummindsets\agree/\numobjects-1+3p)^2/9\nummindsets}$.
    \item If the non-degeneracy Assumption~\ref{ass:orientations} holds for the ground truth tangles, then the probability that there exists a spurious tangle that does not correspond to one of the mindsets is upper bounded by  $\nummindsets\numpartitions\exp\braces{-2\numobjects(\agree/\numobjects-p)^2/\nummindsets}$.
\end{enumerate}
\end{theorem}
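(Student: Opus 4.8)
The plan is to prove the two parts separately, each by combining a Hoeffding bound with a union bound taken over the $\nummindsets\numpartitions$ pairs (mindset, question) --- not over the $\binom{\numpartitions}{3}$ triples of cuts --- and, for the second part, feeding the outcome into the non-degeneracy Assumption~\ref{ass:orientations}. The single random quantity that drives both estimates is, for a mindset $\Mindset_i$ and a question $s$, the number $W_{s,i}:=\abs{\{\object\in\Objects_i\mid \object(s)\neq\Mindset_i(s)\}}$ of persons of group $\Objects_i$ that answer $s$ \emph{against} their mindset. By construction of the model, $W_{s,i}$ is a sum of $\numobjects/\nummindsets$ independent $\mathrm{Bernoulli}(p)$ variables, so $\mathbb{E} W_{s,i}=p\numobjects/\nummindsets$ and Hoeffding's inequality applies with $\numobjects/\nummindsets$ summands. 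The hypothesis $p<1/(\nummindsets+3)$ is used only to guarantee that the interval $\braces{p\numobjects,\,(1-3p)\numobjects/\nummindsets}$ for $\agree$ is non-empty; beyond the stated bounds on $\agree$ no further use is made of it.

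\textbf{Part 1 (every mindset induces a tangle).} Fix $\Mindset_i$; the orientation it induces sends each cut $\{A_s^0,A_s^1\}$ to the side $A_s^{\Mindset_i(s)}$. For any three questions $s_1,s_2,s_3$, every person of $\Objects_i$ that answers all three along $\Mindset_i$ lies in $A_{s_1}^{\Mindset_i(s_1)}\cap A_{s_2}^{\Mindset_i(s_2)}\cap A_{s_3}^{\Mindset_i(s_3)}$, so by a union bound the size of this triple intersection is at least $\numobjects/\nummindsets-W_{s_1,i}-W_{s_2,i}-W_{s_3,i}$. Hence, on the event that $W_{s,i}\le(\numobjects/\nummindsets-\agree)/3$ for every $s$, every triple intersection has size at least $\agree$, i.e.\ the consistency condition~\eqref{eq:consistency_cond} holds for $\Mindset_i$. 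The one-sided deviation $(\numobjects/\nummindsets-\agree)/3-p\numobjects/\nummindsets$ is strictly positive precisely because $\agree<(1-3p)\numobjects/\nummindsets$; substituting it into Hoeffding's upper-tail bound and then a union bound over the $\nummindsets\numpartitions$ pairs $(i,s)$ yields exactly the bound $\nummindsets\numpartitions\exp\braces{-2\numobjects(\nummindsets\agree/\numobjects-1+3p)^2/9\nummindsets}$ on the probability that at least one mindset fails to be a tangle.

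\textbf{Part 2 (no spurious tangles).} Identify a tangle with the vector $\sigma\in\{0,1\}^\numpartitions$ recording its chosen side for each question. First I would establish the deterministic claim: on the event $\{W_{s,i}<\agree/\nummindsets\text{ for all }i,s\}$, every tangle $\sigma$ agrees with \emph{some} mindset on every triple of (not necessarily distinct) questions. Indeed, take a triple $s_1,s_2,s_3$ and decompose the oriented triple intersection $\bigcap_l A_{s_l}^{\sigma(s_l)}$ over the groups; if no mindset agreed with $\sigma$ on all three questions, then each group $\Objects_i$ would disagree with $\sigma$ on at least one coordinate $s_l$ and could contribute at most $W_{s_l,i}<\agree/\nummindsets$ persons to the intersection, forcing $\abs{\bigcap_l A_{s_l}^{\sigma(s_l)}}<\agree$ and contradicting consistency. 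The non-degeneracy Assumption~\ref{ass:orientations} is tailored to rule out exactly such orientations: it guarantees that an orientation of $\Partitions$ agreeing with some mindset on every triple --- equivalently, a tangle of the $p=0$ cuts, for which a triple intersection has size $\ge\agree$ iff it is non-empty iff some mindset agrees with $\sigma$ there, using $\agree<\numobjects/\nummindsets$ --- must be one of the $\Mindset_i$. So on this event every tangle equals some $\Mindset_i$. Finally, $W_{s,i}$ reaches $\agree/\nummindsets$ only by deviating upward from its mean by the strictly positive amount $(\agree-p\numobjects)/\nummindsets$ (positive since $\agree>p\numobjects$); Hoeffding plus a union bound over the $\nummindsets\numpartitions$ pairs $(i,s)$ bound this probability by $\nummindsets\numpartitions\exp\braces{-2\numobjects(\agree/\numobjects-p)^2/\nummindsets}$, which is the claimed bound, while the high-probability validity of Assumption~\ref{ass:orientations} itself is established separately in Appendix~\ref{app:sub:proofs_questionnare}.

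\textbf{Main obstacle.} The concentration estimates are routine; the delicate step is the combinatorial core of Part 2, where one must convert ``there exists a spurious tangle'' --- a priori a search over all $2^\numpartitions$ orientations --- into a property controllable by a single statement about the $\nummindsets$ mindset vectors. The device that makes this clean, and that simultaneously keeps every union bound of size $\nummindsets\numpartitions$ rather than $\binom{\numpartitions}{3}$, is to route all randomness through the per-question miscounts $W_{s,i}$: once these are all controlled, both the ``mindsets are tangles'' direction and the ``tangles are mindsets'' direction become purely deterministic, and the only remaining ingredient is Assumption~\ref{ass:orientations}. A minor technical point is that the consistency condition permits repeated sides, so the combinatorial property must be phrased for all (not necessarily distinct) triples of questions; this is harmless, since repeated coordinates only weaken the condition.
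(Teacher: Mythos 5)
Your proposal is correct and follows essentially the same route as the paper: both parts reduce the existence of a bad triple to a per-(mindset, question) deviation event --- at least $(\numobjects/\nummindsets-\agree)/3$ deviators on some question for Part~1, at least $\agree/\nummindsets$ deviators for Part~2 via Assumption~\ref{ass:orientations} --- and then apply Hoeffding plus a union bound over the $\nummindsets\numpartitions$ pairs, exactly as in Lemmas~\ref{lem:mindsetstangles} and~\ref{lem:tanglesmindsets}. Your phrasing via the miscounts $W_{s,i}$ is just the contrapositive of the paper's pigeonhole steps, and the resulting exponents match the stated bounds.
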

In both statements, we take the probability over the random draw of the person's answers (and not over the randomness in generating the ground truth mindsets, which only play a role regarding Assumption~\ref{ass:orientations}). 
In particular, the probability that the set of mindsets corresponds precisely to the set of tangles tends to $1$ as $\numobjects$ tends to $\infty$ with fixed $\agree/\numobjects$. \\

The theorem is based on some conditions. 
The bound on $p$ ensures that the noise is not too large, considering the number of clusters. If the noise is too high, it becomes difficult to distinguish small clusters from spurious noise. The agreement parameter $\agree$ must not be too small (so that we do not cluster on noise) and not too large considering cluster size (otherwise, we cannot find the clusters anymore). 

This theorem is what we would like to achieve: unless the parameters are so that they obfuscate the cluster structure, tangles provably find the ground truth clusters.

\subsection{Experiments on synthetic data: Binary Questionnaire}

We now run experiments on the generative model described in Section \ref{sub:theory_mindset}. We evaluate the influence of noise in the answers and the influence of irrelevant questions, that is, questions answered at random, and compare the performance of our post-processing to the output of the $k$-means algorithm.

If not stated otherwise, in our algorithmic setup, we use the bipartitions induced by all questions and choose $\agree$ to be a $1/3$ of the size of the smallest cluster. We choose the average Hamming similarity, stated in Equation~\eqref{eq:average_hamming_sim}, to assign a cost to the bipartitions. We use the normalized mutual information (nmi) between the ground truth mindsets and our discovered mindsets to measure the performance of our hard clustering output. The nmi assigns a score between 0 and 1; high scores indicate promising results. We average the results over ten random instances of the proposed model.

\subsubsection{Tangles discover the true mindsets and perform well on noisy data}

One of the properties of tangles is their soft definition using sets of three orientations. As a result, they orient all bipartitions towards dense structures while the intersection of all cuts might be empty. As discussed above, we can interpret a tangle as one specific way of answering (all) questions. This scenario represents a stereotypical way of answering the questions, while no person in the dataset has to answer in this specific way. Thus tangles are inherently able to deal with noisy data. In Figure~\ref{fig:mind:noise}, we visualize the robustness of tangles on noisy data. In our model, we simulate noise by randomly flipping a percentage of each participant's answers individually. As a result, the respective person deviates more from the stereotypical answers, thus from its ground truth mindset. As a clustering baseline, we apply the $k$-means algorithm to the answer vectors of the participants, interpreting them as points in a Euclidean space. We give the actual number of mindsets $k$ to the $k$-means algorithm. In the left image of Figure~\ref{fig:mind:noise}, we observe that for balanced datasets, tangles perform comparably to $k$-means. Without fine-tuning any parameters, this is significant since tangles do not directly get the number of clusters as input; only a very rough lower bound on the size of the smallest cluster we want to discover. Tangles discover the correct number of clusters and the underlying structure even with high noise in the data.

\begin{figure}[tbh]
    \centering
    \begin{subfigure}[t]{0.32\textwidth}
        \centering
        \includegraphics[width=\textwidth]{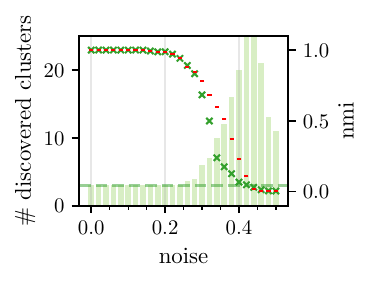}
     \end{subfigure}\hspace*{0.01\textwidth}
     \begin{subfigure}[t]{0.32\textwidth}
        \centering
        \includegraphics[width=\textwidth]{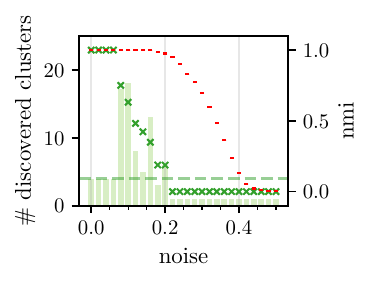}
    \end{subfigure}\hspace*{0.01\textwidth}
    \begin{subfigure}[t]{0.32\textwidth}
        \centering
        \includegraphics[width=\textwidth]{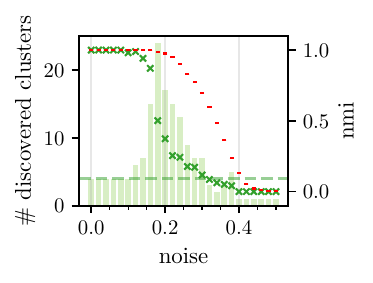}
    \end{subfigure}
    \caption{Influence of noise. We consider a questionnaire with 40 useful and no useless questions.
    We plot the performance depending on the noise for three clusters with 333 points each in the left image and show results for four unbalanced clusters with 100, 200, 300, and 400 data points in the middle and the right plot. In the two first images, we chose an agreement parameter $\agree$ of $1/3$ of the smallest cluster size. On the right plot, we chose $\agree$ to be $2/3$ of the smallest cluster size. In all three settings, we prune paths of length $1$. Performance is measured by the nmi and plotted as markers. Red markers show the performance of $k$-means, and green markers the performance of tangles. Bars indicate the number of tangles in the data, and we evaluate both values for different noise levels in the x-axis.}
    \label{fig:mind:noise}
\end{figure}

\subsubsection{The tangle search tree holds all the information} \label{sub:gap_problem}

For unbalanced datasets, we observe one of the open problems when translating tangles into practice; the gap problem discussed in Appendix~\ref{app:subsec:gap}. In a nutshell, the gap problem arises from the fact that we never consider all possible bipartitions in practice. We get a sorted subset of all possible cuts that might not cover the set of data points uniformly. Therefore, we might have gaps or large jumps between the cost of cuts -- for example, many unbalanced bipartitions followed by a random cut. 
This phenomenon becomes especially visible in datasets that consider highly unbalanced but non-hierarchical settings, where the clusters differ significantly in density. The middle image of Figure~\ref{fig:mind:noise} shows the performance of tangles compared to $k$-means. We observe that, with increasing noise, the algorithm discovers significantly more tangles than there are clusters before the number of found clusters quickly drops to one.

We can reduce the influence of these gaps by adjusting the agreement parameter or the threshold $\CostUpper$ (see also Appendix~\ref{app:subsec:gap}, Section~\ref{sub:design_choices}). However, fine-tuning the parameters is not the goal in the end, and we believe there are other methods of post-processing the tangle search tree to avoid this, such as pruning.
To highlight that tangles can also yield better results, and the tangle search tree holds all the information, we ran the same experiment with a tighter bound on the size of the smallest cluster. The larger agreement parameter results in the tangle search tree becoming inconsistent earlier and reassembles to early stopping the algorithm or choosing a smaller maximal order $\CostUpper$. In this case, we set the agreement parameter $a$ to $\frac{2}{3}$ the size of the smallest cluster. The left plot of Figure~\ref{fig:mind:noise} shows the improvement when better estimating $a$, proving that the hierarchy of the tangles search tree contains the correct cluster structure.

\begin{figure}[tb]
    \centering
    \begin{subfigure}[t]{0.49\textwidth}
        \centering
        \includegraphics[width=\textwidth]{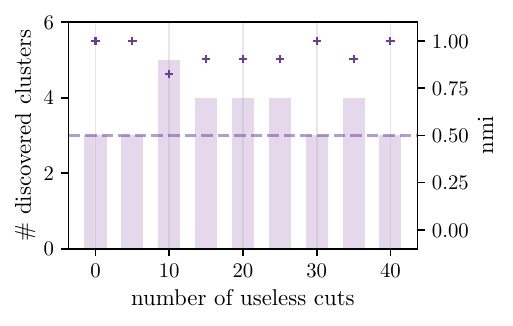}
     \end{subfigure}\hspace*{0.01\textwidth}
     \begin{subfigure}[t]{0.49\textwidth}
        \centering
        \includegraphics[width=\textwidth]{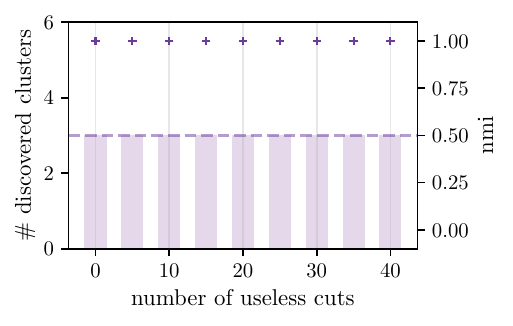}
    \end{subfigure}
    \caption{Influence of useless questions (answered at random) with and without pruning. We consider three clusters with 333 datapoints each. The noise $p$ is set to 0.15 and the questionnaire has 40 useful questions. The plot shows results on two axes. The left y-axis indicated by pluses visualizes the performance measured by the the nmi. Additional on the right y-axis we show the number of tangles as bars. Both are plotted over the number of useless questions. Results in the left image show output without any pruning. Random questions can result in more tangles than there are clusters. In the right image we visualize results when pruning path of length 1. Pruning can neutralize the effect of random questions.}
    \label{fig:mind:random}
\end{figure}

\subsubsection{Useless questions do little harm}

A bipartition is useless when it does not contain information relevant to separating the cluster structure; for example, a bipartition which roughly separates all the clusters in half. In this setting of a binary questionnaire, these would be questions that are irrelevant to the considered topic. We assume that for a given topic, such as narcissism in the npi data of Section~\ref{sub:binary_motivation}, the question, "Do you wear glasses?" will not give any insight into a person's narcissism. However, the question is whether such useless bipartitions influence the quality of the tangle algorithm. In theory, such random bipartitions do not hurt tangles since the cut-cost will be high, so the cuts will be oriented late or not at all. They either get forced to one orientation by previous cheap bipartitions, or the tangle search tree gets inconsistent before orienting them. 

As mentioned above, in practice, due to a lack of richness in the subset of cuts, we might find a large gap (cf. Appendix~\ref{app:subsec:gap}) in the cost function between two cuts. Thus, even if the subsequent cut is useless, the cut will still be consistently orientable in both directions and result in two tangles splitting the large cluster into two smaller ones. Since these useless bipartitions often have a high cost, the following bipartitions will be even higher in cost and hold little to no information. We say such a split is random, and we can leverage this information to discover those splits and prune the tree along these branches. The exact procedure is described in Section~\ref{sub:post_processing}. We show the result in Figure~\ref{fig:mind:random}. On the left, we show the results when running the algorithm without pruning. The right image visualizes the results on the same data, but in this case, we prune paths to leaves of the depth of $1$. We can significantly improve the output and thus find the correct number of tangles, each corresponding to a cluster. Using the normalized mutual information score, we evaluate the performance based on our hard clustering (see Section~\ref{sub:post_processing}).

\subsection{Summary: Binary Questionnaire}

We showed that tangles could automatically do what psychiatrists did by hand in a real-world example. They simultaneously discover the structure of the data and give insight into the questions we ask in the questionnaire.
Theoretical guarantees reveal that tangles discover the ground truth with high probability in our questionnaire model.
In experiments, we investigate different properties of tangles; we consider the effect of pruning the tangle search tree and the tangles' behavior on noisy data. 
Even though tangles do not need the correct number of clusters as an input, tangles often perform comparably to $k$-means, which we initialized with the correct $k$.
The algorithm performs well for unbalanced datasets but seems more prone to noise. By sensitive sampling or adapted post-processing, we can extract more information from the data and enhance the performance. Stressing this, we show that the tangle search tree holds more information than we can currently leverage. This indicates rewarding research directions for developing and improving the hard and hierarchical clustering algorithm. Improving the post-processing or advancing the evaluation of the tangle search tree is promising.

\section{Use Case: Graphs}
\label{sec:use_case:graphs}
In graph clustering, we are given a graph and want to divide the nodes of the graph into clusters such that sets of highly connected nodes are within the same clusters and there are only a few connections between different clusters. Tangles serve as an aggregation method for a set of cuts. We can generate these cuts by fast heuristic algorithms producing weakly informative cuts of the cluster structure.

\subsection{Theoretical guarantees}
\label{sub:theory_sbm}

In the following, we analyze the theoretical properties of tangles in graph clustering in the expected graph of a stochastic block model. We refer to Appendix~\ref{app:sub:proofs_sbm} for the proofs.

\subsubsection{Model}\label{sub:sbm:model}

We consider a stochastic block model on a set $\Objects$ of $\numobjects$ vertices that consists of two equal-sized blocks $\Objects_1$ and $\Objects_2$, which represent the ground truth clusters. Edges between vertices of the same block have weight $p$, and edges between blocks have weight $q$, where $0\leq q<p\leq 1$. 
In a standard stochastic block model, we would now sample an unweighted, random graph from this model, where we would choose each edge with the probability given by $p$ and $q$ according to the ground truth model. In our case, we will perform the analysis just in expectation, as a proof of concept. This means we do not sample a random graph but consider the weighted graph described above. 

We consider tangles induced by the set of all possible cuts $\Partitions$ of the set $V$. 
We use the cost function 

\begin{equation}
\cost(\{A,A^\complement\})\coloneqq\sum_{u\in A,v\in A^\complement}w(u,v)
\label{eq:cut_cost}
\end{equation}

where $w(u,v)$ denotes the weight of the edge between vertices $u$ and $v$.  If we denote $\alpha_i=|A\cap \Objects_i|/|\Objects_i|$, this gives us, as $|\Objects_1| = |\Objects_2|=\numobjects/2$ the following explicit formula for the cost:

\begin{equation}
\cost(\{A,A^\complement\}) = \frac{\numobjects^2}{4} \left( p(\alpha_1-\alpha_1^2 + \alpha_2-\alpha_2^2) + q(\alpha_1 + \alpha_2 - 2\alpha_1\alpha_2) \right). \label{order}
\end{equation}

Each of the two ground truth blocks $\Objects_1$, and $\Objects_2$ induces a natural orientation of the set of all cuts by picking from each cut $\{A, A^\complement\}$ the side containing the majority of that block's vertices. 
We find that for reasonable choices of $p$, $q$, and $\agree$, there is a range of costs in which these two orientations are indeed distinct tangles.

\subsubsection{Main results in the graph clustering setting}\label{sub:mindset_theory}
The following Theorem states that in the graph clustering setting, tangles perfectly recover the ground truth: there exists a one-to-one correspondence between the tangles and the ground truth blocks.

\begin{theorem}[Tangles recover the ground truth blocks]\label{thm:blockstangles}
Assume that the block model parameters $p,q,n$ and the tangle parameter $a$ satisfy $p>3q\numobjects/(\numobjects-2\agree)$ and $\agree \ge 2$. Consider the set $\Partitions$ of all possible graph cuts, and the set $\Partitions_\CostUpper$ of those graph cuts with costs (cf. Equation~\ref{order}) bounded by $\CostUpper$. Let $\xi=1+q/p$. If $\CostUpper$ satisfies 
\begin{align*}
    q\left(\frac{\numobjects}{2}\right)^2 \le\CostUpper< \frac{\numobjects^2}{4} p \left(\frac{1}{3}\xi\left(\xi-\frac{2\agree}{\numobjects}\right) - \frac{1}{9}\left(\xi-\frac{2\agree}{\numobjects}\right)^2 \right)\,,
\end{align*}
then the two orientations of $\Partitions_\CostUpper$ induced by the two ground truth blocks are distinct and exactly coincide with the $\Partitions_\CostUpper$-tangles.
\end{theorem}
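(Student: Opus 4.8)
\section*{Proof plan}

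The plan is to work entirely with the explicit cost formula (Eq.~\ref{order}) and the two ratios $\alpha_1,\alpha_2\in[0,1]$ attached to a side of a cut. Write $\xi=1+q/p$ and set $\lambda:=(\xi-2a/n)/3$, so that the stated upper endpoint for $\CostUpper$ is exactly $\frac{n^2p}{4}\lambda(\xi-\lambda)$, i.e.\ the cost of a cut with $\alpha_1=\lambda,\alpha_2=0$. The first step is a \emph{structural lemma} on which cuts can lie in $\Partitions_\CostUpper$. Minimising the cost over $\alpha_2$ gives $c(\{A,A^\complement\})\ge\frac{n^2p}{4}\min\{\alpha_1(\xi-\alpha_1),(1-\alpha_1)(\xi-1+\alpha_1)\}$, and a short case analysis of where the parabola $y\mapsto y(\xi-y)$ sits (this is where $p>3qn/(n-2a)$, i.e.\ $\xi-1<(1-2a/n)/3$, enters) yields that every $P\in\Partitions_\CostUpper$ is of one of two types: either \emph{aligned} — one side $S$ is the majority side of \emph{both} blocks, and then its block-imbalances satisfy $(1-\alpha_1^S)+(1-\alpha_2^S)<\lambda$ — or \emph{block-like} — one side $S$ is the majority side of exactly one block, say $V_1$, and then $\alpha_1^S>\xi-\lambda$ while $\alpha_2^S<1/2$. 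The same lemma rules out $\alpha_1=1/2$ or $\alpha_2=1/2$, so the block orientations $O_1,O_2$ are well defined on $\Partitions_\CostUpper$. Since $\{V_1,V_2\}$ has cost $q(n/2)^2\le\CostUpper$ it lies in $\Partitions_\CostUpper$, and $O_1,O_2$ orient it to opposite sides; this gives \emph{distinctness}.

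The second step verifies that $O_1$ (hence, by symmetry, $O_2$) is a tangle. Given three cuts of $\Partitions_\CostUpper$ with $V_1$-majority sides $S_1,S_2,S_3$, bound $|S_1\cap S_2\cap S_3|$ below by $\tfrac n2\max\{0,1-\sum_i(1-\alpha_1^{S_i})\}+\tfrac n2\max\{0,1-\sum_i(1-\alpha_2^{S_i})\}$, using inclusion--exclusion inside $V_1$ and inside $V_2$ separately. For aligned cuts both ratios are close to $1$ with $(1-\alpha_1^{S_i})+(1-\alpha_2^{S_i})<\lambda$, so the two terms together are at least $\tfrac n2(2-3\lambda)=\tfrac n2(2-\xi+2a/n)>a$ (as $\xi<2$ and $\tfrac n2\cdot\tfrac{2a}{n}=a$). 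For block-like cuts the $V_2$-term may vanish, but then $\alpha_1^{S_i}>\xi-\lambda$ forces $\sum_i(1-\alpha_1^{S_i})<3-2\xi-2a/n$, so the $V_1$-term alone is at least $\tfrac n2(2\xi-2+2a/n)>a$ (as $\xi>1$); the mixed cases interpolate between the two estimates and also exceed $a$. Hence every triple intersects in $\ge a$ objects.

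The third step is uniqueness. Let $O$ be any consistent orientation of $\Partitions_\CostUpper$; as $\{V_1,V_2\}\in\Partitions_\CostUpper$, after renaming blocks we may assume $O$ orients it towards $V_1$, and we must show $O=O_1$. Suppose not: on some $P=\{A,A^\complement\}$ the orientation $O$ picks the side $A$ that is not the $V_1$-majority side, so $\alpha_1^A<1/2$. Consistency of $(V_1,V_1,A)$ gives $|A\cap V_1|\ge a$. Put $W:=A\cap V_1$; since $\alpha_1^A\le1/2$ one checks $c(\{W,W^\complement\})\le c(P)\le\CostUpper$, so $\{W,W^\complement\}\in\Partitions_\CostUpper$. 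The triple $(V_1,A,W^\complement)$ has empty intersection, so $O$ cannot pick $W^\complement$; hence $O$ picks $W$. Now split $W=W_1\sqcup W_2$ into two nonempty halves; the cuts $\{W_i,W_i^\complement\}$ are cheaper still, hence in $\Partitions_\CostUpper$. Testing the triples $(W,B_1,B_2)$, where $B_i$ is the side of $\{W_i,W_i^\complement\}$ chosen by $O$: choosing $W_1,W_2$ or $W_1^\complement,W_2^\complement$ empties the intersection, so $O$ must choose, say, $W_1$ and $W_2^\complement$, and then $|W\cap B_1\cap B_2|=|W_1|\ge a$. Iterating this halving forces $O$ down a strictly decreasing chain of $O$-oriented subsets of $V_1$ of cardinality $\ge a$; after finitely many steps one reaches a set of size $\ge a$ whose two nonempty halves both have size $<a$ (here $a\ge2$ is used — for $a=1$ the principal tangle focused at a single vertex would survive), at which point \emph{every} choice of $O$ on the two half-cuts yields a triple of intersection size $<a$, a contradiction. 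Thus $O$ agrees with $O_1$ on every cut, i.e.\ $O=O_1$.

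The main obstacle is the structural lemma with the \emph{exact} constants matching the $\CostUpper$-window: one must exhibit the trade-off whereby a cut whose $V_1$-majority side is only moderately biased towards $V_1$ is forced by the cost bound to contain almost all of $V_2$ on that side, so that the $V_2$-contribution rescues the triple-intersection estimate. It is precisely this trade-off that makes $\tfrac13\xi(\xi-2a/n)-\tfrac19(\xi-2a/n)^2$, rather than a cruder threshold, the correct upper bound; the rest is bookkeeping with the parabola $y(\xi-y)$ and inclusion--exclusion.
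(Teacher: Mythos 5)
Your proof is correct, but it takes a genuinely different route from the paper's. For existence, the paper argues by contradiction on a minimum-cost forbidden triple: Lemma~\ref{lem:schieben} shows that shifting an entire block to one side of a cut never increases its cost, which reduces any forbidden triple in a block orientation to a canonical form (one side whose complement contains $\Objects_2$, two sides containing $\Objects_2$), and Lemma~\ref{lem:forbidden_triple} then computes the minimum possible maximal cost of such a canonical triple, which is exactly the upper endpoint of the $\CostUpper$-window. You instead classify every cut of $\Partitions_\CostUpper$ up front (aligned versus block-like, with quantitative bounds on the minority fractions) and lower-bound every triple intersection directly by a union bound inside each block; I checked the constants, and the mixed cases do close: with $k$ aligned and $3-k$ block-like cuts the $V_1$-contribution alone exceeds $\agree + \tfrac{n}{2}(2-k)q/p$ for $k\le 2$, while the all-aligned case needs both blocks' contributions and uses $\xi<2$. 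This buys a more transparent accounting of which cuts can lie below the threshold and avoids the canonical-form reduction (whose application of Lemma~\ref{lem:schieben} inside the paper's main proof leaves the hypothesis on $\alpha_2$ implicit). For uniqueness, both arguments are descents on the minority side of a disagreeing cut and both use $\agree\ge 2$ in the same spirit; the paper peels off single vertices from a minimal counterexample and ends with two disjoint oriented sets, whereas you anchor on the orientation of $\{\Objects_1,\Objects_2\}$, pass to $W=A\cap \Objects_1$, and halve repeatedly until no consistent orientation of the two half-cuts survives. The one step you should spell out in a full write-up is the structural lemma itself, in particular that $p>3q\numobjects/(\numobjects-2\agree)$ forces $q/p<\lambda$, so that block-like cuts satisfy $\alpha_1^S>\xi-\lambda$ even after the (nonnegative) $\Objects_2$-contribution to the cost is taken into account.
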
\par \medskip

If, on the other hand, $p<2q$, there is no chance that tangles identify the two blocks as distinct clusters. The intuitive reason is that the within-cluster connectivity $p$ is smaller than two times the between-cluster connectivity $q$, the expected cost of a cut separating the two clusters is higher than one cutting through the clusters, which makes it impossible to recover the block structure. In this case, there will be precisely one tangle.

\begin{theorem}[Non-identifiability]
\label{thm:no_gap_sbm}
If $\agree\ge 2$ and $p<2q$, then for any value $\CostUpper$ and $\Partitions_\CostUpper$ the set of all cuts of cost at most $\CostUpper$, there exists at most one $\Partitions_\CostUpper$-tangle.
\end{theorem}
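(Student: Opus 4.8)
\section*{Proof proposal for Theorem~\ref{thm:no_gap_sbm}}

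The plan is to show that, under $p<2q$, every $\Partitions_\CostUpper$-tangle is forced to orient each nontrivial cut towards its larger side; since a cut whose two sides have equal size then cannot be oriented at all, and since any two distinct tangles must disagree on some cut, uniqueness follows. The engine is a convenient rewriting of the cost. Writing $s=\alpha_1+\alpha_2=2\abs{A}/\numobjects$ and $d=\alpha_1-\alpha_2$, a short computation turns Equation~\eqref{order} into
\begin{equation*}
  \cost(\{A,A^\complement\}) = \tfrac{\numobjects^2}{8}\bigl((p+q)\,s(2-s)-(p-q)\,d^2\bigr) = \tfrac{\numobjects^2}{8}(p+q)\bigl(s(2-s)-\lambda d^2\bigr), \qquad \lambda:=\tfrac{p-q}{p+q},
\end{equation*}
and the model hypotheses $q<p$ together with $p<2q$ are exactly $0<\lambda<\tfrac13$.

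The heart of the argument is the following claim: \emph{if $O$ is a $\Partitions_\CostUpper$-tangle and $\{A,A^\complement\}\in\Partitions_\CostUpper$ with $1\le\abs{A}\le\numobjects/2$, then $A\notin O$.} Granting the claim, the theorem is immediate: if $O\ne O'$ were two $\Partitions_\CostUpper$-tangles they would orient some cut differently, say $A\in O$ and $A^\complement\in O'$; consistency applied to the triple $(A,A,A)$ gives $\abs{A}\ge\agree\ge 2$, so $A\ne\emptyset$, and after possibly interchanging the roles of $O$ and $O'$ we may assume that the side in $O$ is the smaller one, i.e.\ $\abs{A}\le\numobjects/2$. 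The claim then yields $A\notin O$, a contradiction. (If $\CostUpper$ is so small that $\Partitions_\CostUpper$ contains only the trivial cut $\{\emptyset,\Objects\}$, there is a single trivial orientation and nothing to prove.)

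I would prove the claim by induction on $\abs{A}$. If $\abs{A}=1$, then $\abs{A\cap A\cap A}=1<\agree$ violates consistency, so $A\notin O$. For $\abs{A}\ge 2$, split $A=A_1\sqcup A_2$ into two nonempty proper subsets, dividing the vertices of $A$ lying in each block $\Objects_i$ as evenly as possible. A computation with the cost formula above — the only place $\lambda<\tfrac13$ enters — shows $\cost(\{A_j,A_j^\complement\})\le\cost(\{A,A^\complement\})\le\CostUpper$ for $j=1,2$, so both halves lie in $\Partitions_\CostUpper$: an exactly even split yields profiles $(s/2,d/2)$, and the resulting cost gap is proportional to $\tfrac{4s}{3}-s^2-\lambda d^2\ge \tfrac{4s}{3}-(1+\lambda)s^2\ge s\bigl(\tfrac13-\lambda\bigr)>0$, using $d^2\le s^2\le s$. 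Now assume $A\in O$. Since $A\cap A_1^\complement\cap A_2^\complement=A\setminus(A_1\cup A_2)=\emptyset$ has fewer than $\agree$ elements, consistency of $O$ forbids $O$ from containing both $A_1^\complement$ and $A_2^\complement$; hence $A_i\in O$ for some $i$. But $\abs{A_i}<\abs{A}$ and $1\le\abs{A_i}\le\numobjects/2$, so the inductive hypothesis gives $A_i\notin O$ — a contradiction, so $A\notin O$.

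I expect the main obstacle to be the cost comparison in the inductive step: verifying that halving a set strictly decreases its cost precisely when $p<2q$ (cleanly, if $A$ lies inside one block this reduces to $\alpha_1<2(p+q)/(3p)$, which holds for every $\alpha_1\le 1$ iff $p<2q$), and checking that the integrality of the split — one part being one vertex larger in a block whose count of $A$-vertices is odd — does not spoil the inequality. This last point is handled by the strict slack $s(\tfrac13-\lambda)>0$ above, together with $\numobjects$ not being pathologically small; working out exactly which small cases ($\abs{A}\in\{2,3\}$, a block count odd, $\lambda$ near $\tfrac13$) need separate verification is the only genuinely fiddly part of the write-up.
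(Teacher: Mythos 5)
Your architecture is sound and genuinely different from the paper's: you prove directly that no tangle can contain any side of size at most $\numobjects/2$, by recursive halving, whereas the paper first invokes a local-minimum lemma to reduce the cheapest disagreement cut to $\{\Objects_1,\Objects_2\}$ and then descends by peeling single vertices off a near-half of $\Objects_1$. Your reduction from the claim to uniqueness is correct, as are the base case and the consistency step (the triple $A,A_1^\complement,A_2^\complement$ has empty intersection). The gap is in the one step you flagged, but it is worse than you think: the comparison $\cost(\{A_j,A_j^\complement\})\le\cost(\{A,A^\complement\})$ for the exact even split fails not for $\abs{A}\in\{2,3\}$ but for $\abs{A}$ near $\numobjects/2$. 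Concretely, take $\abs{\Objects_1}=N=11$, $p=0.39$, $q=0.2$ (so $q<p<2q$) and $A=\Objects_1$: then $\cost(\{A,A^\complement\})=qN^2=24.2$, while the larger half, $6$ vertices inside $\Objects_1$, gives a cut of cost $6\cdot 5\,p+6\cdot 11\,q=24.9$. With $\CostUpper=qN^2$ that half is not in $\Partitions_\CostUpper$, the tangle is not obliged to orient it, and your induction cannot proceed at $A=\Objects_1$. The reason your slack estimate does not rescue this is that the slack is proportional to $\bigl(\tfrac13-\lambda\bigr)$, which vanishes as $p\to 2q$, while the integrality penalty of a single ceiling is of order $(p+q)\numobjects/4$ independently of $\lambda$; so for every fixed $\numobjects$ there are admissible $(p,q)$ where the penalty wins, and the theorem carries no largeness assumption on $\numobjects$ that would excuse these cases.

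The repair is small and is exactly the slack the consistency condition affords: you do not need $A_1\sqcup A_2=A$, only $\abs{A\cap A_1^\complement\cap A_2^\complement}<\agree$, so with $\agree\ge 2$ you may leave one vertex of $A$ unassigned. Take both parts with block counts $\lfloor k_i/2\rfloor$ when exactly one block count is odd (discarding one vertex); when both are odd, give the single leftover vertex to the part that takes the ceiling in the block where $A$ has more vertices — the perturbation of the exact half is then $(+\tfrac12,-\tfrac12)$ with cost change $(p-q)\bigl(m_2-m_1-\tfrac12\bigr)\le 0$ — and one checks (with the separate tiny case $k_1=k_2=1$) that every part then has cost at most $\cost(\{A,A^\complement\})$, with no exceptional regime. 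This is in effect what the paper's own proof does when it covers $\Objects_1$ by two disjoint $\lfloor\abs{\Objects_1}/2\rfloor$-subsets that miss one vertex. With that modification your argument goes through and is arguably cleaner than the paper's, since it needs neither the local-minimum lemma nor the separate one-vertex peeling descent.
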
 

Note that all our results are proved in the expected model, and they assume that tangles are constructed on the set of all possible graph cuts. In the experiments in the following section we complement these results with the cases where clusters are sampled from the model and where the tangles are constructed on a realistic, small subset  of graph cuts.  

\subsection{Experiments on synthetic datasets}
\label{sub:sbm:synthetic}

To validate tangles in the graph clustering setup, we perform experiments on synthetic data where we randomly sample graphs from a standard stochastic block model.

\subsubsection{Setup of the simulation and baselines}

As opposed to the questionnaire setting, in the graph clustering setting, there is no obvious choice for the initial partitions in the pre-processing step. Instead, we use the Kernighan-Lin-Algorithm~\citep{kernighan_graph-partitioning1970} to generate a small set of initial cuts. This algorithm performs a local search for a cheap cut under fixed partition sizes. Starting with a randomly initialized cut, each iteration goes over all pairs of vertices and greedily swaps their assignment if this improves the current cut. In the original version, the algorithm stops when none of the possible pairs can improve the cut value. 
However, we found that it is enough to run the algorithm for just two iterations of the local search to speed up the pre-processing: a highly diverse set of initial cuts is essential for our purpose. 
We denote this version of the algorithm as the KL algorithm with early stopping. Given a graph with $n$ vertices, each pass of the algorithm runs in time $\mathcal{O}(n^2 \log n)$, and we run the algorithm for two iterations.
We use the average cut value to assign a cost to each bipartition: $
\cost(\{A,A^\complement\})\coloneqq \frac{1}{|A| \cdot (n - |A|)} \cdot \sum_{u\in A,v\in A^\complement}w(u,v)$ where $w(u, v)$ is one if there is an edge between the nodes $u$ and $w$, else 0. We then apply the tangle algorithm to the subset of bipartitions. We choose the agreement parameter for the algorithm to be $1/3$ of the size of the smallest cluster, which is a rough lower bound. 
We do not choose a threshold value for $\CostUpper$ for the tangle algorithm but use all bipartitions generated by the pre-processing. To derive a hard clustering from the tangle search tree, we apply the post-processing described in Section~\ref{sub:post_processing}. To evaluate the output, we use the normalized mutual information score (nmi) and average the values over ten random instances of the stochastic block model.

As a baseline, we compare tangles to normalized spectral clustering in the \emph{sklearn} implementation. It gets the correct number of clusters as input.

\subsubsection{Tangles meet the theoretical bound already with few, weak initial cuts}

The theoretical results above show that tangles recover the correct blocks in expectation based on all possible graph cuts. This section explores how far the bounds hold when we only generate a few initial cuts in a pre-processing step.

\begin{figure}[tb]
    \centering
    \begin{subfigure}[t]{0.32\textwidth}
        \centering
       \includegraphics[width=\textwidth]{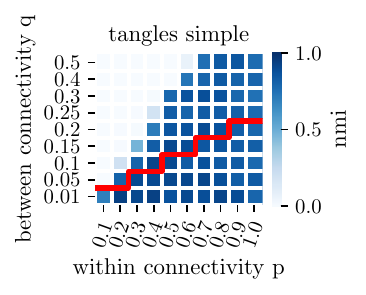}
    \end{subfigure}\hspace*{0.01\textwidth}
    \begin{subfigure}[t]{0.32\textwidth}
        \centering
       \includegraphics[width=\textwidth]{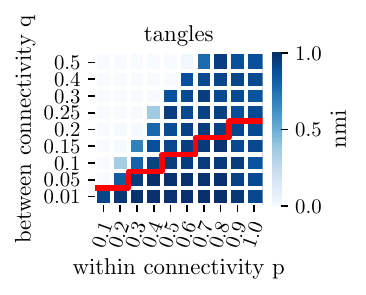}
    \end{subfigure}\hspace*{0.01\textwidth}
    \begin{subfigure}[t]{0.32\textwidth}
        \centering
        \includegraphics[width=\textwidth]{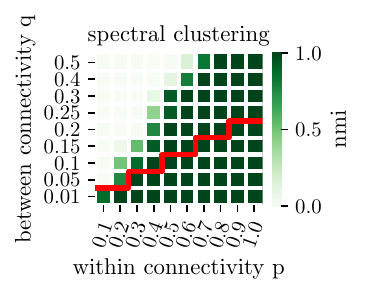}
    \end{subfigure}
    
    \begin{subfigure}[t]{0.32\textwidth}
        \centering
        \includegraphics[width=\textwidth]{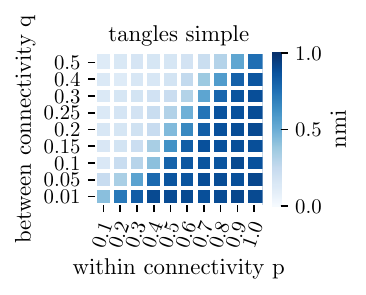}
    \end{subfigure}\hspace*{0.01\textwidth}
    \begin{subfigure}[t]{0.32\textwidth}
        \centering
        \includegraphics[width=\textwidth]{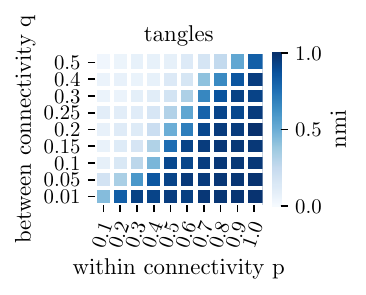}
    \end{subfigure}\hspace*{0.01\textwidth}
    \begin{subfigure}[t]{0.32\textwidth}
        \centering
        \includegraphics[width=\textwidth]{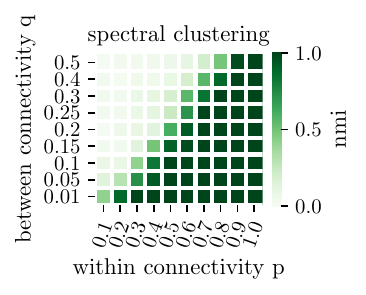}
    \end{subfigure}
    \caption{In the top row, we consider two clusters with 50 data points, each with an agreement parameter of $\agree =16$.
    In the bottom row, we show results for five clusters and 20 datapoints each and an agreement parameter of $\agree =6$. 
    In the left and the middle figure, we use the KL-Algorithm to generate the initial set of cuts and apply the tangle algorithm. We achieve high performance with only 20 low-quality cuts (left). Increasing the number and the quality of the cuts to 100 and stopping after 100 iterations improves the overall performance of tangles only marginally (middle). In the right figures, we plot the results of normalized spectral clustering, which gets the correct number of clusters as a parameter. Tangles perform comparably. The red line indicates the theoretical bound derived in Section~\ref{sub:mindset_theory}}\label{fig:sbm:pq}
\end{figure}

Figure \ref{fig:sbm:pq} shows the results for (top row) two and (bottom row) five different clusters and varying values for the within cluster connectivity $p$ and the between cluster connectivity $q$. 

In the left figures, we see the results for 20 cuts generated with the KL-Algorithm stopping after only two iterations. As indicated by the red line, tangles meet the theoretical bound in this setting. Improving the set of initial cuts by running the KL-Algorithm for 100 iterations (which usually is until convergence) and using a more significant number of cuts (100) improves the results but is barely visible. We visualize the results for this setting in the middle pictures of both rows. Tangles can only aggregate the information in the set of cuts. We perform better when the quality of the initial bipartitions increases. However, minimal improvement indicates that fast and simple algorithms usually suffice to achieve satisfying results.

Comparing the tangle results to spectral clustering, we can see that they perform comparably: they both recover the block structure under similar parameter settings and with comparable accuracy. We find this quite impressive, considering the ``quick and dirty'' pre-processing of generating only 20 cuts using a local search heuristic.
\par \bigskip 

\subsubsection{Performance of tangles saturates fast with increasing number of cuts} \label{sub:performance_saturation}

In the section above, we already saw that a small set of cuts slightly better than random is sufficient to yield satisfying results. In the next experiment, we investigate the number of cuts more closely. We show that the performance saturates fast with an increasing number of cuts. This observation is comforting: the number of cuts is no complex parameter to fine-tune. Figure~\ref{fig:sbm:cuts} shows two simple examples for two and five clusters. With an increasing number of cuts, the performance increases fast before saturating. While for a small set of cuts, sometimes more tangles than clusters exist, with a more significant number of cuts, this number also stabilizes quickly.

\begin{figure}[tbh]
    \centering
    \begin{subfigure}[t]{0.49\textwidth}
        \centering
        \includegraphics[width=\textwidth]{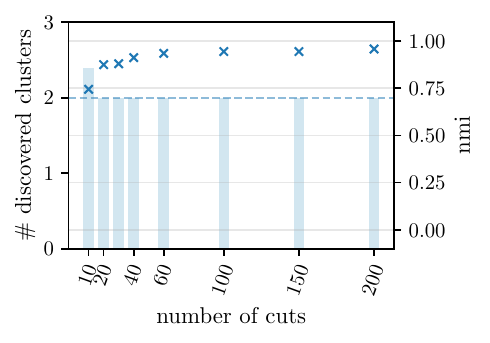}\hspace*{0.05\textwidth}
    \end{subfigure}
    \begin{subfigure}[t]{0.49\textwidth}
        \centering
        \includegraphics[width=\textwidth]{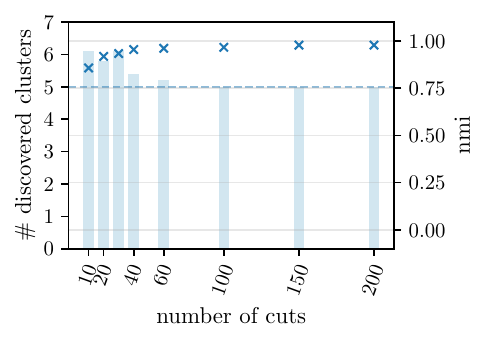}\hspace*{0.05\textwidth}
    \end{subfigure}
    \caption{Performance saturates fast. (Left) Two clusters with 500 datapoints each and an agreement parameter of $\agree = 166$. (Right) Five clusters with 200 data points each and an agreement parameter of $\agree =66$. On the y-axis, we plot the performance measured by nmi as markers, and we plot bars to visualize the number of tangles. The dashed line indicates the correct number of clusters. The number of cuts is on the x-axis. All results are averaged over ten random samples. With an increasing number of cuts, the performance increases fast before saturating.}
    \label{fig:sbm:cuts}
\end{figure}

\subsection{Summary}

This section demonstrates that tangles are well suited for a graph clustering setting. We provide theoretical performance guarantees and show that the algorithms work straightforwardly in practice. It is particularly encouraging to see that only a few initial cuts are necessary to achieve good performance. The number of considered cuts, which we initially believed to be the bottleneck of the computation (due to its cubic contribution to the running time), is not a limiting factor in practice. 
In Section~\ref{sub:metric:synthetic}, we will investigate the overall runtime of the algorithm and see that it behaves almost linearly in the number of cuts.

\section{Use Case: Feature based data and interpretability}
\label{sec:use_case:metric}

As our final use case, we consider a feature-based setup. Consider a set of data points $V$ described in terms of a vector of features. Each dimension represents one feature; these can be categorical or binary, or continuous features. The goal is to group points into clusters so that points that are featurewise similar to each other get assigned to the same cluster, while very dissimilar points are supposed to be in different clusters.
Like in the graph setting, we can use fast and randomized algorithms to compute the initial set of cuts. One example of a cut-finding algorithm in a Euclidean setting is the following heuristic: randomly project the dataset on a one-dimensional subspace and generate a bipartition by applying the 2-means algorithm. 

In order to explore yet another strength of tangles, we would like to focus on interpretable clustering algorithms in this section. To this end, we generate axis parallel data cuts in our pre-processing step. We then use the tangle mechanism to reveal clusters in the data. These clusters then have a simple description in terms of features. Similar procedures have been used in interpretable clustering; see related work, Section~\ref{sec:related}.

\subsubsection{Setup of our interpretable tangle framework}\label{sub:metric:setup}

Consider the set $V \subset \RR^d$ (we assume all points are pairwise different). 
We generate axis-parallel cuts by a simple slicing algorithm. 
Moving along each axis, we select cuts exactly $a-1$ points away from each other. 
We outline the details in the pseudocode in Algorithm~\ref{alg:slicing}. Here $A_{x, i}$ represents the set of points smaller than some real value $x_i$ along the $x$-axis. $\{A_{x, i}, A_{x, i}^\complement\}$ is the cut along the $x$-axis at the real value $x_i$. The algorithm computes $\mathcal{O}(n / a)$ cuts for each dimension. The complexity is linear in the number of points and the number of dimensions: $\mathcal{O}(n \cdot d)$.
As in the settings above, one possible post-processing is the one we describe in Section~\ref{sub:post_processing}, which gives us a hard clustering output. If we have a low-dimensional embedding of our data, we can nicely visualize the soft output like in Figure~\ref{fig:cool_heatmap}.
\begin{figure}
    \begin{minipage}[b]{0.45\linewidth}
        \vspace*{0pt}
        \centering 
        \begin{algorithm}[H]
            \KwData{Set of points $V$, agreement parameter $a$}
            \KwResult{Set of cuts $\Partitions_a$}
            
            Choose $x_1$ such that $|A_{x,1}|= 1$ and $|A_{x,1}^\complement|= n-1$.
            
            $\Partitions_a = \{A_{x,1}, A_{x,1}^\complement\}$\par \medskip
        
            i = 1
            
            \While{$A_{x,i}^\complement > a-1$}{
            	Choose $x_{i+1}$ such that $|A_{x,i}^\complement \cap A_{x, {i+1}}^\complement| = a - 1$
            	
            	$\Partitions_a.append(\{A_{x, {i+1}}, A_{x, {i+1}}^\complement\})$
            	
        		i = i+1
            }
            \Return{$\Partitions_a$}
            \caption{Generate the initial set of cuts}
            \label{alg:slicing}
        \end{algorithm}
        \vspace*{0pt}
    \end{minipage}
    \hspace*{0.05\linewidth}
    \begin{minipage}[b]{0.45\linewidth}
        \vspace*{0pt}
        \centering
        \begin{figure}[H]
                \centering
                \includegraphics[]{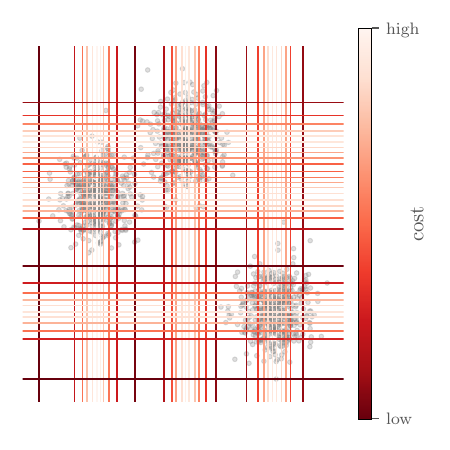}
                \caption{Visualization of resulting cuts.}
                \label{fig:slicing:output}
        \end{figure}
        \vspace*{0pt}
    \end{minipage}
\end{figure}

\subsection{Theoretical guarantees in the feature-based setting}\label{sub:theory_metric_NEW}
We now prove theoretical guarantees for the tangle algorithm in the feature-based setting. As a ground truth model, we use a mixture of Gaussians. All theoretical results build on the pre-processing with axis-parallel cuts. 

\subsubsection{Ground truth model: a simple Gaussian mixture}

Suppose we are given two cluster centers $\mu=(\mu_1,\dots,\mu_d)$ and $\nu=(\nu_1,\dots,\nu_d)$ as points in the $d$-dimensional space. 
For ease of notation, let us assume that $\mu_i \le \nu_i$ for all $i$.
We suppose that our data points $V$ are obtained by sampling $n$ points in total from a mixture of two Gaussian distributions $\mathcal{N}(\mu, \sigma^2 \Id)$ and $\mathcal{N}(\nu, \sigma^2 \Id)$ with equal weight, one with center $\mu$ and one with center $\nu$, and each with variance $\sigma^2\Id$.
Let us denote the bipartitions of $V$ obtained from Algorithm~\ref{alg:slicing} along dimension $j$ as $(A_{j,1},A_{j,1}^\complement),\dots,(A_{j,m},A_{j,m}^\complement)$, where $A_{j,i}\subseteq A_{j,i+1}$.  Let us, for the moment, assume that we sampled all axis-parallel bipartitions, that is, we used $a=2$ for our sampling algorithm.
Moreover, let us denote $x_{j,i}$ as the point in $\R$ for which we obtained $(A_{j,i},A_{j,i}^\complement)$ as $A_{j,i}=\{v\in V\mid v_j<x_{j,i}\}$. 
The set  of all the bipartitions $(A_{j,i},A_{j,i}^\complement)$ for a fixed $j$ is denoted as $\Partitions_j$.

For our proofs, we work in a scenario "in expectation": whenever we need to compute the volume of a set, we use the expected volume rather than the volume induced by the actual sample points. 

\subsubsection{Main results}

We will show that, under favorable conditions, there are two tangles, each pointing to one of the cluster centers $\mu$ and $\nu$, respectively.
Here, the side of a cut $\{A_{j,i}, A_{j,i}^\complement\}$ that \emph{points to $y \in \R^n$} is $A_{j,i}$ if $x_{j,i} > y_j$ and $ A_{j,i}^\complement$ if $x_{j,i} < y_j$; an orientation of cuts \emph{points to $y$} if all the sides of all cuts points to $y$.

The following theorem says that if the distance between the cluster centers is large enough and the agreement parameter~$a$ is small enough, then we find at least two different tangles: one pointing to $\mu$ and one pointing to~$\nu$.

\begin{theorem}[All cluster centers induce distinct tangles]\label{thm:gausstangles}
Let $a < n/12$. If along some axis $j$ there is a local minimum  $(A_{j,i},A_{j,i}^\complement)$, which is a global minimum and whose location $x_{j,i}$ has distance more than $\sigma$ to both $\mu_j$ and $\nu_j$,
then there exist (at least) two tangles $\tau_\mu$ and $\tau_\nu$, where $\tau_\mu$ points to $\mu$ but not $\nu$ and $\tau_\nu$ points to $\nu$ but not $\mu$.
\end{theorem}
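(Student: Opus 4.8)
I would take as the two tangles the restrictions to $\Partitions_\CostUpper$ of the orientations $O_\mu$ and $O_\nu$ that point to $\mu$ and to $\nu$, with the order fixed to $\CostUpper := c(A_{j,i},A_{j,i}^\complement)$, the cost of the distinguished local minimum $P^\ast := \{A_{j,i},A_{j,i}^\complement\}$. By the definition of ``points to'', $O_\mu$ and $O_\nu$ are orientations of the whole cut system $\bigcup_k\Partitions_k$ that agree on every cut except the ones separating $\mu$ from $\nu$, namely the cuts $\{A_{k,i},A_{k,i}^\complement\}$ with $\mu_k<x_{k,i}<\nu_k$. Write $\tau_\mu,\tau_\nu$ for their restrictions to $\Partitions_\CostUpper$. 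Two things then have to be verified: that $\tau_\mu$ and $\tau_\nu$ are distinct and satisfy ``points to $\mu$ but not $\nu$'' (resp.\ the reverse), and that each is a consistent orientation of $\Partitions_\CostUpper$ in the sense of Eq.~\eqref{eq:consistency_cond}.

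For the first point I would first show that $P^\ast$ actually separates $\mu$ from $\nu$, i.e.\ $\mu_j<x_{j,i}<\nu_j$: a cost-minimal local minimum of the axis-$j$ cost whose sampling point lies at distance more than $\sigma$ from both $\mu_j$ and $\nu_j$ must sit in the low-density region strictly between the two modes, the only competing possibility -- a local minimum inside an outlying coordinate tail -- being excluded because the genuine inter-modal valley is itself a cheaper local minimum (this is where Gaussian tail decay and the bound $a<n/12$ enter). Granting this, $P^\ast\in\Partitions_\CostUpper$ is a $\mu$--$\nu$ separating cut, so $\tau_\mu$ contains the $\mu$-side of $P^\ast$, which by construction does not point to $\nu$, and symmetrically $\tau_\nu$ contains the $\nu$-side, which does not point to $\mu$; in particular $\tau_\mu\neq\tau_\nu$, and (almost surely the sampling points avoid $\mu_k$ and $\nu_k$, so) both are well-defined orientations.

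The core of the argument is consistency, and here the hypothesis that $P^\ast$ has lowest cost among all local minima is used to pin down $\Partitions_\CostUpper$: a cut of cost at most $\CostUpper$ is either (i) so unbalanced that its smaller side is a short coordinate tail, or (ii) lies on axis $j$ inside the same inter-modal gap as $P^\ast$ and near it -- because any axis-parallel cut slicing through the bulk of either Gaussian costs strictly more than the deepest valley, which one lower-bounds using the $\sigma$-separation (such a cut separates many within-cluster, hence similar, pairs) and upper-bounds by $\CostUpper$ via the minimality of $P^\ast$. For a type-(i) cut the $\mu$-side is ``everything except a short tail'', and for a type-(ii) cut it contains essentially the whole $\mu$-cluster, of size about $n/2$; hence the intersection of any three oriented cuts in $\tau_\mu$ arises from a set of size $\approx n/2$ by deleting at most three short tails, and bounding the tail lengths (again from $\CostUpper$ being ``valley-sized'') together with $a<n/12$ yields the required $\ge a$ common points. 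The same reasoning with the roles of $\mu$ and $\nu$ swapped handles $\tau_\nu$. I expect the main obstacle to be exactly this classification of $\Partitions_\CostUpper$: making the comparison ``slicing a Gaussian along an axis is costlier than the inter-modal valley'' quantitative in terms of the model parameters and the $\sigma$-separation, and tracking tail lengths carefully enough that the crude threshold $a<n/12$ really does suffice.
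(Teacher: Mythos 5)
Your skeleton matches the paper's: fix the order at $\CostUpper = c(P^\ast)$ for the distinguished local minimum $P^\ast$, take the two orientations induced by $\mu$ and $\nu$, observe that every other cut in $\Partitions_\CostUpper$ has $\mu$ and $\nu$ on the same side so the two orientations differ exactly on $P^\ast$ (which, by unimodality of each Gaussian tail, must lie strictly between $\mu_j$ and $\nu_j$ --- no tail-decay or $\agree<n/12$ argument is needed for that step), and then verify consistency. The gap is in the consistency verification, which is the heart of the proof. Your classification of $\Partitions_\CostUpper$ into ``cuts whose smaller side is a short coordinate tail'' plus ``cuts on axis $j$ near $P^\ast$'' is not attainable from the hypotheses. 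The $\sigma$-separation hypothesis only bounds the density at $x_{j,i}$ from above by $2\phi(1)\approx 0.48$ (in units of a single component's peak $\phi(0)\approx 0.40$), and the density at \emph{any} point on the outer side of a mode of another axis $k$ is at least $\phi(0)/2\cdot 2=\phi(0)$ times the mixture weight, i.e.\ at least $\tfrac{1}{2}\phi(0)\approx 0.20$. Since $\tfrac{1}{2}\phi(0)$ never exceeds $2\phi(1)$, nothing prevents $\Partitions_\CostUpper$ from containing a cut that slices through the bulk --- even exactly through the mode --- of a cluster on an axis $k\neq j$ whenever the valley on axis $j$ is shallow ($x_{j,i}$ at distance just over $\sigma$ from both modes). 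Such a cut removes up to half of the $\mu$-cluster, not a ``short tail'', so the additive bound ``$n/2$ minus three tails'' can be vacuous ($n/2-3\cdot n/4<0$), and even the purely multiplicative bound restricted to $\mu$-points gives only $(1/2)^3\cdot n/2=n/16<n/12$.

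The paper's proof closes exactly this hole with a two-case analysis that you would need to reproduce. If the three cuts lie on three distinct axes, it counts \emph{all} $n$ points (each chosen side contains at least $n/2$ of the full sample, not just of the $\mu$-cluster) and uses independence of the coordinates to get $n/8>\agree$. If two cuts share an axis $k$, it uses the hypothesis that $x_{j,i}$ is $\sigma$-far from \emph{both} modes to argue that at least one of the two must be sampled at distance more than $\sigma$ from $\mu_k$ (a point within $\sigma$ of both $\mu_k$ and $\nu_k$ has density above $2\phi(1)$, hence cost above $\CostUpper$); that cut keeps at least $84\%$ of the $\mu$-cluster, the other same-axis cut keeps at least half, inclusion--exclusion gives $0.17n$, and the independent third cut at worst halves this to $0.085n>n/12>\agree$. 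Your proposal contains neither the independence argument nor the same-axis refinement, and the threshold $n/12$ is calibrated precisely to the constants these two cases produce, so the proof as sketched does not go through.
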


The following theorem states that there are no spurious extra tangles if $a$ is chosen large enough, whereas this bound becomes lower the further apart $\mu$ and $\nu$ are.

\begin{theorem}[All tangles point to distinct cluster centers]\label{thm:tanglesgauss}
Let $q$ be at most the fraction of points from $\nu$ at distance $\dist$; $q \leq (1 + \operatorname{erf}(-\dist/(2\sqrt{2\sigma^2})))/2$. If there exists a dimension $j$ where $\dist = \abs{\mu_j-\nu_j}$ is large enough, that is $\dist>2\sigma$, then for $a > n\cdot (0.42q+0.056)$, every tangle points to either $\mu$ or $\nu$.
\end{theorem}

The bound in Theorem~\ref{thm:tanglesgauss} meets the one from Theorem~\ref{thm:gausstangles} for $\dist \gtrsim 3,03 \sigma$. In practice, we observe that the bounds are not tight, and the range in which we can choose the agreement parameter $\agree$ is much larger. We do not investigate the range for which the algorithm returns the perfect results; in practice we found the agreement parameter to be easy to choose. Usually, a rough estimate of the smallest cluster we want to discover suffices.

\subsection{Experiments on synthetic datasets}
\label{sub:metric:synthetic}

In this section, we run experiments on a simple instance of a mixture of Gaussians, as the one shown in Figure~\ref{fig:cool_heatmap}. To generate an initial set of bipartitions, we use the slicing Algorithm~\ref{alg:slicing} described in Section~\ref{sub:metric:setup}. As a cost function, we use 
 
\begin{equation}
    c(\{A, A^\complement\}) = \sum_{v \in A, u \in A^\complement} -||v - u||.
\end{equation} 

To make the different methods comparable, we consider a challenging clustering task in which no method predicts the ground truth clusters. We assess their performance via the normalized mutual information between predicted and actual clusters. 
We experiment on simple instances of a mixture of Gaussians in $\mathbb{R}^2$ with $n=6,000$ points and $\nummindsets=4$ clusters as the one visualized in Figure~\ref{fig:cool_heatmap}. All results are averaged over ten random instances of the model.
We compare tangles to the $k$-means clustering algorithm as implemented in \emph{sklearn}. We consider the agglomerative method of average linkage and a divisive method as hierarchical methods. For the latter, we iteratively use spectral clustering to split the cluster with the largest number of points into two clusters.
All baseline algorithms get the correct number of ground truth clusters as an input parameter; tangles do not need this  --- they only get a rough lower bound on the size of the smallest cluster, specified by the agreement parameter $\agree$. \\

\begin{figure}[tb]
\centering
\begin{minipage}[t]{0.45\linewidth}
  \centering
  \includegraphics[width=\textwidth]{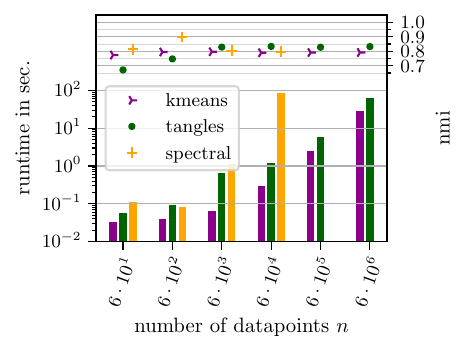}
    \caption{Performance and runtime of tangles with $\numpartitions=40$ cuts on a mixture of 4 Gaussians in two dimensions are competitive to two baselines. Runtime is shown as bar plots, and nmi performance with markers. We ran each algorithm for at most 1 hour; spectral clustering was too slow for more than 60.000 data points.}
    \label{fig:time_vs_samples}
\end{minipage}\hspace{0.05\textwidth}
\begin{minipage}[t]{0.45\linewidth}
  \centering
  \includegraphics[width=\textwidth]{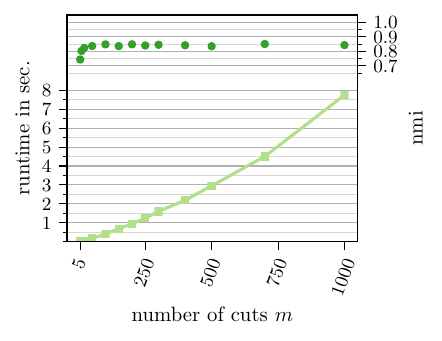}
    \caption{Computation time for the tangle search tree on a mixture of 4 Gaussians in two dimensions with $\numobjects=6,000$ data points. Time grows less than cubic in the number of cuts, and performance quickly saturates.}
    \label{fig:time_vs_cuts}
\end{minipage}
\end{figure}

\subsubsection{Computing tangles is fast}

On the mixture of Gaussians, tangles perform comparable to $k$-means --- although we used a straightforward cut-finding algorithm. Spectral clustering performs consistently well but is significantly slower, as shown in Figure~\ref{fig:time_vs_samples}.

\begin{wraptable}{r}{5cm}
\vspace*{-0.2cm}
\setlength{\tabcolsep}{4pt}
\centering
\begin{small}
\begin{tabular}{lc}
\toprule
Dataset &  \makecell{Mixture of\\Gaussians}\\
\midrule
Linkage         &  0.664\\
Divisive        &  0.820\\
$k$-Means       &  0.797\\
Spectral        &  0.805\\
\midrule
Tangles         &  $\boldsymbol{0.829}$\\
\bottomrule
\end{tabular}
\end{small}
\caption{Performance across all methods measured by normalized mutual information and averaged over ten runs.}
\label{tab:experiments}
\end{wraptable}

While the overall runtime of the algorithmic framework also depends on the pre-processing, computing the tangles itself is linear in the number of data points. In Figure~\ref{fig:time_vs_cuts}, we investigate the complexity concerning the number of cuts. As discussed in Section~\ref{sub:sbmrandomcuts}, the worst-case complexity is cubic in the number of cuts $m$. However, our experiments show that this bound is quite pessimistic because many branches quickly become inconsistent, and additionally, the performance already saturates after a few cuts. Hence, this experiment demonstrates that the number of cuts is not the limiting factor of tangles, similar as we have seen it in the graph cut setting as well.

\subsubsection{Interpretability of cuts translates to interpretable clustering}

If the initial bipartitions are interpretable, so are the resulting clusters. The small number of necessary cuts enhances this effect. We have already seen this in the questionnaire setting, where tangles find a small number of essential questions that can characterize the clustering sufficiently. Now we look at interpretability in the feature-based setting. Axis parallel cuts are interpretable: ``all patients with temperature larger than 39 Celsius''. We can carry over such explanations to tangles in the following way. 

\begin{figure}[tb]
    \centering
    \begin{minipage}[t]{0.45\linewidth}
        \centering
        \includegraphics{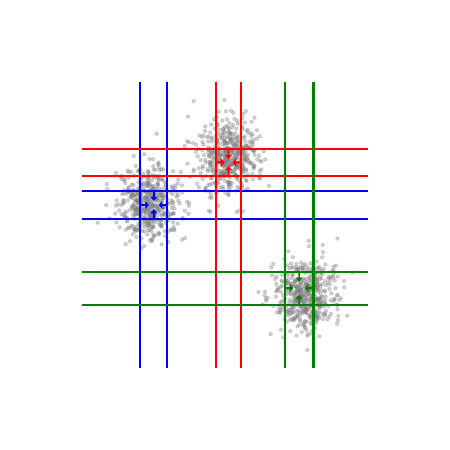}
        \caption{Visualisation of the core of all three tangles. Each color indicates one tangle. All other orientations are induced by the orientation of the core by the consistency condition.}
        \label{fig:interpret:core}
    \end{minipage}\hspace*{0.05\textwidth}
    \begin{minipage}[t]{0.45\linewidth}
        \centering
        \includegraphics{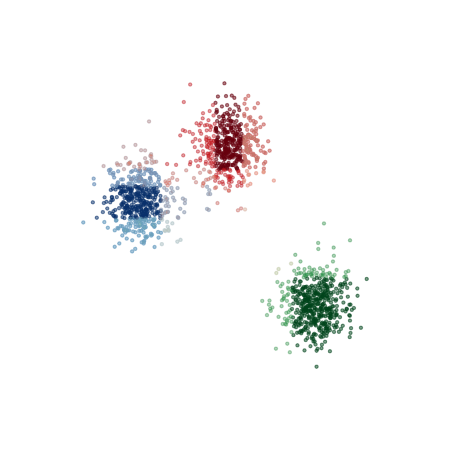}
        \caption{Visualization using the soft clustering output for two dimensional data.}
        \label{fig:interpret:soft_output}
    \end{minipage}
\end{figure}

A tangle gives a consistent orientation to a set of bipartitions. If every cut is interpretable, we can combine these interpretations with an ``and''. In a tangle, there will likely be redundant bipartitions in the sense that two bipartitions point towards the same direction along the same axis, but one is more restrictive. For the explanation, we only consider the most restrictive bipartitions along the axes. We then end up with an interpretation that gives us an interval on each of the dimensions: dimension $d_1$ between $x_1$ and $y_1$ \textbf{and} dimension $d_2$ between $x_2$ and $y_2$ and so on. This interpretation represents the core of our tangle and points to the center of the respective cluster. Based on this, we can explain the cluster; there is a cohesive structure with these properties. Note that it does not follow that points outside of this `core' do not belong to the same structure. This fact arises from the soft definition of tangles.
Based on the tangle search tree, we can develop different approaches to interpret the resulting tangles depending on what we aim to characterize. Using the hard clustering algorithm, we can define the boundary cuts of each of the clusters or find the intervals of indistinct points, that is, points that belong to two (or more) clusters with comparable probabilities, as well as the characterizing cuts that distinguish between two clusters. Figure~\ref{fig:interpret:core} visualizes an interpretation of the core in the 2-dimensional setting.
For data embedded in two dimensions, a heat-map of our soft clustering already gives a visual interpretation of the tangles. Figure~\ref{fig:interpret:soft_output} shows an example for the used mixture of Gaussians.

\subsection{Summary of the feature-based scenario}

We used a naive pre-processing in the feature setting: axis parallel cuts. Even with these simple cuts, tangles perform comparably to the baseline clustering algorithms in terms of clustering accuracy while at the same time predicting the number of clusters in the data. As we can see, tangles can be computed fast, similarly to the $k$-means algorithm, and far faster than spectral clustering. We also showed that tangles could allow for natural interpretations in some cases.

\section{Related work}
\label{sec:related}
Clustering is a vast  field that comes with a multitude of algorithms.
Conceptually, there are some related lines of work that we briefly want to touch on in order to position our framework and the tangles background into the landscape of clustering algorithms.

{\bf Clustering ensembles.} 
Clustering ensemble methods first generate a set of initial clusterings using multiple clustering algorithms and then combine them with a consensus function to form a final clustering \citep{Veg:2011}.
Even though this roughly resembles the tangle framework, there are key differences. 
In particular, ensemble methods typically use ``strong'' clustering algorithms that provide close-to-perfect clusters already.
The ensemble mechanism is then only invoked to make the result more robust against the biases of the individual methods. 
The tangle philosophy is quite different: we use ``quick and dirty'' heuristics to produce the initial cuts, which then get boosted to high-quality clusterings. 

{\bf Soft clustering.} 
Soft clustering relaxes the degree to which objects belong to clusters from unique assignments (hard clustering) to distributions over the clusters, see, for example, \citet{Has:2009}.
Similar to soft $k$-means, we can interpret tangles as representing clusters by abstract cluster centers, which we can convert into soft clustering.
While soft clustering algorithms specify the number of clusters in advance, tangles indirectly specify a cluster's minimum size through an agreement parameter.

{\bf Hierarchical clustering.} 
In hierarchical clustering, we cluster a dataset at different granularity levels, often represented by a dendrogram \citep{Mur:2017}. 
The tangle search tree also encodes a hierarchical structure, and its generation strongly resembles divisive approaches.
\citet{Fluck} even showed that in one specific setup considering all possible cuts, tangles and single linkage hierarchical clustering coincide. In general, however, there are fundamental differences between the two: the subdividing cuts are given and not computed recursively, the nodes represent tangles and not subsets, and the consistency condition provides a natural stopping criterion.

{\bf Interpretable and Explainable Clustering.}
To date, there is little work on explainable unsupervised learning as clustering. While there are papers considering decision trees for explainable clustering \citep{interpretable_unsupervisedTrees, interpretable_optimalTrees}, most work is empirical without theoretical analysis of the performance. Recently \citet{explainable_kmeans} and \cite{ExKMC} developed an algorithm for explaining the $k$-means clustering, approximating the performance in practice and theory. They do so by combining unsupervised and supervised learning first to construct a clustering and then, using the output as ground truth, explain the result using decision trees. In contrast, tangles come with an inherently interpretable output as long as the initial cuts are interpretable. For geometric data, using only axis-parallel cuts, tangles will directly return an interpretable output without further post-processing or approximating the clustering.

{\bf Theoretical results on clustering}. 
Few clustering algorithms and generative models admit consistency guarantees: spectral clustering is among them, and its behavior on stochastic block models is well understood, see \citet{abbe_community_2017} and references therein. 
Linkage algorithms typically are not statistically consistent \citep{consistency_of_single_linkage} and thus do not necessarily discover the ground truth hierarchy, but some of them admit guarantees on approximations \citep{generalized_density_clustering, consistent_procedures_for_cluster_tree_estimation, approx_bounds_for_hierarchical_clustering, hierarchical_clustering_beyond_worst_case}.
Guarantees for $k$-means are complex because of local optima, but with careful initialization, some approximation guarantees exist \citep{kmeans++}. Moreover, a large bulk of theory literature exists on Gaussian mixture clustering. To only mention a part of it, see \citet{gau_mix5, gau_mix1, gau_mix2, gau_mix3, gau_mix4} for learning algorithms, convergence rates, and theoretical performance guarantees, see \cite{gau_mix6, gau_mix7} for theoretical, and complexity bounds. See \cite{gau_mix8} for optimality guarantees of kernel methods in high dimensions.

{\bf Mathematical background on tangles}.
Tangles were initially conceived in the Graph Minors Project of \citet*{GMX} as a tool to measure how `tree-like' a graph is. In the original sense, Tangles are orientations of bipartitions of the \emph{edge set}, which represents a hard-to-separate area inside a graph, making it very unlike a tree.
This interpretation led to an abstraction of this notion, introducing the concept of tangles first to other contexts such as matroids \citep{MatroidTangles}, and later resulted in the development of an abstract framework that unifies the notion of tangles from various contexts \citep{TangleDualityI, ProfilesNew}.
\citet*{Grohe16} performed a detailed survey on tangles for connectivity functions, a large and essential subclass of the more general tangles mentioned above.
\citet*{GroheSchweitzerTangleAlg} created a sophisticated algorithmic framework and data structure for efficiently computing these tangles along with the corresponding tree of tangles.
These works developed orthogonally to the ideas of \citet*{diestel_tangles_2017}, leading up to \citet*{TanglesSocial} and \citet*{TanglesParadigm}. 
Diestel focuses on making the notion of tangles applicable to as wide a range of settings as possible. To do so, he suggests softening some of the mathematical requirements of tangle theory. Their rigorous mathematical results may no longer apply to Diestel's tangles, which nevertheless aim to capture the notion of clusters.
Our work in this paper follows the impulse of these latter ideas, taking them into a machine-learning context.
In particular, the approach of sampling cuts, where the mathematical theory demands to consider all cuts up to a specific order, fits into this picture of approximating an underlying, more rigorous mathematical object.

\section{Conclusion}
In this paper, we introduce tangles to the machine learning community. 
This required a significant effort to simplify general concepts to convert the mathematical theory of tangles to a practical framework. 
We provide a first framework that works in practice and give provable guarantees in three statistical models. 
The general concept of ``pointing towards a cluster'' is a flexible formulation of a generic clustering problem. It only requires a set of cuts of the dataset and some notion of similarity between the objects. Thus tangles are directly applicable to many datasets without a workaround like building a nearest-neighbor graph or embedding the nodes.
Although we convert the output to a hard clustering for the numeric evaluation, it is of a more general soft and hierarchical nature.

Note that we do not claim that tangles outperform every other algorithm out there. However, we are intrigued by their flexibility and potential. 
We proved performance guarantees in three very different setups: the questionnaire setting, the stochastic block model setting, and a Gaussian mixture setting. We are aware that stronger guarantees can be proved for individual algorithms in each setting. However, we are unaware of any algorithm for which guarantees can be proved in many different scenarios. A similar statement holds for our experiments. What is impressive is that the tangles framework combines many desirable properties that none of the baseline methods can provide at the same time: it is accurate without making assumptions on the shape of the clusters (as spectral clustering), it is fast (as $k$-means), it generates a hierarchy (as average linkage) and can be post-processed to a soft clustering, and it entails natural explanations. 

We consider this work the first proof of concept that establishes tangles as a promising tool for clustering. More future work is needed to explore the full potential. 
There are many open questions for future research. On the algorithmic side, what is the optimal interplay between the initial cuts, the tangle algorithm, and the post-processing? 
On the theoretic side, the most intriguing question is whether it is possible to formalize the intuition that tangles provide a generic tool to convert many ``weak'' cuts to ``strong'' clusters, as is the case for boosting in classification.

\section*{Acknowledgments}
This work has been supported by the German Research Foundation through the Cluster of Excellence “Machine Learning – New Perspectives for Science" (EXC 2064/1 number 390727645), the BMBF Tübingen AI Center (FKZ: 01IS18039A), and the International Max Planck Research School for Intelligent Systems (IMPRS-IS)

\bibliography{bibliography}

\renewcommand{\thesection}{\Roman{section}}
\setcounter{section}{0}

\section*{Appendix}\label{appendix}

\section{Background on tangles}\label{app:background}

\subsection{Translation between our terminology and the one used in graph theory}\label{app:sub:translation}
Tangles originate in mathematical graph theory, where they are treated in much more generality than what we need in our paper (cf. \citet*{Die:2018} for an overview). For our work, we condensed the general theory to what we believe is the essence of tangles needed for machine learning applications. 
Since our setting is much simpler than the general one in mathematics, we also opted for more straightforward terminology closer to the machine-learning community's language. 
Table~\ref{tab:translation} provides a glossary of the correspondences between our terminology and that of \citet{Die:2018}.

\begin{table}[ht!]
\centering\renewcommand{\arraystretch}{1.5}
\begin{tabular}{p{.28\linewidth}cp{.48\linewidth}}
    \textbf{This paper} & \hfill & \textbf{Mathematical literature, e.g.\ \citet{Die:2018}} \\
    \hline
    cuts/partitions & instance of & separations \\
    side of a cut/oriented cut & instance of & oriented separation \\
    cost of a cut & instance of & order of a separation \\
    set $\Partitions$ & instance of & separation system $S$ \\
    \hline
    $O$ is consistent &corresponds to & $O$ avoids {$\mathcal{F} = \{\{A_1, A_2, A_3\} \mid |\bigcap_{i=1}^3 A_i| < \agree\}$} \\
    $O$ is a $\Partitions_\CostUpper$-tangle & corresponds to & $O$ is an $\mathcal{F}$-tangle of $\Partitions_\CostUpper$ \penalty-1000 (for $\mathcal{F}$ as directly above)\\
    
    {\it not used: condition}\newline
    $A \cap B \neq \emptyset \; \forall A,B\in O$ & --- & $O$ is consistent \\

\hline
    tangle search tree {\it (the output of our algorithm) }& --- & \emph{not used} \\
    {\it not used} & --- & tree\ of\ tangles 
    {\it (a tree-like set of cuts that partitions the tangles; similar to a space-partition-tree)}
    
\end{tabular}
\caption{Translation table between this paper and the mathematical literature}\label{tab:translation}
\end{table}
Furthermore, whereas we denote by $\Partitions_\CostUpper$ the set of all cuts from $\Partitions$ of costs \emph{at most} $\CostUpper$, in the mathematical literature, $S_k$ usually denotes the set of all separations from $S$ of order \emph{less than} $k$.
 
Note that throughout the literature, there are multiple ways to denote an oriented cut. \citet{Die:2018}, for example, comes at the topic from the angle of graph theory, closely following \citet{GMX}, and has the convention of always listing both sides $(A^\complement, A)$, which is interpreted as \emph{pointing from $A^\complement$ towards $A$}. 
On the other hand, coming from matroid theory, \citet{GroheSchweitzerTangleAlg}, among others, have the custom of representing a tangle as a set of all the \emph{small sides}, the $A^\complement$s in our terminology, which means the element-wise complements of the tangles in this paper would represent tangles in their sense.

\subsection{The magic number three in the consistency condition.} \label{app:sub:tangles_triples}
The consistency condition in Eq.~\eqref{eq:consistency_cond} is defined with exactly three cuts.
Three is the lowest number which makes the \emph{profile property} in mathematical tangle theory come true: if $O$ is a tangle of $\mathcal{P}_\Psi$ then $A, B\in O$ implies that $(A\cap B)^c\notin O$.
This property is central to tangle theory; for example, in the proof of one of the two central theorems from tangle theory,  the tree-of-tangles theorem \citep{ProfilesNew}. As a corollary, this theorem gives us a bound on the number of tangles. If $\mathcal{P}$ is the set of all cuts of some $V$, and we consider sets of less than three, we could potentially find up to $2^{2^{|V|}}$ many tangles. If we consider sets of three, then there can be at most $2a^{-1}|V|$ many $\mathcal{P}_\Psi$-tangles for any $\Psi$.
Also, the tangle-tree-duality theorem \citep{TangleDualityI}, which gives a witnessing dual, treelike structure for the absence of a tangle of a fixed set of separation, is richer when we consider sets of three instead of just pairs:
If we consider pairs, this treelike structure of the separations system can only take the shape of a path, which is very restrictive.
When considering sets of three, the nodes in this structure tree can have degrees of up to three; that is, they may branch.
Lastly, increasing the size of the considered sets to more than three, increases computational complexity without introducing new mathematical behavior.\\\\

\newpage

\section{Algorithmic Details}\label{app:alg_details}

We introduced our basic algorithmic framework in Section \ref{sec:algorithms}. In the following, we give details on the tangle algorithm and one specific post-processing. We additionally give insight into some algorithmic questions arising when applying tangles in practice and how these influence algorithmic decisions.

\subsection{Details on Tangle Algorithm}\label{app:sub:algorithm}

In Algorithm~\ref{alg:tangles_main}, we present the main loop for the algorithm. Supposed we are given an agreement parameter $a$ and a family of cuts $\Partitions = \{P_i = \{A_i, A_i^\complement\}\}_i$ each of which has cost $\CostUpper_i \in \R$. We order $\Partitions$ according to their cost $\CostUpper_i$, and then we iteratively try to add all the cuts.
We terminate when either we cannot add a cut consistently or when we run out of cuts.

\underline{Consistency check:} For each tangle $\tau$ that we have identified as non-maximal, we try to add both orientations of a new cut $P$ to $\tau$. Each orientation produces a potential new tangle $\tau'$. We now need to check if $\tau'$ is consistent. We can check the consistency of $\tau'$ by checking the consistency of the \textit{core} of $\tau'$, that is the set of most restricting bipartitions: the set of all the inclusion minimal sets in $\tau'$. 
This is sufficient since if we have $A, A', B, C$ such that $A' \subseteq A$,
then 
\begin{equation}
    \label{eq:core_tangle}
    \abs{A' \cap B \cap C} \geq a \implies \abs{A \cap B \cap C} \geq a
\end{equation}

Therefore, if a tangle's core is consistent, so is the tangle.
If $\tau'$ is consistent, we add it as a left or right child in the tree. Which side depends on the orientation that created $\tau'$, left for $A$ and right for $A^\complement$. Subsequently, we marked that it was possible to extend $\tau$. If neither orientation can be added, we cannot extend $\tau$ and return.

In Algorithm~\ref{alg:add_to_tangle} we show how to add a new orientation $A$ to a tangle $\tau$. To do so, we first add $A$ to the specification of $\tau$, then update the core of $\tau$ if necessary.

\begin{algorithm}
    \KwData{Node $\tangle$ and new orientation $A$}
    \KwResult{Child node $\tangle_\text{new}$}
    
    $\tangle_\text{new} = \{A\} \cup \tangle$\;
    
    \For{$C$ in core($\tangle$)}{
        \uIf{$C \subseteq A$}{
            core($\tangle_\text{new}$) = core($\tangle$) \;
            \Return{$\tangle_\text{new}$}
        }
        \ElseIf{$A \subset C$}{
            remove $C$ from core($\tangle$)\;
        }
    }
    core($\tangle_\text{new}$) = $\{A\}~\cup$ core($\tangle$) \;
    \Return{$\tangle_\text{new}$}
    \caption{add orientation to tangle}
    \label{alg:add_to_tangle}
\end{algorithm}

We postprocess the tree as described in Section~\ref{sub:post_processing}. The final output of the post-processing approach is the attributed tree $(\tree^\ast, (p_t)_t)$, which is a ``soft'' version of a dendrogram, pseudocode is given in Algorithm~\ref{alg:postprocess}. Since every node attribute $p_t\in[0,1]^\numobjects$ consists of probabilities for every object, the tree can be visualized with heatmaps as done in Figure~\ref{fig:cool_heatmap}.

\begin{algorithm}[t]
    \KwData{Tangle search tree $\tree$, weighting function $h: \mathbb{R} \to \mathbb{R}$, prune depth $\pruned \in\mathbb{N}_0$}
    \KwResult{Condensed tangle search tree $\tree^\ast$}
    \tcc{delete every node that has less than two children}
    \For {node in $\tree$}{
        \If{internal node has less than two children}{
            delete node from $\tree$ \; 
        }
    }
    \tcc{prune branches that are shorter than $\pruned$}
    \While{not done}{
        \For{leaf in $\tree$}{
            \If{original distance to parent is shorter than $\pruned$}{
                remove leaf\;
                remove or contract parent if necessary
                \tcp*{every node needs to have exactly two children}
            }
        }
    }
    \tcc{bottom up propagate tangle information to get set $\Partitions$ for each tangle $\tangle$}
    \While{root is not reached}{
        \eIf{all children of a node $\tangle$ orient a cut $P$ the same}{
            track this information for the parent node \; 
        }{  
            add $P$ to $\Partitions(\tangle)$ \; 
        }
    }
    compute $p_\tangle^\text{(right)}(v)$ according to Eq.~\ref{eq:object_edge_prob} for every $v$ and every $\tangle$ \;
    \tcc{derive probabilities of belonging to tangle $\tangle$ by summing up the probabilities along the path from root to $\tangle$}
    \Return{\text{($\tree^\ast, (p_\tangle)_\tangle$)}}
    \caption{post-processing the tangle search tree}
    \label{alg:postprocess}
\end{algorithm}

\subsection{Translating theory to practice}
\label{app:sub:theory_to_practice}

For transparency we want to mention discrepancies between the theoretical object and tangles as algorithmic pipeline. There are two main changes when moving from the theoretical object to practise.

\subsubsection{The gap problem}
\label{app:subsec:gap}

\begin{figure}[tb]
    \centering
    \includegraphics[width=0.5\linewidth]{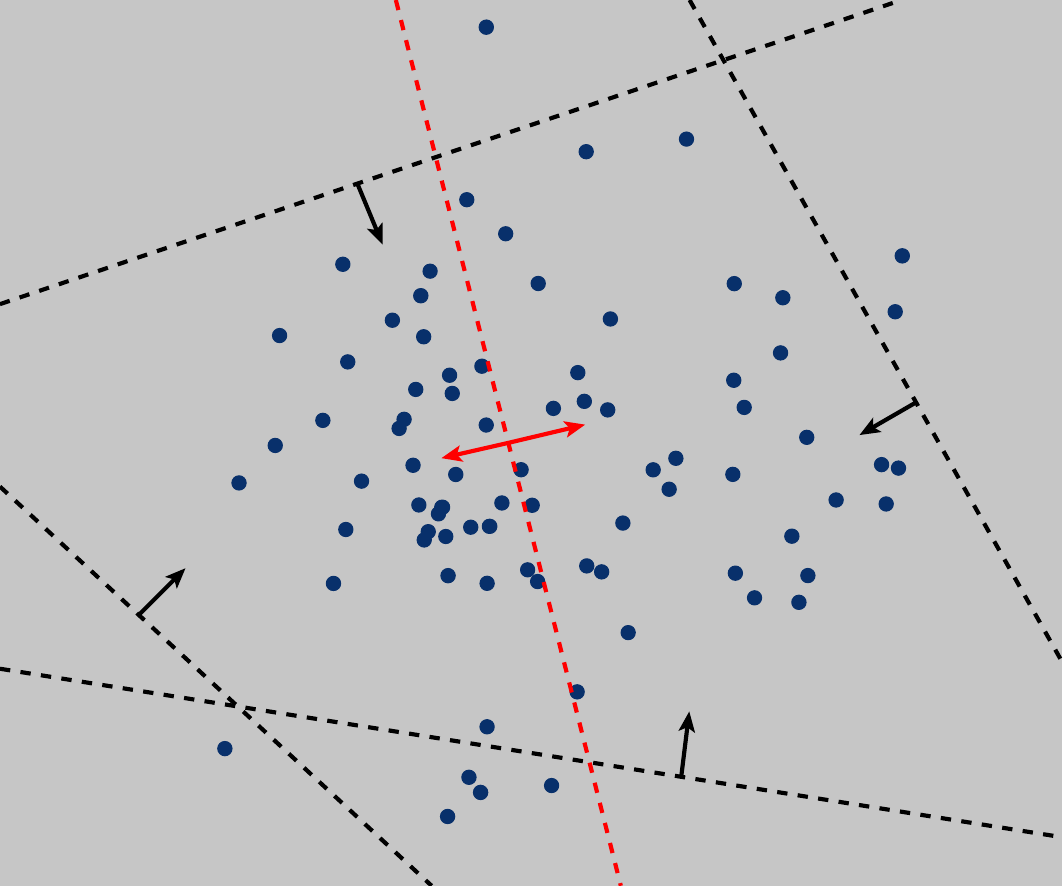}
    \caption{Gap problem. This figure depicts the intuition behind a large gap (here between the cheap bipartitions in black and the expensive bipartition in red). The red bipartition is orientable in both directions and would result in two tangles implying two clusters.}
    \label{fig:gap_problem}
\end{figure}

In theory, we consider all possible cuts for the set of initial cuts, and the agreement parameter is chosen small, only to assure tangles point to {\it something}. Let us consider a complete graph in which tangles are supposed to detect the existence of only one cluster (the whole dataset).

We then have the following intuition for tangles on $\Partitions_\CostUpper$ (with suitable $\agree$):
For small $\CostUpper$, all considered cuts are unbalanced, and the consistency condition forces every cut to orient towards the larger side. 
As $\CostUpper$ increases, so does the balance of the cuts.
At some point, $\Partitions_\CostUpper$ starts to include cuts $P=(A, A^\complement)$ with a balance of $\abs{A}=\agree$ and $\abs{A^\complement}=\numobjects-\agree$ points that can be oriented both ways. However, the consistency condition prevents an orientation towards the smaller side $A$ because $\Partitions_\CostUpper$ also contains cuts that subdivide $A$.
Eventually, even an orientation towards the larger side will become inconsistent. The procedure stops, and the tangle search tree is a path, as desired.
However, in practice, it is infeasible to consider all possible cuts. Instead, some procedure generates a small subset of initial cuts $\Partitions$, which is supposed to contain all relevant cuts for the cluster structure.
One particular problem arises when there is a large {\bf gap in the cost} of the cuts, indicating the next cut through more dense regions. The extreme case of random cuts illustrates this: Most random cuts split a dataset roughly in half. Although they cannot contain information about the cluster structure, we can orient the first random cut in the sorted list of cuts both ways for reasonable $\agree$. The second random cut roughly halves both sides of the first cut, yielding four subsets of about $\numobjects/4$ data points, which are consistent orientations for $\agree<\numobjects/4$.
The critical difference between considering all possible cuts and just a subset is the presence or absence of other cuts: although both sets of cuts contain these high-cost random cuts, the former contains additional cuts that prevent them from being oriented consistently. When we consider all possible cuts, we rely on the fact that previous cuts (and the consistency condition) force specific orientations. In practice, we wish to derive an algorithm that returns tangles pointing towards the same structure as the ones given all possible cuts.\\

{\bf Fine tuning parameters is an undesired solution.} We could avoid problems caused by the gap problem by tuning the parameters $\agree$ and $\CostUpper$. Setting $\CostUpper$ low enough will avoid using expensive cuts, and we will immediately discard random or `bad' cuts. By tuning the agreement parameter $\agree$, we can indirectly influence the size of the clusters we can discover. Setting this parameter large enough enforces larger clusters and avoids splitting into several smaller ones. 
We can also stop searching the tangle search tree as soon as we discover the desired number of clusters which can also avoid subdividing large structures randomly. However, usually, we know little about our data; we might not know how small the smallest cluster is nor the number of clusters we are looking for. We also consider this as one of the strengths of tangles; they require few parameter choices, so in practice, we try to avoid options that require guessing the properties of the data.

{\bf Dealing with a gap by pruning.} We figured that often in practice, these ``spurious'' consistent orientations, as shown in Figure~\ref{fig:gap_problem}, become inconsistent after only a few additional cuts. To give some intuition, consider a set of random bipartitions. We expect each bipartition to roughly split our data in half, and these random samples are likely to be maximally dissimilar. So for each new cut, we quickly reduce the number of points within the intersection of every set of three cuts to drop below the agreement parameter. This phenomenon is something we can easily detect. We implement and explain this method in Section~\ref{sub:post_processing}.
While this proved to work well in practice in some cases, for a very unbalanced cluster structure, we might still get long paths in the tangle search tree even for a set of random cuts, and it requires us to set a parameter, the pruning depth. Luckily, this parameter often is easy to choose after looking at the tangle search tree.

{\bf Sensible sampling to avoid a gap.} Random sampling is a naive approach to generating the initial set of bipartitions. Especially considering the gap problem as described above, random cuts are not sufficient. Trivially the algorithm works well if we only consider cuts that already split the data well, and the aggregation using tangles then yields good results. We need to be more careful if we consider an initial set with bad cuts. Ideally, we want to sample a rich set of initial cuts that, in the end, get oriented the way they would be if we considered all the cuts in between. 
We want a set of initial cuts that covers the set of all possible bipartitions in a way such that significant gaps do not appear (or are very unlikely). In this case, we try to mimic the theoretical setting but reduce the number of bipartitions dramatically, making it computationally feasible, even fast. In Section \ref{sub:theory_metric_NEW}, we introduce one naive approach to generate such an initial set and prove that tangles resemble clusters in the considered setting.

\subsubsection{The practical cost function}\label{app:subsec:cost}

\begin{figure}[ht]
    \centering
    \includegraphics[width=0.5\linewidth]{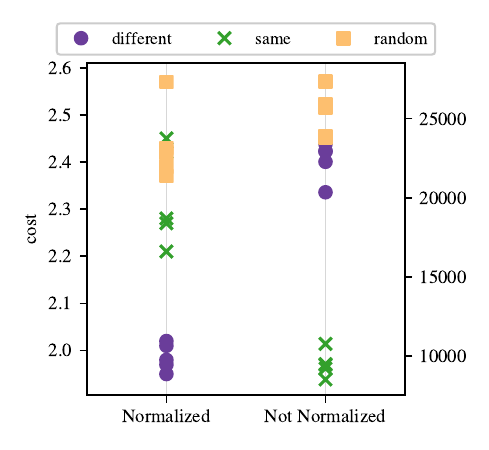}
    \caption{
    Order of the questions indicated by the costs with and without normalization. Normalizing the cost makes trivial questions (same answers) more expensive than informative questions (different answers) and starts with informative questions. The questionnaire consists of $\nummindsets=2$ mindsets and $\numpartitions=15$ questions. Five are answered differently by the mindsets, five are answered the same, and the remaining five are answered randomly by everyone. The marker indicates the question type.}
    \label{fig:normalization_mindset}
\end{figure}

In tangle theory, the cost function does not consider the balance of the sides. For the cut value in the graph setting, the cheapest cuts are often very unbalanced, only splitting individual objects from the whole set and trivially get oriented towards their larger side.
The same thing happens in practice. This does not prevent tangles from detecting clusters but places an unnecessary computational burden on the tangle algorithm.
Depending on the cost function, one natural countermeasure is to consider the balance of a cut $P=\{A,A^\complement\}$ by normalizing the cost with the factor $1/\left(\abs{A}\left(n-\abs{A}\right)\right)$.
Doing so makes the unbalanced cuts more expensive; tangles are built on informative cuts immediately.

For an illustrative example, consider the questionnaire model from Section~\ref{sub:theory_mindset} with $\nummindsets=2$ mindsets and small noise $p>0$. Five questions are answered differently by the mindsets, five questions are answered the same, and another five questions are answered randomly by every person independently of the mindset.
The questions with the same answer represent trivial, unbalanced cuts because they distinguish between persons that answer like their mindset (probability $1-p$) and persons that answer differently (probability $p$). Only the questions with different answers are informative of the cluster structure.
Figure~\ref{fig:normalization_mindset} shows the cost of the questions with and without normalization. 
As expected, the unnormalized cost function places the trivial questions first. On the other hand, the normalized cost function increases their cost, such that the informative questions come first.

A stochastic block model gives another simplified example: two equal-sized blocks with 50 vertices each, $p=0.3$, and $q=0.1$. 
Figure~\ref{fig:normalization_sbm} plots the (normalized) cost of a cut against its quality as measured by the normalized mutual information with the ground truth blocks.
Since the basic idea is to use the cost of a cut as a proxy for its quality in separating clusters, a monotonic relationship between the two would be ideal. Normalizing increases the monotonic relationship strongly as measured by Spearman's rank correlation coefficient $\rho$ \citep{Zwi:1999}. This measure is equal to the Pearson correlation between the rankings induced by the variables and ranges from -1 to +1, where the extremal values indicate a perfect monotonic relationship. Normalizing the cost in Figure~\ref{fig:normalization_sbm} changes this coefficient from $\rho= -0.25$ (left plot) to a significantly stronger correlation of $\rho=-0.90$ (right plot).

\begin{figure}[tbh]
    \centering
    \includegraphics[width=0.39\linewidth]{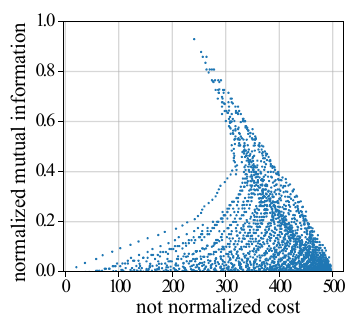}\hfill 
    \includegraphics[width=0.39\linewidth]{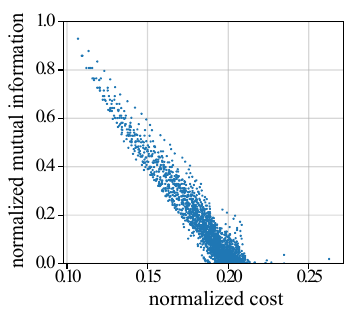}
    \caption{Normalized mutual information, of the cut given the ground truth, plotted against the (normalized) cost of a cut. Normalizing strongly increases the monotonic relationship between cost and quality of a cut.The dataset is a graph sampled from a stochastic block model with two blocks $V_1$ and $V_2$ with 50 vertices each, $p=0.3$, and $q=0.1$. For every $j,l\in\{0,\dotsc,50\}$, one cut $P=\{A,A^\complement\}$ is randomly generated such that $\abs{A\cap V_1}=j,\abs{A\cap V_2}=l$. Each point in the plot corresponds to one cut.}
    \label{fig:normalization_sbm}
\end{figure}

\section{Proofs}\label{app:proofs}

\subsection{Binary Questionnaire}\label{app:sub:proofs_questionnare}

The proof of Theorem~\ref{thm:mindsetseqtangles} is based on the following technical assumption, which excludes that in sampling the random mindsets $\mu_i$ we obtain a degenerate situation:
\begin{assumption}[{\bf Mindsets are not degenerate}] \label{ass:orientations}
If a tangle $\tau \in \{0,1\}^m$ satisfies that for all sets of three $x,y,z\le\numpartitions$ there exists a mindset $\Mindset_i$ such that tangle and mindset agree on all three: $\tau(x)=\Mindset_i(x)$ as well as $\tau(y)=\Mindset_i(y)$ and $\tau(z)=\Mindset_i(z)$, then the tangle $\tau$ is a single mindsets in the data, that is, $\tau=\Mindset_j$ for some $j$.
\end{assumption}
Intuitively, Assumption~\ref{ass:orientations} means we cannot trivially partition the ground truth into more than $k$ tangles, for $k$ ground truth mindsets. We now prove that for fixed $\nummindsets$ and growing $\numpartitions$ the probability of Assumption~\ref{ass:orientations} being satisfied tends to $1$: 

\begin{proposition}[Assumption~\ref{ass:orientations} satisfied with high probability]\label{prop:asswhp}

For $\beta=m/(k2^k)$, Assumption~\ref{ass:orientations} is true with probability at least $1-2^{(1-\beta)\nummindsets}$. 
\end{proposition}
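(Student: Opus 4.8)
The plan is to reduce the probabilistic claim to a deterministic one by conditioning on a favourable event about the $k\times m$ matrix of mindsets, and then to dispatch the combinatorial heart with a short lemma on intersecting set systems.

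First I would set up the reduction. For each question $s\le m$ call its column $\chi_s:=(\mu_1(s),\dots,\mu_k(s))\in\{0,1\}^k$ the \emph{answer pattern} of $s$. I claim it suffices to show: \emph{if every vector of $\{0,1\}^k$ occurs among $\chi_1,\dots,\chi_m$, then Assumption~\ref{ass:orientations} holds.} Since the bits of the $\mu_i$ are independent fair coins, each $\chi_s$ is uniform on $\{0,1\}^k$ and the $\chi_s$ are independent, so the probability that the Assumption fails is at most the probability that some pattern is missing, which is at most $2^k(1-2^{-k})^m\le 2^k e^{-m/2^k}\le 2^k\cdot 2^{-m/2^k}=2^{(1-\beta)k}$, using $1-x\le e^{-x}$, $e^{-y}\le 2^{-y}$ for $y\ge 0$, and $m/2^k=k\beta$. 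This is exactly the stated bound, so everything hinges on the deterministic statement.

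For the deterministic statement I would argue as follows. Assume all patterns occur and let $\tau\in\{0,1\}^m$ satisfy the hypothesis of Assumption~\ref{ass:orientations}. For each $s$ put $A_s:=\{i\le k:\mu_i(s)=\tau(s)\}\subseteq\{1,\dots,k\}$ and let $\mathcal A:=\{A_s:s\le m\}$. The hypothesis says precisely that $A_x\cap A_y\cap A_z\ne\emptyset$ for all $x,y,z$, i.e.\ $\mathcal A$ is $3$-wise intersecting. Using that all patterns occur: for a pattern $v$ and a question $s$ with $\chi_s=v$, the set $A_s$ is $\{i:v_i=1\}$ if $\tau(s)=1$ and $\{i:v_i=0\}$ otherwise; as $v$ ranges over $\{0,1\}^k$ these complementary pairs exhaust all pairs $\{S,\{1,\dots,k\}\setminus S\}$, so $\mathcal A$ meets every such pair, and it cannot contain both members of one (that would produce a triple with empty intersection). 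Hence $\mathcal A$ contains exactly one of $S$, $\{1,\dots,k\}\setminus S$ for every $S$: it is a \emph{transversal of the complementation involution}, of size $2^{k-1}$. The key step is then the elementary lemma that a $3$-wise intersecting transversal must be a star: it is pairwise intersecting, hence a \emph{maximal} intersecting family (a set $X\notin\mathcal A$ meeting every member would force $\{1,\dots,k\}\setminus X\in\mathcal A$, which $X$ does not meet); for $A,B\in\mathcal A$ the set $A\cap B$ meets every member of $\mathcal A$, so by maximality $A\cap B\in\mathcal A$, i.e.\ $\mathcal A$ is closed under intersection; a finite such family contains its total intersection $I=\bigcap\mathcal A$, which is nonempty (as $\emptyset\notin\mathcal A$) and below every member, so $\mathcal A$ is the up-set of $I$, and $|\mathcal A|=2^{k-1}$ forces $|I|=1$. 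Applied to our $\mathcal A$: some $i_0$ lies in all $A_s$, i.e.\ $\mu_{i_0}=\tau$, so $\tau$ is a mindset, which is what the Assumption requires.

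The main obstacle is really only this combinatorial step, and within it the crucial observation is that the event ``all patterns appear'' forces the family of agreement sets to be a transversal of complementation; once that is in hand, the ``maximal intersecting $\Leftrightarrow$ closed under intersection $\Rightarrow$ star'' argument finishes it cleanly. Two small points I would check along the way, though neither is troublesome: the degenerate range where $2^{(1-\beta)k}\ge 1$ (there the statement is vacuous — e.g.\ small $m$, or $k\le 2$), and the case of several questions sharing a pattern (this only enlarges the set of candidate representatives of each complementary pair, and the ``at most one per pair'' argument is unaffected).
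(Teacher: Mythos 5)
Your proof is correct, and while the probabilistic half coincides with the paper's (both condition on a coupon-collector event about the answer patterns of the questions and bound its failure probability by $2^k(1-2^{-k})^m\le 2^{(1-\beta)k}$; the paper conditions on the slightly weaker event that every \emph{partition} of the mindsets is induced by some question, which is all your argument actually uses), the deterministic core is genuinely different. The paper argues by contradiction via an extremal choice: it picks a triple $x,y,z$ minimizing the number of mindsets agreeing with $\tau$ on it, uses the covering event to find a question $x'$ that isolates one such mindset $\mu_1$ by flipping only its answer, and then plays $x'$ against a coordinate $w$ where $\tau\ne\mu_1$ to contradict the minimality. You instead translate the hypothesis into the statement that the family $\mathcal A=\{A_s\}$ of agreement sets is $3$-wise intersecting, observe that the covering event forces $\mathcal A$ to be a transversal of complementation, and prove the clean structural lemma that such a family is maximal intersecting, hence closed under intersection, hence the principal up-set of a singleton --- giving a mindset that agrees with $\tau$ on every question. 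Your route buys a reusable combinatorial lemma, an explicit description of $\mathcal A$ (it is exactly the star at $\mu_{i_0}$), and avoids the somewhat delicate bookkeeping in the paper's exchange argument; the paper's route is shorter to state and needs no detour through set systems. Both yield exactly the stated bound.
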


\begin{proof}

We use a common bound for the coupon collectors problem \citep{motwani_raghavan_1995} to show that every partition of the mindsets is induced by one of the $\numpartitions$ questions with probability at least $1 - 2^{(1-\beta)\nummindsets}$.

For a contradiction we suppose further, that there is some $\tau \in \{0,1\}^\numpartitions$ with the property that for all questions $x,y,z\le \numpartitions$ there is some mindset $\Mindset_i$ such that $\tau(x)=\Mindset_i(x), \tau(y)=\Mindset_i(y)$ and $\tau(z)=\Mindset_i(z)$, but $\tau$ itself is not a mindset.

Let $x,y,z\le \numpartitions$ such that there are as few mindsets $\Mindset_i$ as possible with $\tau(x)=\Mindset_i(x),\tau(y)=\Mindset_i(y)$ and $\tau(z)=\Mindset_i(z)$ and suppose without loss of generality that $\Mindset_1$ does so. Since every partition of the mindsets is induced by some question, there is a question $x'$ such that $\Mindset_i(x')=\Mindset_i(x)$ for all $i\neq 1$ and $\Mindset_1(x')\neq\Mindset_1(x')$.

Now if $\tau(x')\neq \Mindset_1(x')$, then the set $\{x',y,z\}$ would contradict the fact that minimally many mindsets answered $x,y,z$ the same way as $\tau$.

Hence we may assume that $\tau(x')=\Mindset_1(x')$. Now, since $\tau\neq\Mindset_1$ there is some question $w\le \numpartitions$ with $\tau(w)\neq \Mindset_1(w)$. But then there is no mindset $\Mindset_i$ which answered the triple $x,x',w$ in the same way as~$\tau$, which likewise contradicts the choice of $x,y,z$.
\end{proof}

{\bf Proof of Theorem~\ref{thm:blockstangles}.} We will prove the two directions of this statement separately, starting with showing that, with high probability, every mindset gives rise to a tangle:

\begin{lemma}[Mindsets give tangles]\label{lem:mindsetstangles}
Let $\nummindsets, \numpartitions$ be fixed and let $\alpha=\agree/\numobjects$ be the agreement parameter as fraction of $\numobjects$. If $p~<~(\numobjects-\nummindsets\agree)/(3\numobjects)$, then the probability that there is one of the $\nummindsets$ mindsets which does not induce a tangle is bounded above by \[
\nummindsets\numpartitions\exp\braces{-\frac{2\numobjects}{9\nummindsets}(1-\nummindsets\alpha-3p)^2},
\]
which tends to $0$ as $\numobjects$ goes to $\infty$. 
\end{lemma}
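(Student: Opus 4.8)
The plan is to fix one ground truth mindset $\Mindset_i$, show that the orientation it induces on $\Partitions$ -- namely, picking from each cut $\{A_s^0,A_s^1\}$ the side $A_s^{\Mindset_i(s)}$ that agrees with $\Mindset_i$ -- is consistent except on an event of the stated (small) probability, and then take a union bound over the $\nummindsets$ mindsets. The point of the argument is a reduction that replaces the quantifier over all triples of questions in the consistency condition by a quantifier over single questions, which is what keeps the final bound linear rather than cubic in $\numpartitions$.

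First I would carry out this reduction. For a mindset $\Mindset_i$ and a question $s$, let
\[
 C_s^i \;=\; \abs{\{\object\in\Objects_i \mid \object(s)\neq\Mindset_i(s)\}}
\]
be the number of persons in the group $\Objects_i$ that answer $s$ against their mindset. For any three questions $x,y,z$, the intersection of the corresponding oriented sides contains $\Objects_i$ minus the persons of $\Objects_i$ that err on $x$, $y$, or $z$, so a union bound over these three events gives
\[
 \abs{A_x^{\Mindset_i(x)}\cap A_y^{\Mindset_i(y)}\cap A_z^{\Mindset_i(z)}} \;\ge\; \abs{\Objects_i} - C_x^i - C_y^i - C_z^i \;=\; \frac{\numobjects}{\nummindsets} - C_x^i - C_y^i - C_z^i\,.
\]
Consequently, if $C_s^i \le \tfrac13\braces{\numobjects/\nummindsets-\agree}$ for \emph{every} question $s$, then every triple of oriented cuts of $\Mindset_i$ has at least $\agree$ objects in common, i.e.\ the orientation induced by $\Mindset_i$ is a tangle. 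It therefore suffices to bound the probability that some $C_s^i$ exceeds this threshold.

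Next I would apply a concentration inequality. Since the answers are flipped independently across persons, $C_s^i$ is a sum of $\numobjects/\nummindsets$ independent $\{0,1\}$ random variables with mean $p\,\numobjects/\nummindsets$. The hypothesis $p<(\numobjects-\nummindsets\agree)/(3\numobjects)$ is exactly the statement that the threshold lies strictly above this mean, with gap
\[
 t \;=\; \frac13\braces{\frac{\numobjects}{\nummindsets}-\agree} - p\,\frac{\numobjects}{\nummindsets} \;=\; \frac{\numobjects}{3\nummindsets}\braces{1-\nummindsets\alpha-3p} \;>\; 0\,.
\]
Hoeffding's inequality then yields
\[
 \Pr\!\braces{C_s^i > \tfrac13\braces{\numobjects/\nummindsets-\agree}} \;\le\; \exp\!\braces{-\frac{2t^2}{\numobjects/\nummindsets}} \;=\; \exp\!\braces{-\frac{2\numobjects}{9\nummindsets}\braces{1-\nummindsets\alpha-3p}^2}\,,
\]
and a union bound over the $\nummindsets$ mindsets and the $\numpartitions$ questions gives the claimed bound $\nummindsets\numpartitions\exp\braces{-\tfrac{2\numobjects}{9\nummindsets}(1-\nummindsets\alpha-3p)^2}$. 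Since $1-\nummindsets\alpha-3p$ is a fixed positive constant and $\nummindsets,\numpartitions$ are fixed, this tends to $0$ as $\numobjects\to\infty$.

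The argument is largely routine bookkeeping once the reduction is set up; the only genuine choice is how to distribute the slack $\numobjects/\nummindsets-\agree$ among the three questions of a triple. Splitting it into three equal parts is the simplest option and already produces exactly the stated constant (the factor $9=3^2$ in the exponent comes from this split together with Hoeffding). Being more delicate -- e.g.\ only controlling the sum of the three largest $C_s^i$ over all questions -- would require a concentration bound for order statistics and does not appear to improve the exponent, so I would not pursue it.
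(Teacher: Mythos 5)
Your proposal is correct and follows essentially the same route as the paper: the paper phrases the reduction contrapositively (a failing triple forces, by pigeonhole, one question on which fewer than $(2\numobjects_i+\agree)/3$ members of $\Objects_i$ answer as $\Mindset_i$ does), which is exactly your per-question threshold $C_s^i\le\tfrac13(\numobjects/\nummindsets-\agree)$ stated for the complementary count, followed by the same Hoeffding bound and the same union bound over $\nummindsets\numpartitions$ question--mindset pairs. No substantive difference.
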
\par \medskip
\underline{Intuition:} For more questions and more mindsets it gets less likely that every ground truth mindset induces a tangle. The larger the number of questions $m$ we consider the more likely that for one specific question, many (significantly more than $p \cdot n_i$) persons do not agree with their ground truth mindset. In this case the data might not represent the ground truth anymore, so we might not be able to recover it.
If we consider a larger number of mindsets, we increase the number of possibilities to make an error. Also the number of persons per mindset decreases and thus for fixed $\alpha$ will get inconsistent if they become too small.
It is intuitive that the probability of mindsets inducing tangles also decreases with increasing noise $p$.
For a large number of persons $n$ answering the questionnaire the probability of a statistical oilier decreases and so does the probability of mindsets not inducing tangles.\par \medskip

\begin{proof}
If some mindset $\Mindset_i$ is not a tangle then it contains a subset of three questions with intersection at most $\agree$. Hence by the pigeon-hole principle for at least one of these three questions at most $(2\numobjects_i+\agree)/3$ of the $\numobjects_i$ persons in $\Objects_i$ answered that question as $\Mindset_i$ does.

Each person in $\Objects_i$ answers a question as $\Mindset_i$ does with probability $(1-p)$.
We can apply Hoeffding's tail bound on the binomial distribution with success percentage $(1-p)$. With $X$ beeing the random variable counting the number of persons in $\Objects_i$ that answer a fixed question as $\Mindset_i$ does, we find that
\begin{align*}
    \P\left[X\le\frac{2\numobjects_i+\agree}{3}\right]&\le\exp\braces{-2\frac{(\frac{2\numobjects_i+\agree}{3}-(1-p)\numobjects_i)^2}{\numobjects_i}}=\exp\braces{-\frac{2\numobjects}{9\nummindsets}(1 -\nummindsets\alpha-3p)^2}\,,
\end{align*}
Note that we used $\agree=\numobjects\alpha=\nummindsets\numobjects_i\alpha$ here.

Since there are $\nummindsets$ mindsets and $\numpartitions$ questions, by the union bound we obtain that the probability of the event that there is some question for which few persons answer following their mindset is at most
\[ \nummindsets\numpartitions\exp\braces{-\frac{2\numobjects}{9\nummindsets}(1-\nummindsets\alpha-3p)^2}\,. \]
Consequently so is the probability that some mindset is not a tangle.
\end{proof}

\begin{lemma}[No degenerate tangles]\label{lem:tanglesmindsets}
If $\agree=\alpha\numobjects$ and $p<a/n$, then, given Assumption~\ref{ass:orientations}, the probability that there is a tangle which is not a mindset is bounded above by $\nummindsets\numpartitions\exp\braces{-2(\alpha-p)^2\numobjects/\nummindsets}$.
\end{lemma}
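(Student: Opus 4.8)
The plan is to isolate a single ``good event'' $E$ on the random answers, under which \emph{every} consistent orientation is forced to be one of the ground truth mindsets, and then to bound $\Pr[E^\complement]$ by a union bound and Hoeffding's inequality. Recall that under the natural orientation–vector correspondence, the vector $\tau\in\{0,1\}^\numpartitions$ orients question $s$ towards the side $A_s^{\tau(s)}=\{\object\in\Objects\mid\object(s)=\tau(s)\}$. For a group $\Objects_i$ and a question $s$, write $B_{i,s}=\Objects_i\cap A_s^{1-\Mindset_i(s)}$ for the set of persons of group $i$ who answered $s$ \emph{against} their mindset; by the generative model $|B_{i,s}|\sim\mathrm{Bin}(\numobjects/\nummindsets,\,p)$, and all of the $B_{i,s}$ are determined by the answers. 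I would take
\[
E:=\bigl\{\,|B_{i,s}|<\alpha\numobjects/\nummindsets\ \text{ for every }i\le\nummindsets\text{ and every }s\le\numpartitions\,\bigr\},
\]
recalling that $\agree=\alpha\numobjects$, so that $\agree/\nummindsets=\alpha\numobjects/\nummindsets$.

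\textbf{Step 1 (deterministic): on $E$, together with Assumption~\ref{ass:orientations}, every $\Partitions$-tangle is a mindset.} Suppose $\tau$ is a consistent orientation that is not a mindset. By the contrapositive of Assumption~\ref{ass:orientations} there is a triple $x,y,z\le\numpartitions$ such that no mindset agrees with $\tau$ simultaneously on $x$, $y$ and $z$; for each $i$ pick $s_i\in\{x,y,z\}$ with $\Mindset_i(s_i)\neq\tau(s_i)$. Consistency of $\tau$ forces $\bigl|A_x^{\tau(x)}\cap A_y^{\tau(y)}\cap A_z^{\tau(z)}\bigr|\ge\agree$. But this intersection is the disjoint union over $i$ of the sets $\Objects_i\cap A_x^{\tau(x)}\cap A_y^{\tau(y)}\cap A_z^{\tau(z)}$, and each such set is contained in $\Objects_i\cap A_{s_i}^{\tau(s_i)}$, which equals $B_{i,s_i}$ precisely because $\tau(s_i)=1-\Mindset_i(s_i)$. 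On $E$ every $|B_{i,s_i}|<\alpha\numobjects/\nummindsets$, so the triple intersection has strictly fewer than $\nummindsets\cdot\alpha\numobjects/\nummindsets=\alpha\numobjects=\agree$ elements — contradicting consistency. Hence on $E$ no tangle fails to be a mindset.

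\textbf{Step 2 (probabilistic): bounding $\Pr[E^\complement]$.} By the union bound over the $\nummindsets$ groups and $\numpartitions$ questions, $\Pr[E^\complement]\le\nummindsets\numpartitions\,\Pr[\mathrm{Bin}(\numobjects/\nummindsets,p)\ge\alpha\numobjects/\nummindsets]$. The hypothesis $p<\agree/\numobjects=\alpha$ gives $\alpha(\numobjects/\nummindsets)-p(\numobjects/\nummindsets)>0$, so Hoeffding's tail bound for a sum of $\numobjects/\nummindsets$ Bernoulli$(p)$ variables yields $\Pr[\mathrm{Bin}(\numobjects/\nummindsets,p)\ge\alpha\numobjects/\nummindsets]\le\exp\!\bigl(-2(\numobjects/\nummindsets)(\alpha-p)^2\bigr)=\exp\!\bigl(-2\numobjects(\alpha-p)^2/\nummindsets\bigr)$. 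Combining with Step~1, the probability that there is a tangle which is not a mindset is at most $\Pr[E^\complement]\le\nummindsets\numpartitions\exp\!\bigl(-2(\alpha-p)^2\numobjects/\nummindsets\bigr)$, which is exactly the claimed bound.

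The Hoeffding estimate and the arithmetic are routine; the delicate part is Step~1. There I must be careful to (i) invoke consistency on exactly the triple supplied by Assumption~\ref{ass:orientations}; (ii) keep straight that $A_s^{\tau(s)}$ is the side \emph{actually} oriented by $\tau$ under the chosen correspondence, since the identity $\Objects_i\cap A_{s_i}^{\tau(s_i)}=B_{i,s_i}$ hinges on $\tau$ disagreeing with $\Mindset_i$ on $s_i$; and (iii) observe that although $\alpha\numobjects/\nummindsets$ need not be an integer, each $|B_{i,s_i}|$ is an integer strictly below it, so the $\nummindsets$ disjoint pieces still sum to strictly less than $\agree$ and the contradiction is preserved.
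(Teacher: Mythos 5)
Your proposal is correct and follows essentially the same route as the paper: Assumption~\ref{ass:orientations} supplies a witnessing triple on which the putative non-mindset tangle disagrees with every mindset, the triple intersection is controlled group-by-group via the per-question ``answered against the mindset'' counts (your explicit good event $E$ is just the contrapositive packaging of the paper's pigeonhole step), and the bound comes from Hoeffding applied to $\mathrm{Bin}(\numobjects/\nummindsets,p)$ exceeding $\agree/\nummindsets$ together with a union bound over the $\nummindsets\numpartitions$ group--question pairs. No gaps; if anything your Step~1 is written more carefully than the paper's version.
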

\underline{Intuition:} Increasing the number $m$ of questions or the number $k$ of mindsets makes it more likely that there are tangles which do not completely agree with one of the mindsets. Like for Lemma \ref{lem:mindsetstangles}, the larger the number $m$ of questions or the number $k$ of mindsets we consider, the more likely it is that for one specific question and one mindset, significantly more than $p\cdot n_i$ persons from that mindset do not agree with the ground truth of that mindset. This may then result in the existence of a tangle which agrees with one of the mindsets on all but this question, where the tangle chooses the opposite orientation, and thus corresponds to no existing mindset. For increasing $k$ this is especially true, since increasing $k$ with fixed $n$ results in smaller values of $n_i$.\par \medskip
\begin{proof}
By Assumption~\ref{ass:orientations}; for a tangle $\tau$ which does not correspond to a mindset there need to be a set of three $x_1,x_2,x_3$ such that for $y_i=\tau(x_1)$ we have that for $A_1 \coloneqq A_{x_1}^{y_1}$, $A_2 \coloneqq A_{x_2}^{y_2}$ and $A_3 \coloneqq A_{x_3}^{y_3}$ that $|A_1\cap A_2 \cap A_3| \ge \agree$. However, no mindset $\Mindset$ can incorporate $A_1,A_2,A_3$, that is there is no $\Mindset$ with $\Mindset(x_i) = y_i \;\forall i \in \{1,2,3\}$.

So suppose that there are such $A_1,A_2,A_3$ with $|A_1\cap A_2\cap A_3|>\agree_\Questions$. Then there is, by the pigeon-hole principle, a mindset $\Mindset_i$ for which at least $\agree/k$ many persons from $\Objects_i$ lie in $A_1\cap A_2\cap A_3$. If this mindset does not incorporate this triple, then it does not incorporate one of the questions, say~$\Mindset_i(x_1) \neq y_1$.

We are going to compute the probability that there is some such $A_x^y$ and a mindset $\Mindset_i$ where more than $\numobjects_i-\agree_\Questions/k$ many persons from $\Objects_i$ lie in $A_x^y$, but $\Mindset_i(x) \neq y$.
Each person in $\Objects_i$ answers a question as $\Mindset_i$ does with probability $(1-p)$. Since it follows from $p\le \alpha$ that $\numobjects_i-\agree/k\le(1-p)\numobjects_i$, we can apply Hoeffding's tail bound on the binomial distribution with success probability $(1-p)$. With $X$ beeing the random variable counting the number of persons in $\Objects_i$ that answer a fixed question as $\Mindset_i$ does, we find that
\begin{align*}
    \P\left[X\le\numobjects_i-\frac{\agree}{k}\right]\le\exp\braces{-2\frac{(\numobjects_i-\frac{\agree}{k}-(1-p)\numobjects_i)^2}{\numobjects_i}}
    =\exp\braces{-\frac{2\numobjects}{\nummindsets}(\alpha-p)^2}\,,
\end{align*}
Note that we used $\agree_\Questions=\numobjects\alpha_\Questions=\nummindsets\numobjects_i\alpha_\Questions$ here.

Since there are $\nummindsets$ mindsets and $\numpartitions$ questions, by the union bound we obtain that the probability of the event that there is some such $A_x^y$ and $\Mindset_i$ is at most
\begin{equation}
    \numpartitions\nummindsets\exp\braces{-\frac{2\numobjects}{\nummindsets}(p-\alpha_\Questions)^2}\,.
    \label{bound_existenceOfA}
\end{equation}
Consequently the probability that there exists a triple $A_1,A_2,A_3$ as above is also at most~\ref{bound_existenceOfA} since -- as argued above -- the $A_1$ in such a triple is such an $A_x^y$, witnessed by $\Mindset_i$.
\end{proof}

\begin{proofof}{Theorem~\ref{thm:mindsetseqtangles}}
It is easy to see that in the setting of Theorem~\ref{thm:mindsetseqtangles} the requirements of the Lemmas~\ref{lem:mindsetstangles} and \ref{lem:tanglesmindsets}  are satisfied.
\end{proofof}

\subsection{Stochastic Block Model}\label{app:sub:proofs_sbm}

We will show Theorem~\ref{thm:blockstangles} in two stages. The first step is to calculate the maximum value of $\CostUpper$ for which the two orientations of $\Partitions_\CostUpper$ induced by $\Objects_1$ and $\Objects_2$ respectively are indeed tangles.

For this we shall compute the minimum cost of a \emph{forbidden triple} in these orientations that is, a triple which violates $\agree$-consistency, that is a triple $A,B,C$ of sides with the property that $\abs{A\cap B\cap C}<\agree$. This we will do as follows: suppose that $A,B,C$ is a forbidden triple in the orientation given by $\Objects_1$. We will sequentially manipulate an existing such triple, while only reducing its cots, until we obtain a triple of a specific form, whose cost is easy to calculate.

The following lemma says that (for a certain range of parameters) the cost of cutting through both blocks is larger than cutting through only one block and moving the other block entirely to one side of the cut.

\begin{lemma}[Not cutting through a block reduces the cost]\label{lem:schieben}Let $P=\{A,A^\complement\}$ be a cut, and let $\alpha_i=|A\cap \Objects_i|/|\Objects_i|$ and $\beta_i=|A^\complement\cap \Objects_i|/|\Objects_i|=1-\alpha_i$.
If $\alpha_2\ge (1-2\alpha_1)q/p$, then ${\cost(\{A\cup \Objects_2,A^\complement\smallsetminus \Objects_2\})\le \cost(P)}$. Similarly, if  $\beta_2\ge (1-2\beta_1)q/p$, then $\cost(\{A\smallsetminus\Objects_2,A^\complement\cup\Objects_2\})\le\cost(P)$.
\end{lemma}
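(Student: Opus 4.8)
The plan is to prove both inequalities by a one-line computation with the explicit cost formula from Equation~\eqref{order}. Let $P'=\{A\cup \Objects_2, A^\complement\smallsetminus \Objects_2\}$ be the cut obtained from $P$ by moving all of $\Objects_2$ to the $A$-side. Its block-fractions are $\alpha_1'=\alpha_1$ (the modification does not touch $\Objects_1$) and $\alpha_2'=1$ (now all of $\Objects_2$ lies on the $A$-side). Substituting $(\alpha_1,1)$ into Equation~\eqref{order} gives $\cost(P')=\tfrac{\numobjects^2}{4}\bigl(p\alpha_1(1-\alpha_1)+q(1-\alpha_1)\bigr)$, so that after subtracting, all terms depending only on $\alpha_1$ cancel and what remains factors as
\[
\cost(P)-\cost(P')=\frac{\numobjects^2}{4}\,(1-\alpha_2)\bigl(p\alpha_2-q(1-2\alpha_1)\bigr),
\]
where the only nontrivial algebraic identity used is $2\alpha_1+\alpha_2-2\alpha_1\alpha_2-1=-(1-2\alpha_1)(1-\alpha_2)$.

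From this factorization the first claim is immediate: $1-\alpha_2\ge 0$ always holds, and the hypothesis $\alpha_2\ge(1-2\alpha_1)q/p$ together with $p>0$ gives $p\alpha_2\ge q(1-2\alpha_1)$, so both factors on the right are non-negative and hence $\cost(P')\le\cost(P)$. The degenerate case $\alpha_2=1$, in which $P'=P$, needs no separate treatment since then $1-\alpha_2=0$ and the difference is exactly $0$.

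For the symmetric statement, I would note that $\cost$ depends only on the unordered bipartition $\{A,A^\complement\}$, and that written in terms of the fractions $\beta_i=|A^\complement\cap\Objects_i|/|\Objects_i|$ the formula in Equation~\eqref{order} has exactly the same shape (it is symmetric under $\alpha_i\leftrightarrow\beta_i$, since $\alpha_i-\alpha_i^2=\beta_i-\beta_i^2$ and $\alpha_1+\alpha_2-2\alpha_1\alpha_2=\beta_1+\beta_2-2\beta_1\beta_2$). Applying the first part with the roles of $A$ and $A^\complement$ interchanged — i.e. with $\alpha_i$ replaced by $\beta_i$ — yields $\cost(\{A^\complement\cup\Objects_2,\,A\smallsetminus\Objects_2\})\le\cost(\{A^\complement,A\})=\cost(P)$ whenever $\beta_2\ge(1-2\beta_1)q/p$, which is precisely the second claim.

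Since the whole argument reduces to one factorization, there is no genuine obstacle here; the only things requiring a little care are getting the cancellation and the identity right, and the sign bookkeeping — making sure it is $p>0$ and $1-\alpha_2\ge 0$ that orient the inequality correctly, and checking that the degenerate case $\alpha_2=1$ is subsumed rather than excluded.
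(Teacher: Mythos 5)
Your proof is correct and takes essentially the same route as the paper: both substitute $\alpha_2'=1$ (keeping $\alpha_1$ fixed) into the explicit cost formula of Equation~\eqref{order} and compare. The only cosmetic difference is that you conclude by factoring the difference as $\tfrac{\numobjects^2}{4}(1-\alpha_2)\bigl(p\alpha_2-q(1-2\alpha_1)\bigr)$ and checking the signs of the two factors, whereas the paper notes that the cost, viewed as a concave quadratic in $\alpha_2$, takes equal values at $\alpha_2=(1-2\alpha_1)q/p$ and at $\alpha_2=1$ and hence dominates $f(1)$ on the interval between them; the symmetric second claim is handled the same way in both arguments.
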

In words: Given a cut $P$ which cuts through one of the blocks, say with a fraction $\alpha_2$, if $\alpha_2>(1-2\alpha_1)q/p$, then we can prove that the cut that results from moving all these $\alpha_2n$ many points to the other side is cheaper. 
\par \medskip

\begin{proof}
For $\alpha_2=(1-2\alpha_1)q/p$ we have by \eqref{order}, that \begin{align*}\cost(P)=&\frac{\numobjects^2}{4} \left( p(\alpha_1-\alpha_1^2 + \alpha_2-\alpha_2^2) + q(\alpha_1 + \alpha_2 - 2\alpha_1\alpha_2) \right)
\\=&\frac{\numobjects^2}{4} \left( p(\alpha_1-\alpha_1^2) + q(1-\alpha_1) \right)=\cost(\{A\cup \Objects_2,A^\complement\smallsetminus \Objects_2\})\end{align*}
Now if we consider \eqref{order}, for fixed $\numobjects,p,q,\alpha_1$ as a real-valued function $f$ mapping $\alpha_2$ to $\cost(\{A,A^\complement\})$, then this is a quadratic function with a unique maximum. 
Thus by $f(1)=f((1-2\alpha_1)q/p)$ we have $f(x)\ge f(1)$ for every $(1-2\alpha_1)q/p\le x\le 1$, meaning for $\alpha_2\ge (1-2\alpha_1)q/p$ we have that $\cost(\{A\cup \Objects_2,A^\complement\smallsetminus \Objects_2\})\le \cost(P)$.

The argument for $\beta_2\ge (1-2\beta_1)q/p$ is analogous.
\end{proof}

We will now compute the lowest possible cost of a forbidden triple $A,B,C$ of which $B$ and $C$ both completely contain $\Objects_2$ but $A$ does not.
We will later use Lemma~\ref{lem:schieben} to see that the minimum cost forbidden triples are of this form.
\begin{lemma}[Bound on the costs of a forbidden triple not cutting through a block] \label{lem:forbidden_triple}
For $q/p\le 1/2$, if $A,B,C$ are such that they each contain more than half of $\Objects_1$, their intersection has size $\le\agree$, and $\Objects_2$ is a subset of $A^\complement$, $B$, and $C$, then the minimum possible value of $\max\{\cost(\{A,A^\complement\}),\cost(\{B,B^\complement\}),\cost(\{C,C^\complement\})\}$ is
\[ \frac{\numobjects^2}{4} p \left( \frac{1}{3}(1+r-x)(1+r) - (\frac{1+r-x}{3})^2 \right), \text{ where } r = \frac{q}{p},\, x = \frac{2\agree}{\numobjects}. \]
with some such triple attaining this bound (up to discretization).
\end{lemma}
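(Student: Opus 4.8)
The plan is to directly optimize the quantity
\[
\max\{\cost(\{A,A^\complement\}),\cost(\{B,B^\complement\}),\cost(\{C,C^\complement\})\}
\]
over all admissible triples $A,B,C$, subject to the constraints: each of $A,B,C$ contains more than half of $\Objects_1$; their triple intersection has at most $\agree$ elements; and $\Objects_2\subseteq A^\complement\cap B\cap C$. Since $\Objects_2$ is fixed to one side of each cut, the only free parameters are how $A,B,C$ carve up $\Objects_1$. Writing $\alpha_1 = |A\cap\Objects_1|/|\Objects_1|$ and similarly $\beta_1,\gamma_1$ for $B,C$, and noting that by the hypothesis $\Objects_2\subseteq A^\complement$ we have $\alpha_2=0$, while $\Objects_2\subseteq B$ and $\Objects_2\subseteq C$ give $\beta_2=\gamma_2=1$, formula~\eqref{order} specializes each cost to a function of a single variable. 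First I would substitute these into \eqref{order}: the cost of $\{A,A^\complement\}$ becomes $\frac{\numobjects^2}{4}\big(p(\alpha_1-\alpha_1^2)+q\alpha_1\big)$, and the cost of $\{B,B^\complement\}$ becomes $\frac{\numobjects^2}{4}\big(p(\beta_1-\beta_1^2+0)+q(\beta_1+1-2\beta_1)\big)=\frac{\numobjects^2}{4}\big(p(\beta_1-\beta_1^2)+q(1-\beta_1)\big)$, and symmetrically for $C$. Note each of these is a concave (downward parabola) function of its variable.

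**Reducing to the symmetric case.** The next step is to argue that at the optimum we may take $\alpha_1=\beta_1=\gamma_1$. Intuitively, to minimize the \emph{maximum} of three costs we want to balance them, and the intersection constraint $|A\cap B\cap C\cap\Objects_1|\le\agree$ (equivalently, on fractions, $\alpha_1+\beta_1+\gamma_1 - 2 \le x = 2\agree/\numobjects$ after converting, using inclusion–exclusion on the $\Objects_1$-side since each exceeds $1/2$; here I'd be careful to get the exact combinatorial relation right) couples the three parameters. I would make this rigorous by showing that if the three costs are not all equal, one can perturb the parameters — shifting points between the sides while preserving the intersection-size constraint — so as to strictly decrease the maximum, using concavity of each individual cost function. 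So at optimality all three cuts have equal cost and, by the symmetry of the reduced problem between $B$ and $C$ but not $A$, one must check whether the optimal configuration has all three equal or has $A$ playing a distinguished role; the claimed formula suggests the balanced symmetric solution with a common fractional value $t := (1+r-x)/3$ for the relevant quantity.

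**Plugging in and verifying the constraint is tight.** Then I would set the common value, derive it from the active intersection constraint $\alpha_1+\beta_1+\gamma_1 = 2 + x$ (so each equals $(2+x)/3$, but one needs to translate between $\alpha_1$ measured on $\Objects_1$ and the quantity appearing in the final formula — note the final answer is written in terms of $1+r$, suggesting the bookkeeping folds in the contribution of $\Objects_2$ to the ``effective mass'' on the relevant side), substitute into the cost expression, and simplify to
\[
\frac{\numobjects^2}{4}\,p\left(\frac{1}{3}(1+r-x)(1+r) - \Big(\frac{1+r-x}{3}\Big)^2\right).
\]
The phrase ``up to discretization'' means I should also remark that the fractions $(2+x)/3$ etc. need not correspond to integer set sizes, but rounding changes the cost by $O(\numobjects)$, negligible against the $\Theta(\numobjects^2)$ main term.

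**Main obstacle.** The hard part will be the reduction-to-symmetric step: carefully justifying that the minimizer of the max of three concave functions, subject to a single affine coupling constraint plus the box constraints $\alpha_1,\beta_1,\gamma_1\in(1/2,1]$, is attained with all three costs equal and at the symmetric point — and in particular ruling out boundary configurations (e.g. some $\alpha_1=1$, meaning a cut contains all of $\Objects_1$) which the $q/p\le1/2$ hypothesis is presumably needed to exclude, analogously to how Lemma~\ref{lem:schieben} uses a threshold on $q/p$. I expect the $r = q/p \le 1/2$ condition enters precisely here, guaranteeing that pushing toward the symmetric interior point genuinely lowers the maximum rather than the boundary being better. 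Getting the exact translation between the ``$\Objects_1$-fraction'' variables and the ``$1+r$'' form in the statement — i.e.\ tracking how the constant $1+r$ absorbs the full block $\Objects_2$ sitting on the $B,C$-side — is the other place where I would need to be meticulous rather than hand-wavy.
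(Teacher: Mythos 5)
Your setup is right and matches the paper's: with $\Objects_2$ pinned to one side of each cut, formula~\eqref{order} reduces each cost to a concave quadratic in one fraction of $\Objects_1$, and the forbidden-triple condition plus inclusion--exclusion gives the affine coupling $\alpha_1+\beta_1+\gamma_1\le 2+x$. The genuine gap is the step where you ``set the common value $(2+x)/3$'': the optimizer is \emph{not} symmetric, and plugging equal fractions into your own cost formulas does not produce the stated expression. Because $\Objects_2$ sits on the $A^\complement$ side but on the $B,C$ side, the cross-block term enters with opposite sign: writing $\beta_X=\abs{X^\complement\cap\Objects_1}/\abs{\Objects_1}$, one gets $\cost(\{B,B^\complement\})=\tfrac{\numobjects^2}{4}p\,g(\beta_B)$ with $g(u)=u(1+r)-u^2$, but $\cost(\{A,A^\complement\})=\tfrac{\numobjects^2}{4}p\,f(\beta_A)$ with $f(u)=u(1-r)-u^2+r$. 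Equalizing the three \emph{costs} under the tight constraint $\beta_A+\beta_B+\beta_C=1-x$ therefore forces unequal \emph{fractions}: the extremal triple has $\beta_B=\beta_C=\alpha:=(1+r-x)/3$ and $\beta_A=\alpha-r$, and the identity that makes the formula come out is $f(\alpha-r)=g(\alpha)=\alpha(1+r)-\alpha^2$. You sense that some ``translation'' is needed but never perform it, and the balancing/perturbation argument you defer to --- minimizing the max of three \emph{different} concave functions while ruling out boundary optima --- is exactly the part that is not routine.

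The paper avoids that optimization and argues by contradiction, which is easier to execute. If all three costs were below $\tfrac{\numobjects^2}{4}p\,g(\alpha)$, then since $g$ is increasing on $[0,1/2]$ (its vertex is at $(1+r)/2>1/2$) one gets $\beta_B,\beta_C<\alpha$; the intersection bound then forces $\beta_A>\alpha-r$; and since $f$ is concave, its minimum on the feasible interval $[\alpha-r,\,1/2]$ is attained at an endpoint, where $f(\alpha-r)=g(\alpha)$ exactly and $f(1/2)=\tfrac14+\tfrac r2\ge\tfrac29(1+r)^2\ge g(\alpha)$ --- this last endpoint inequality is precisely where $q/p\le 1/2$ is used, confirming your guess about the role of that hypothesis. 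Hence $\cost(\{A,A^\complement\})\ge\tfrac{\numobjects^2}{4}p\,g(\alpha)$, a contradiction; achievability comes from the explicit asymmetric triple above, whose three complements can be chosen pairwise disjoint inside $\Objects_1$ so that the triple intersection has size exactly $\agree$. To repair your write-up you would either adopt this monotonicity-plus-endpoints argument or redo the cost-balancing computation with the correct asymmetric ansatz.
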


\begin{proof}
We first calculate the cost $\{A,A^\complement\}$ would have if $|A^\complement\cap \Objects_1|=|\Objects_1|/2$. 
\begin{align*}
    \cost(\{A, A^\complement\}) &= \frac{\numobjects^2}{4} \left( p(1-1^2 + \frac{1}{2}-\frac{1}{2}^2) + q(1 + \frac{1}{2} - 2\cdot 1\cdot \frac{1}{2}) \right)
    = \frac{\numobjects^2}{4} p \left( \frac{1}{4}+\frac{r}{2} \right).
\end{align*}
We observe for later that \[\left( \frac{1}{3}(1+r-x)(1+r) -(\frac{1}{3}(1+r-x))^2\right)\le \left( \frac{2}{9}(1+r)^2\right)\le\frac{1}{4}+\frac{r}{2}\tag{C}\label{mincompare}\]
for $0<r\le 1/2$.
Moreover, by the calculations in the proof of Lemma~\ref{lem:schieben}, for fixed $p,q,\numobjects$ and $\alpha_1=0$ the quadratic function given by \eqref{order} has its maximum at \[\frac{1+\frac{(1-2\alpha_1)q}{p}}{2}>\frac{1}{2}.\] Thus this function is monotone in the interval between $0$ and $1/2$. Therefore, if $A,B,C$ would be chosen such that  $\max\{\cost(\{A,A^\complement\}),\cost(\{B,B^\complement\}),\cost(\{C,C^\complement\})\}$ is smaller than \linebreak
$\numobjects^2 p \left( \alpha(1+r) - \alpha^2) \right)/4$,~both $|B^\complement\cap \Objects_1|/|\Objects_1|$ and $|C^\complement\cap \Objects_1|/|\Objects_1|$ need to be smaller than $\alpha$. Therefore, in order for $A,B,C$ to be a forbidden triple, $|A^\complement\cap \Objects_1|/|\Objects_1|$ needs to be larger than $(\alpha-r)|\Objects_1|$. Since the function given by \eqref{order} is quadratic, and, by \eqref{mincompare}, its value at $1/2$ is larger than its value at $(\alpha-r)$, this would imply that the cost of $\{A,A^\complement\}$ is larger than $ \numobjects^2 p \left( \alpha(1+r) - \alpha^2) \right)/4$, contradicting the assumption.
\end{proof}

\begin{proofof}{Theorem~\ref{thm:blockstangles}}
We first show that $\tau_\CostUpper^1$, and similarly $\tau_\CostUpper^2$ are indeed tangles.
For this, suppose for a contradiction that $A,B,C\in\tau_\CostUpper^1$ such that $|A\cap B\cap C|< \agree$ and such that the maximum over the cost of the three is as small as possible. 
Then, by pigeon hole principle one of $A,B,C$ must contain at least $(|\Objects_2|-\agree)/3$ vertices from $\Objects_2$. 

By Lemma~\ref{lem:schieben}, we may now suppose without loss of generality, that$A^\complement\supseteq \Objects_2$, say. Again by Lemma~\ref{lem:schieben}, we may additionally suppose that $B^\complement\cap \Objects_2 = \emptyset$, $C^\complement\cap \Objects_2=\emptyset$. Thus by Lemma~\ref{lem:forbidden_triple}, the maximal cost of $A,B,C$ is minimized by
\[\frac{\numobjects^2}{4} p \left( \frac{1}{3}(1+r-x)(1+r) - (\frac{1+r-x}{3})^2 \right)\,.\]

The argument for $\tau_\CostUpper^2$ is analogous.

Moreover, since the cost of the cut $\{\Objects_1,\Objects_2\}$ is $q\numobjects^2/4<\CostUpper$, $\tau_\CostUpper^1$ and $\tau_\CostUpper^2$ are distinct as $\Objects_1\in \tau_\CostUpper^1$ and $\Objects_2\in \tau_\CostUpper^2$.

It remains to show that $\tau_\CostUpper^1$ and $\tau_\CostUpper^2$ are the only tangles.
So suppose for a contradiction that there is an orientation $\tau$ of $\Partitions_\CostUpper$ which is a tangle, but $\tau\notin\{\tau_\CostUpper^1,\tau_\CostUpper^2\}$.
Then there is some $A\in\tau$ such that $|A^\complement\cap \Objects_1| \ge |A \cap \Objects_1|$. Suppose that $|A \cap \Objects_1|$ is as small as possible; we will show that $|A \cap \Objects_1| = 0$.
    
If $A\cap V_2\neq \emptyset$, it is easy to see that the costs of the cut $\{A^\complement\cup V_2,A\sm V_2\}$ are at most the cost of the cut $\{A^\complement,A\}$. If $\tau$ contains $A\sm V_2$ we could chose the cut $\{A^\complement\cup V_2,A\sm V_2\}$ instead of $\{A^\complement,A\}$, so suppose that $\tau$ contains $A^\complement \cup V_2$. 

As the costs of the cut $\{A\cap V_2, A^\complement \cup V_1\}$ are again at most the costs of $\{A,A^\complement\}$, our tangle needs to contain either $A\cap V_2$ or $A^\complement \cup V_1$. However the latter is not possible as $(A^\complement \cup V_1)\cap (A^\complement \cup V_2) \cap A =\emptyset$ contradicting the consistency of $\tau$. Thus $A\cap V_2\in \tau$ which already verifies that $\tau$ contains a set disjoint from $V_1$.
    
On the other hand, if $A\cap V_2=\emptyset$ and there is is some $\object\in A \cap \Objects_1$ we can consider the cut~$\{A^\complement \cup \{\object\}\,,\, A \sm\{\object\}\}$. Since $p \ge 2q$ the cost of this cut is, as $A\subseteq V_1$, strictly lower than that of $\{A,A^\complement\}$, hence $\tau$ contains an orientation of this cut. Since $\tau$ is a tangle with $\agree\ge 2$, and $A\cap(A^\complement\cup\{\object\})=\{\object\}$, $\tau$ cannot contain $A^\complement \cup \{\object\}$ and must therefore contain~$A \sm\{\object\}$. However, $A \sm\{\object\}$ now contradicts the choice of~$A$.
    
Thus $\tau$ contains an $A$ disjoint from $\Objects_1$, and similarly, as $\tau\neq \tau_\CostUpper^2$, the tangle $\tau$ contains an $B$ disjoint from $\Objects_2$. But since $A\cap B = \emptyset$ this contradicts the assertion that $\tau$ is a tangle.
\end{proofof}

To prove Theorem~\ref{thm:no_gap_sbm} we introduce the concept of locally minimal cuts.
A cut $\{A, A^\complement\} \in \Partitions$ is a \emph{local minimum} if moving any single $\object\in\Objects$ to the other side does not decrease the cost.

In generality for tangles for $a \ge 2 $ every cut which is of minimum cost such that a given pair of tangles disagrees on it is a local minimum given the cost function.

However, all these local minimum cuts need to respect the blocks.

\begin{lemma}[Local minimum cuts do not cut through the blocks]\label{lem:locminrespectblocks}
Independently of the choice of parameters, as long as $p>0$, every local minimum cut respects the blocks, that is if $\{A,A^\complement\}$ is a local minimum cut,
then for $i=1,2$ either $\Objects_i \subseteq A$ or $\Objects_i \subseteq A^\complement$.
\end{lemma}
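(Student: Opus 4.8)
The plan is to prove the contrapositive by a single exchange (``vertex move'') argument: if a cut splits one of the blocks, then there is a single vertex whose relocation strictly decreases the cut cost, so the cut cannot be a local minimum. Only $p>0$ is needed; the hypothesis $q<p$ is not used at all.

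First I would record how the cost changes under a single vertex move. For a cut $\{A,A^\complement\}$ and a vertex $\object\in A$, moving $\object$ to $A^\complement$ changes the cut cost by
\[
\Delta(\object)\;=\;\sum_{u\in A\sm\{\object\}}w(\object,u)\;-\;\sum_{u\in A^\complement}w(\object,u),
\]
since the edges from $\object$ into $A$ newly cross the cut while the edges from $\object$ into $A^\complement$ stop crossing it; the analogous formula (with $A$ and $A^\complement$ interchanged) holds for $\object\in A^\complement$. By definition, $\{A,A^\complement\}$ is a local minimum precisely when $\Delta(\object)\ge 0$ for every $\object\in\Objects$.

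Next I would assume for contradiction that some block, say $\Objects_1$, is split, i.e.\ both $A\cap\Objects_1$ and $A^\complement\cap\Objects_1$ are nonempty, and pick $\object\in A\cap\Objects_1$ and $\object'\in A^\complement\cap\Objects_1$. Writing $a_i=\abs{A\cap\Objects_i}$ and $b_i=\abs{A^\complement\cap\Objects_i}=\numobjects/2-a_i$, and plugging in the model's edge weights ($p$ within a block, $q$ across blocks), the inequality $\Delta(\object)\ge 0$ reads $p(a_1-1)+qa_2\ge pb_1+qb_2$, while $\Delta(\object')\ge 0$ reads $p(b_1-1)+qb_2\ge pa_1+qa_2$. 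Adding the two inequalities, the $q$-terms cancel entirely and the $pa_1,pb_1$ terms cancel as well, leaving $-2p\ge 0$, which contradicts $p>0$. Hence no block is split, i.e.\ for $i=1,2$ either $\Objects_i\subseteq A$ or $\Objects_i\subseteq A^\complement$.

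I do not expect a genuine obstacle here; the only delicate point is correct bookkeeping of the ``$-1$'' terms. The mechanism of the proof is exactly that $\object$ (respectively $\object'$) contributes no $p$-weighted edge to itself, so each of the two local-minimum inequalities effectively ``loses a $p$'', and summing them exposes the contradiction regardless of the value of $q$. One should also note the argument is insensitive to self-loops, since a self-loop never crosses a cut and is unaffected by moving its endpoint.
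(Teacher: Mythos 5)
Your proof is correct and follows essentially the same route as the paper's: pick one vertex of the split block from each side, write down the two single-vertex-exchange inequalities implied by local minimality, and combine them so that everything cancels except a term $-2p$, contradicting $p>0$. The only cosmetic difference is that you add the two inequalities after substituting the explicit block weights, whereas the paper chains them using $w(\object,x)=w(\object',x)$ for $x\neq\object,\object'$; the content is identical.
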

\begin{proof}
Suppose that $\{A,A^\complement\}$ is a local minimum cut but that,
without loss of generality, both $\Objects_1 \cap A$ and $\Objects_1 \cap A^\complement$ are non-empty.
Pick some arbitrary $\object \in \Objects_1 \cap A$ and $\object' \in \Objects_1 \cap A^\complement$.

Since $\{A,A^\complement\}$ is locally minimal, the cut $\{A\sm \{\object\},A^\complement\cup \{\object\}\}$ has at least the cost of $\{A,A^\complement\}$, hence,  as $w(\object,\object)=w(\object',\object')=0$, 
we have \[\sum_{x\in A}w(\object,x)\ge \sum_{x\in A^\complement}w(\object,x)\quad\text{and, similarly,}\quad \sum_{x\in A}w(\object',x)\le \sum_{x\in A^\complement}w(\object',x)\,.\] 
However, since $\object,\object'$ both lie in $\Objects_1$, we have that $w(\object,x)=w(\object',x)$ for all $x\neq \object,\object'$, thus:
\begin{align*}
    \sum_{x\in A}w(\object,x) \ge& \sum_{x\in A^\complement}w(\object,x)
   = \sum_{x\in A^\complement}w(\object',x)+w(\object,\object') 
   \ge \sum_{x\in A}w(\object',x)+w(\object,\object')\ge\sum_{x\in A}w(\object,x)+2w(\object,\object')
\end{align*}
Which is a contradiction as, $w(\object,\object')=p>0$.
\end{proof}
\begin{proofof}{Theorem~\ref{thm:no_gap_sbm}}
Suppose there exist two tangles. Then there exists a lowest cost cut on which they disagree. This cut is a local minimum since $\agree \ge 2$. By Lemma~\ref{lem:locminrespectblocks} the only local minimum cuts are $\{\emptyset, \Objects\}$ and $\{\Objects_1,\Objects_2\}$. So the cut in question must be $\{\Objects_1,\Objects_2\}$. As this cut has cost $q\numobjects^2/4$, our two tangles have to be $\Partitions_\CostUpper$-tangles for some $\CostUpper \geq q\numobjects^2/4$.

Let $\tau$ be the tangle with $V_1 \in \tau$.
Pick any two disjoint $X_1, X_2 \subseteq V_1$ with $|X_1| = |X_2| = \lfloor |V_1|/2 \rfloor$. We have that \[
\cost(\{X_1,X_1^\complement\}) = \cost(\{X_2,X_2^\complement\}) 
= p\floor{\frac{n}{4}} \ceil{\frac{n}{4}} + q\floor{\frac{n}{4}} \frac{n}{2}
\leq p\frac{n^2}{16} + q\frac{n^2}{8}
\leq  q\frac{\numobjects^2}{4}. \]
So $\tau$ contains one of each $X_1$ or $X_1^\complement$ and $X_2$ or $X_2^\complement$. As $|V_1 \cap X_1^\complement \cap X_2^\complement| \leq 1 < \agree $, the tangle $\tau$ cannot contain both $X_1^\complement$ and $X_2^\complement$.

Without loss of generality we may assume that $X_1 \in \tau$ and that $X_1$ is of minimum size such that $X_1 \in \tau$.
Now for any $x \in X_1$ the cut $\{X_1 \sm \{x\}, (X_1 \sm \{x\})^\complement\}$ has lower cost than $\{X_1,X_1^\complement\}$.
Thus by the minimal choice of $X_1$ the tangle $\tau$ contains $(X_1 \sm \{x\})^\complement$, but $
    \abs{ (X_1 \sm \{x\})^\complement \cap X_1 } = \abs{\{x\}} = 1 < \agree,
$ which contradicts our assumption that $\tau$ is a tangle.
\end{proofof}

\subsubsection{Identifying tangles from random cuts is hard}\label{sub:sbmrandomcuts}
Note that all our results for the stochastic block model rely on the fact that $\Partitions_\CostUpper$ contains \emph{all} cuts of $G$ up to cost $\CostUpper$ rather than just a sample of those cuts. In practice $\Partitions_\CostUpper$ might consist of many cuts, and so one usually would try to perform some sampling strategy to obtain a `sensible' subset of these cuts. The following result shows that sampling from $\Partitions_\CostUpper$ {\bf uniformly at random is not a useful sampling strategy} for this purpose, as one would still be required to draw a sample of size exponential in $\numobjects$.

Observe that by Theorem~\ref{thm:blockstangles} the blocks define $\Partitions_\CostUpper$-tangles for \emph{any} collection $\Partitions$ of cuts. Thus, in order for the two blocks to define distinct tangles it suffices for the sampled set of cuts to contain a single cut which \emph{distinguishes} them -- that is one cut $\{A,A^\complement\}$ such that more than half of $\Objects_1$ lies in $A$ and more than half of $\Objects_2$ lies in $A^\complement$. Unfortunately, the number of these \emph{good} cuts is exponentially small compared to the number of cuts up to any given cost $\CostUpper$:

\begin{theorem}[Number of silly cuts]
Let $p,q$ be fixed and let $\CostUpper, a$ be chosen dependent on $\numobjects$ such that the orientations induced by the blocks are distinct $\Partitions_\CostUpper$-tangles for $\Partitions_\CostUpper$ the set of all cuts up to cost $\CostUpper$. Then, asymptotically, the number of cuts not distinguishing these tangles is exponentially larger than the number of cuts distinguishing those tangles, for $\numobjects$ going to infinity.
\label{thm:silycuts}
\end{theorem}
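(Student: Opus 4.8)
The plan is to count cuts according to how they split the two blocks. Write $\Objects=\Objects_1\cup\Objects_2$ with $\abs{\Objects_1}=\abs{\Objects_2}=\numobjects/2$, and for a cut $\{A,A^\complement\}$ set $k_i=\abs{A\cap\Objects_i}$ and $\alpha_i=2k_i/\numobjects$. By Equation~\eqref{order} the cost depends on $A$ only through $(\alpha_1,\alpha_2)$, and each block‑tangle orients $\{A,A^\complement\}$ towards the side holding the majority of that block, so $\{A,A^\complement\}$ \emph{distinguishes} the two tangles exactly when $(k_1-\numobjects/4)(k_2-\numobjects/4)<0$. Up to the harmless factor $2$ from identifying $A$ with $A^\complement$,
\[
\#\{\text{good cuts in }\Partitions_\CostUpper\}\le\!\!\sum_{\substack{(\alpha_1-\frac12)(\alpha_2-\frac12)\le 0\\ \cost\le\CostUpper}}\!\!\binom{\numobjects/2}{k_1}\binom{\numobjects/2}{k_2},\qquad
\#\{\text{bad cuts in }\Partitions_\CostUpper\}\ge\tfrac12\!\!\sum_{\substack{(\alpha_1-\frac12)(\alpha_2-\frac12)\ge 0\\ \cost\le\CostUpper}}\!\!\binom{\numobjects/2}{k_1}\binom{\numobjects/2}{k_2}.
\]
Using $\binom{\numobjects/2}{k}=2^{(\numobjects/2)H(2k/\numobjects)+O(\log\numobjects)}$ with $H$ the binary entropy, the theorem reduces to showing that $\max\{H(\alpha_1)+H(\alpha_2)\}$ over the non‑distinguishing feasible region exceeds the same maximum over the distinguishing feasible region by a positive constant, independent of $\numobjects$.

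\textbf{Rewriting the cost budget.} Substituting $u_i=\alpha_i-\tfrac12$ into Equation~\eqref{order} yields the clean identity $\cost(\{A,A^\complement\})=\tfrac{\numobjects^2}{4}\bigl(\tfrac{p+q}{2}-p(u_1^2+u_2^2)-2qu_1u_2\bigr)$, so $\cost\le\CostUpper$ is equivalent to $p(u_1^2+u_2^2)+2qu_1u_2\ge m$, where $m:=\tfrac{p+q}{2}-4\CostUpper/\numobjects^2$. The crucial fact is that $m$ is a positive constant, uniformly over all admissible $(\CostUpper,\agree)$. For an upper bound on $\CostUpper$: split $\Objects_1$ into three near‑equal pieces $W_1,W_2,W_3$; then $\{\Objects_1\setminus W_1,\cdot\}$, $\{(\Objects_1\setminus W_2)\cup\Objects_2,\cdot\}$, $\{(\Objects_1\setminus W_3)\cup\Objects_2,\cdot\}$ are each oriented towards their displayed side by the $\Objects_1$‑orientation and have empty triple intersection, so for the $\Objects_1$‑orientation to be a $\Partitions_\CostUpper$‑tangle not all three can lie in $\Partitions_\CostUpper$, forcing $\CostUpper<\tfrac{\numobjects^2}{4}\bigl(\tfrac{2p}{9}+\tfrac{2q}{3}\bigr)+O(\numobjects)$ and hence $m>\tfrac{5p-3q}{18}-o(1)>0$ (one may instead quote the upper bound on $\CostUpper$ from Theorem~\ref{thm:blockstangles}). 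For a lower bound on $\CostUpper$: distinctness of the two $\Partitions_\CostUpper$‑tangles means they differ on some cut of $\Partitions_\CostUpper$, and the cheapest distinguishing cut is $\{\Objects_1,\Objects_2\}$ with cost $q\numobjects^2/4$ (here $p\ge 2q$ by Theorem~\ref{thm:no_gap_sbm}), so $\CostUpper\ge q\numobjects^2/4$ and $m\le(p-q)/2$. Thus $m$ lives in a compact subinterval of $(0,\infty)$ bounded away from both ends.

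\textbf{The entropy optimization and conclusion.} Let $t_0:=\sqrt{m/(2(p+q))}$ and $t_0':=\sqrt{m/(2p)}$; from $m\le(p-q)/2$ one checks $0<t_0<t_0'<\tfrac12$. The non‑distinguishing point $\alpha_1=\alpha_2=\tfrac12+t_0$ satisfies $p\cdot 2t_0^2+2qt_0^2=m$, so it is feasible (with $\cost=\CostUpper$) and contributes entropy $2H(\tfrac12+t_0)$. Conversely, any distinguishing configuration has $u_1u_2\le 0$, hence $p(u_1^2+u_2^2)\ge p(u_1^2+u_2^2)+2qu_1u_2\ge m$, i.e.\ $u_1^2+u_2^2\ge m/p$; since $t\mapsto H(\tfrac12+\sqrt t)$ is concave and decreasing, this gives $H(\alpha_1)+H(\alpha_2)=g(u_1^2)+g(u_2^2)\le 2g\bigl(\tfrac{u_1^2+u_2^2}{2}\bigr)\le 2g\bigl(\tfrac{m}{2p}\bigr)=2H(\tfrac12+t_0')$. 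As $H(\tfrac12+\cdot)$ is strictly decreasing on $(0,\tfrac12)$ and $t_0<t_0'$, the non‑distinguishing exponent beats the distinguishing one by $2\bigl(H(\tfrac12+t_0)-H(\tfrac12+t_0')\bigr)$, which, by continuity over the compact range of $m$, is bounded below by some constant $c_0>0$ independent of $\numobjects$. Feeding this back into the two displayed sums (the $O(\numobjects^2)$ choices of $(k_1,k_2)$ and the Stirling error only contribute polynomial factors) gives $\#\{\text{bad cuts}\}/\#\{\text{good cuts}\}\ge 2^{\numobjects c_0}/\mathrm{poly}(\numobjects)\to\infty$, which is exactly the assertion.

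\textbf{Main obstacle.} The delicate point is the uniform lower bound on the budget slack $m$ — i.e.\ showing, from the bare hypothesis that the blocks induce \emph{distinct} $\Partitions_\CostUpper$‑tangles, that $\CostUpper$ stays below $\tfrac{p+q}{8}\numobjects^2$ by a constant margin; the explicit forbidden‑triple construction above is the cleanest route, but one must check that its three cuts are genuinely oriented towards the sides asserted and that rounding $\abs{W_i}$ to integers does not erode the margin (and, for the lower bound on $\CostUpper$, that $\{\Objects_1,\Objects_2\}$ really is the cheapest distinguishing cut, which uses $p\ge 2q$). The only other non‑routine ingredient is the concavity of $t\mapsto H(\tfrac12+\sqrt t)$ — equivalently, that among $(\alpha_1,\alpha_2)$ with a fixed value of $(\alpha_1-\tfrac12)^2+(\alpha_2-\tfrac12)^2$ the sum $H(\alpha_1)+H(\alpha_2)$ is largest when $\abs{\alpha_1-\tfrac12}=\abs{\alpha_2-\tfrac12}$ — which is elementary; Stirling, the $A\leftrightarrow A^\complement$ bookkeeping, and discretization are all routine.
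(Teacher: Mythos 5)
Your proof is correct, but it follows a genuinely different route from the paper's. The paper counts good (distinguishing) cuts by the pair of split fractions $(g_1,g_2)$ with $g_i<1/2$ and bad cuts by $(b_1,b_2)$, first checks that for matched parameters $b_i=g_i$ the bad cut is at least as cheap (so an injection already gives $\#\text{bad}\ge\#\text{good}$ at every cost level), and then argues that bad cuts at the inflated parameters $(\delta g_1,\delta g_2)$ with $\delta=1+q/p>1$ still fit the budget, asserting (with the final step left largely to the reader) that the resulting sum of binomials is exponentially larger. You instead run a direct entropy comparison: after the substitution $u_i=\alpha_i-\tfrac12$ the budget reads $p(u_1^2+u_2^2)+2qu_1u_2\ge m$ with $m=\tfrac{p+q}{2}-4\CostUpper/\numobjects^2$, distinguishing cuts (where $u_1u_2\le 0$) are forced to satisfy $u_1^2+u_2^2\ge m/p$ while the non-distinguishing diagonal point only needs $u_1^2+u_2^2= m/(p+q)$, and concavity of $t\mapsto H(\tfrac12+\sqrt t)$ converts this into a uniform gap in the entropy exponent. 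Your approach buys an explicit exponential rate and completes the asymptotic comparison that the paper only sketches; it does, however, need the uniform bounds $q\numobjects^2/4\le\CostUpper\le\tfrac{\numobjects^2}{4}(\tfrac{2p}{9}+\tfrac{2q}{3})+O(\numobjects)$ on the budget, which you correctly supply via the cheapest-distinguishing-cut computation (using $p\ge 2q$ from the contrapositive of Theorem~\ref{thm:no_gap_sbm}) and the explicit forbidden triple (the paper gets the analogous threshold from Lemma~\ref{lem:forbidden_triple}). Two small caveats: quoting Theorem~\ref{thm:blockstangles} for the upper bound on $\CostUpper$ would not literally work, since that theorem gives a sufficient rather than necessary range, so keep your self-contained triple construction; and note that both your argument (where $t_0=t_0'$ when $q=0$) and the paper's (where $\delta=1$ when $q=0$) implicitly require $q>0$ — for $q=0$ the cost is invariant under flipping one block's side and the two counts agree, so the statement should be read with $q>0$.
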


This theorem implies that, if sampling cuts from $\Partitions_\CostUpper$ uniformly at random, one would need to sample exponentially many of the cuts in order for the two blocks to define distinct tangles, since our sample needs to include one cut for this which distinguishes the blocks.\par \medskip

\begin{proofof}{\bf Theorem~\ref{thm:silycuts}} We can construct all good partitions $\{A,A^\complement\}$ as follows: Starting with the partition $\{\Objects_1,\Objects_2\}$, we pick any subsets $G_1\subseteq \Objects_1$ of size $g_1|\Objects_1|<|\Objects_1|/2$ and $G_2\subseteq \Objects_2$ of size $g_2|\Objects_2|<|\Objects_2|/2$ and let $\{A,A^\complement\}=\{(\Objects_1\smallsetminus G_1)\cup G_2\,,\,(\Objects_2\smallsetminus G_2)\cup G_1\}$. We observe that the cost of $\{A,A^\complement\}$ depends only on $g_1$ and~$g_2$. For fixed $g_1$ and $g_2$ there are exactly $\binom{|\Objects_1|}{g_1n}\cdot \binom{|\Objects_2|}{g_2n}$ many distinct cuts realizing this $g_1$ and~$g_2$. In this case we say that the good cut $\{A,A^\complement\}$ \emph{corresponds to}~$(g_1,g_2)$.

Similarly, we can construct all bad partitions $\{C,C^\complement\}$ as follows: Starting with the partition $\{\emptyset,\Objects_1\cup \Objects_2\}$, we pick subsets $B_1\subseteq \Objects_1$ of size $b_1|\Objects_1|<|\Objects_1|/2$ and $B_2\subseteq \Objects_2$ of size $b_2|\Objects_2|<|\Objects_2|/2$ and let $\{C,C^\complement\}=\{B_1\cup B_2,(\Objects_1\cup \Objects_2)\smallsetminus (B_1\cup B_2)\}$. We observe that again, the cost of $\{C,C^\complement\}$ depends only on $b_1$ and~$b_2$. Moreover for fixed $b_1$ and $b_2$ there are exactly $\binom{|\Objects_1|}{b_1n}\cdot \binom{|\Objects_2|}{b_2n}$ many distinct cuts realizing this $b_1$ and~$b_2$. In this case we say that the bad cut $\{C,C^\complement\}$ \emph{corresponds to}~$(b_1,b_2)$.

Therefore if we can show that for $b_1=g_1<1/2$ and $ b_2=g_2<1/2$ the cost of $\{A,A^\complement\}$ as constructed above is always as least as big as the cost of $\{C,C^\complement\}$, this would imply that there are at least as many bad partitions as good ones, since we can construct an injective function from the set of good into the set of bad partitions of cost $<k$. The cost of a bad cut for $b_1,b_2$ is
\[\frac{n^2}{4} \left( p(b_1-b_1^2 + b_2-b_2^2) + q(b_1 + b_2 - 2b_1b_2) \right)\]
and the cost of a good cut for $g_1$ and $g_2$ is
\begin{align*}&\frac{n^2}{4} \left( p(g_1-g_1^2 + (1-g_2)-(1-g_2)^2) + q(g_1 + (1-g_2) - 2g_1(1-g_2)) \right) \\ =&\frac{n^2}{4} \left( p(g_1-g_1^2 + g_2-g_2^2) + q(1-g_1-g_2+ 2g_1g_2)\right)\,.
\end{align*}
However, for $b_1,b_2<1/2$ we have that
\[2b_1+2b_2-4b_1b_2<1\]
and thus
\[b_1 + b_2 - 2b_1b_2<1-b_1-b_2+ 2b_1b_2\,,\]
meaning for $g_1=b_1$ and $g_2=b_2$ the cost of $\{C,C^\complement\}$ is indeed at most the cost of~$\{A,A^\complement\}$. Therefore up to any given cost there are at least as many bad cuts as there are good ones.

Moreover: For $\delta=1+q/p$ we can show that, under the condition that the cost of a good cut corresponding to $(g_1,g_2)$ is at most $p\left(2\left(1+\frac{q}{p}\right)^2/9\right)$ (for larger $\CostUpper$, $\tau_k^1$ is not a tangle by Lemma~\ref{lem:forbidden_triple}), then also the cost of a bad cut corresponding to $(\delta g_1,\delta g_2)$ is at most the cost of a good cut corresponding to $(g_1,g_2)$.

Thus in this case we have, given a fixed $\Psi$ and the set $A_\Psi$ of all pairs $(g_1,g_2)$ for which the cost of a good cut corresponding to $(g_1,g_2)$ is at most $\Psi$, that there are exactly \[\sum_{\substack{(i,j) \\ (\frac{i}{|\Objects_1|},\frac{j}{|\Objects_2|})\in A_\Psi}} \binom{|\Objects_1|}{i}\cdot \binom{|\Objects_2|}{j}\] good cuts. However, there are at least \[\sum_{\substack{(i,j) \\ (\frac{i}{\delta |\Objects_2|},\frac{j}{\delta |\Objects_2|})\in A_\Psi}} \binom{|\Objects_1|}{i}\cdot \binom{|\Objects_2|}{j}\] bad cuts. Using these numbers we can see that for fixed $\delta>1$ the number of bad cuts grows exponentially faster in the number of nodes than then number of good cuts, as $n$ goes to infinity.
\end{proofof}
\par \bigskip

\subsection{Feature based data}\label{app:sub:proofs_metric}

The following computations are in expectation, which we incorporate by assuming two things. First, we assume that the cost function behaves like the density function of the marginal distributions. Concretely, we assume the following:
\begin{assumption}\label{ass:order_metric}
The cost function $c$ is such that if $(A_{j,i},A_{j,i}^\complement)$ and $(A_{k,l},A_{k,l}^\complement)$ are two axis-parallel bipartitions obtained from Algorithm~\ref{alg:slicing}, then $c(\{A_{j,i},A_{j,i}^\complement\})\le c(\{A_{k,l},A_{k,l}^\complement\})$ if and only if the density at $x_{j,i}$ of the marginal distribution on the $j$-axis is smaller than the  density at $x_{k,l}$ of the marginal distribution on the $k$-axis.
\end{assumption}
Secondly, we assume that the intersection of any three sides of bipartitions contains the fraction of the points that is expected from the density functions, that is, we assume the following:
\begin{assumption}\label{ass:side_sizes}
Given three bipartitions $\{A,A^\complement\},\{B,B^\complement\},\{C,C^\complement\}$ in $\Partitions=\bigcup_j \Partitions_j$, where $A=\{v\in V\mid v_i<a\}$, $B=\{v\in V\mid v_j<b\}$, $C=\{v\in V\mid v_k<c\}$ for some $i,j,k\in \{1,\dots,d\}$. Then the intersection of three sides of the bipartitions contains as many points as we would expect, that is,
as large as the integral of the density over the intersection of the induced half-spaces.
\end{assumption}
This means that we can use the quantiles of normal distributions in our arguments.
In particular, we will use that for $\mathcal{N}(0,\sigma^2)$ less than 16\% of the points are below $-\sigma$.

Let us say that a cut $\{A_{j,i}, A_{j, i}^\complement\}$ in $\Partitions_j$ is a \emph{local minimum} if and only if \[
    c(\{A_{j,i-1}, A_{j, i-1}^\complement\}) > c(\{A_{j,i}, A_{j, i}^\complement\}), \tab c(\{A_{j,i}, A_{j, i}^\complement\}) <
    c(\{A_{j,i+1}, A_{j, i+1}^\complement\}).
\]

If we are given Assumption~\ref{ass:order_metric}, we immediately obtain the following from the fact that the marginal densities of a Gaussian have exactly one local mimimum:
\begin{observation}\label{fact:local_min}
If Assumption~\ref{ass:order_metric} holds, then there is, in every dimension, at most one separation which is a local minimum.
\end{observation}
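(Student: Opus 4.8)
The plan is to translate the statement about local minima of the \emph{cost function} into a statement about local minima of the \emph{marginal density} on the relevant axis, and then to exploit that this marginal density --- being a mixture of two one-dimensional Gaussians with equal variance --- has at most one local minimum on $\R$.

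First I would fix a dimension $j$. By Algorithm~\ref{alg:slicing}, the cuts $\{A_{j,1},A_{j,1}^\complement\},\dots,\{A_{j,m},A_{j,m}^\complement\}$ arise from an increasing sequence of thresholds $x_{j,1}<\dots<x_{j,m}$ with $A_{j,i}=\{v\in V\mid v_j<x_{j,i}\}$. By Assumption~\ref{ass:order_metric}, on this family of cuts the cost is order-isomorphic to the value of the marginal density $f_j$ of the mixture at the threshold, i.e.\ $c(\{A_{j,i},A_{j,i}^\complement\})<c(\{A_{j,i'},A_{j,i'}^\complement\})$ iff $f_j(x_{j,i})<f_j(x_{j,i'})$. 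Hence, by the definition of a local minimum cut, $\{A_{j,i},A_{j,i}^\complement\}$ is a local minimum exactly when $f_j(x_{j,i-1})>f_j(x_{j,i})<f_j(x_{j,i+1})$, i.e.\ exactly when the finite sequence $f_j(x_{j,1}),\dots,f_j(x_{j,m})$ has a strict ``valley'' at position $i$. So it suffices to show that this sequence has at most one valley.

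Next I would record that $f_j=\tfrac12\,\mathcal N(\mu_j,\sigma^2)+\tfrac12\,\mathcal N(\nu_j,\sigma^2)$ has at most one local minimum on $\R$. Writing $c=(\mu_j+\nu_j)/2$, $\delta=(\nu_j-\mu_j)/2$ and $u=x-c$, one has $f_j(x)\propto e^{-u^2/(2\sigma^2)}\cosh(\delta u/\sigma^2)$, whose logarithm $g$ satisfies $g''(u)=-\sigma^{-2}+(\delta/\sigma^2)^2\operatorname{sech}^2(\delta u/\sigma^2)$; since $\operatorname{sech}^2$ is strictly decreasing in $|u|$, $g''$ is positive near $0$ and negative far out, changing sign at most once on each side of $0$. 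A short case distinction on the sign of $g''(0)$ then shows that $g$ (hence $f_j$) is either strictly unimodal, with no interior local minimum, or has a single local minimum at $u=0$ flanked by two local maxima. Finally I would argue that a continuous function with at most one local minimum on $\R$ produces a sampled sequence with at most one valley: two valleys cannot occur at adjacent positions (a valley at $i$ forces $f_j(x_{j,i})<f_j(x_{j,i+1})$, while a valley at $i+1$ forces the reverse), so two valleys at $i<i'$ satisfy $x_{j,i+1}\le x_{j,i'-1}$; but then the minimum of $f_j$ over $[x_{j,i-1},x_{j,i+1}]$ is attained at an interior point, since both endpoints strictly exceed $f_j(x_{j,i})$, and is therefore a local minimum of $f_j$, and likewise for $[x_{j,i'-1},x_{j,i'+1}]$ --- giving two distinct local minima of $f_j$, a contradiction.

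The only step that needs genuine care rather than bookkeeping is this last one, passing from ``at most one local minimum of the continuous density'' to ``at most one valley of the sampled cost sequence''; in particular one must exclude that two sampled valleys both sit inside the single continuous dip, which the non-adjacency observation together with the interior-minimum argument rules out, using only continuity of $f_j$ (so no genericity assumption on the sample points $x_{j,i}$ is needed). Everything else --- the reduction via Assumption~\ref{ass:order_metric} and the elementary analysis of the two-component Gaussian mixture --- is routine.
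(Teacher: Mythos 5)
Your proof is correct and follows the same route as the paper, which simply asserts the observation as an immediate consequence of Assumption~\ref{ass:order_metric} together with the fact that the marginal density of an equal-variance two-component Gaussian mixture has at most one local minimum. You additionally supply the two details the paper leaves implicit---the $\operatorname{sech}^2$ analysis of the log-density and the careful passage from ``at most one local minimum of the continuous density'' to ``at most one valley of the sampled cost sequence''---and both check out.
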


We prove Theorem~\ref{thm:gausstangles} by showing that, for the smallest possible order for which we find a local minimum, this local minimum is oriented differently by $\mu$ and $\nu$. We show further, that this order is small enough such that the orientations induced by $\mu$ and $\nu$ cannot contain a triple of bipartitions with too small intersection.

\begin{lemma}\label{lem:gausslocalmin}
If $\tau,\tau'$ are distinct tangles, then there is a local minimum which is oriented differently by $\tau$ and $\tau'$
\end{lemma}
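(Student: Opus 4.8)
The plan is to prove the contrapositive in spirit: given two distinct tangles $\tau,\tau'$ of a set $\Partitions_\CostUpper = \bigcup_j (\Partitions_j)_\CostUpper$, I want to locate a single cut that they orient differently and then argue it can be taken to be a local minimum. First I would observe that since $\tau \neq \tau'$, there is at least one cut in $\Partitions_\CostUpper$ on which they disagree; among all such cuts, pick one of minimum cost, say $\{A_{j,i},A_{j,i}^\complement\}$ along axis $j$ at sampling point $x_{j,i}$. The claim is that this cut is a local minimum in $\Partitions_j$. Both $\tau$ and $\tau'$ restricted to $\Partitions_j$ are linearly ordered chains of nested half-spaces along axis $j$ (each tangle orients every cut in $\Partitions_j$, and consistency forces the orientations to be nested: a tangle picks a "threshold" $x_{j,i}$ below which it points right and above which it points left), so the disagreement on axis $j$ happens at a single threshold location — this is exactly the place where the two chains switch over.

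**The local-minimum argument.** Suppose for contradiction that $\{A_{j,i},A_{j,i}^\complement\}$ is not a local minimum. Then, by Observation~\ref{fact:local_min} and Assumption~\ref{ass:order_metric}, the cost function along axis $j$ is monotone on one side of $x_{j,i}$ in the direction of the cluster/threshold structure; in particular at least one of the neighbouring cuts $\{A_{j,i-1},A_{j,i-1}^\complement\}$ or $\{A_{j,i+1},A_{j,i+1}^\complement\}$ has cost $\le c(\{A_{j,i},A_{j,i}^\complement\}) \le \CostUpper$, so it lies in $\Partitions_\CostUpper$. I would then argue that $\tau$ and $\tau'$ must already disagree on this cheaper neighbouring cut, contradicting minimality of the chosen cut. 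The point is that the "switch" between the two nested chains cannot straddle a gap of size one without one of the two adjacent cuts also being oriented inconsistently between the two chains — more precisely, if $\tau$ and $\tau'$ agree on both $\{A_{j,i-1},\cdot\}$ and $\{A_{j,i+1},\cdot\}$ but disagree on $\{A_{j,i},\cdot\}$, then since $A_{j,i-1}\subseteq A_{j,i}\subseteq A_{j,i+1}$, one of $\tau$ or $\tau'$ orients $A_{j,i}$ towards the "outer" side while agreeing with the other on the strictly larger $A_{j,i+1}$ pointing inwards and the strictly smaller $A_{j,i-1}$ pointing inwards, which by the consistency/profile structure of a tangle forces its orientation of $A_{j,i}$, a contradiction. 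Whichever of $i-1$, $i+1$ is the one with cost $\le \CostUpper$ is then the cheaper witness of disagreement.

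**Main obstacle.** The delicate part is making precise why a tangle orients the cuts within a single $\Partitions_j$ as a nested chain with a single threshold — this uses the consistency condition with triples (via the profile property recalled in Appendix~\ref{app:sub:tangles_triples}) together with Assumption~\ref{ass:side_sizes} so that one can control the sizes of triple intersections of axis-parallel half-spaces. I expect the argument to reduce to: if a tangle pointed "outward" at some $x_{j,i}$ but "inward" at a more extreme $x_{j,i'}$ on the same side, the triple formed by these two sides together with any third side from another axis would have intersection controlled by a Gaussian tail that is forced below $a$, violating consistency; filling in this tail estimate and confirming it only needs $a < n/12$ (the standing hypothesis of Theorem~\ref{thm:gausstangles}) is where the real work lies. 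Once the nested-chain structure is established, the combinatorial step above closes the proof quickly, and Observation~\ref{fact:local_min} guarantees the local minimum so produced is in fact the unique one along axis $j$.
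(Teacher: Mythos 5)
Your overall skeleton is the paper's: take a minimum-cost cut on which $\tau$ and $\tau'$ disagree and show that, were it not a local minimum, the cheaper neighbouring cut would already force a contradiction. But the one step that actually carries the proof is left as a gesture, and your plan for filling it points at the wrong tool. The decisive fact is purely combinatorial: by the sampling in Algorithm~\ref{alg:slicing} (and the standing assumption in Section~\ref{sub:theory_metric_NEW} that all axis-parallel cuts are taken, i.e.\ sampling with parameter $2$), consecutive cuts on an axis satisfy $\abs{A_{j,i}\cap A_{j,i-1}^\complement}\le a-1<a$, while $A_{j,i-1}\cap A_{j,i}^\complement=\emptyset$. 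If the cheaper neighbour is, say, $\{A_{j,i-1},A_{j,i-1}^\complement\}$, then the tangle containing $A_{j,i}^\complement$ must contain $A_{j,i-1}^\complement$ (disjointness), hence by agreement on the cheaper cut \emph{both} tangles contain $A_{j,i-1}^\complement$; but the tangle containing $A_{j,i}$ then holds two sides with intersection of size $<a$, violating consistency. That is the entire argument. Your phrase ``which by the consistency/profile structure of a tangle forces its orientation'' is exactly where $\abs{A_{j,i}\cap A_{j,i-1}^\complement}<a$ must be invoked, and you never state it; without it, a tangle could perfectly well orient $A_{j,i}$ ``outward'' while agreeing with the neighbours, so nothing is forced.

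Your ``main obstacle'' paragraph compounds this: establishing that a tangle restricted to $\Partitions_j$ is a nested chain needs no Gaussian tail estimate, no Assumption~\ref{ass:side_sizes}, and no hypothesis $a<n/12$ --- it follows immediately from consistency, since $A_{j,i}$ and $A_{j,i'}^\complement$ are disjoint for $i\le i'$ and so cannot both lie in a tangle. The lemma as stated is model-free in this sense; the probabilistic assumptions enter only in the surrounding results (Theorems~\ref{thm:gausstangles} and~\ref{thm:tanglesgauss}), not here. A secondary, smaller imprecision: you invoke agreement on \emph{both} neighbours $A_{j,i\pm1}$, but minimality only guarantees agreement on whichever neighbour is at most as expensive (the other may be costlier, may be a further disagreement, or may not lie in $\Partitions_\CostUpper$ at all); the paper's argument, and the corrected version of yours, uses only the one cheaper neighbour.
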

\begin{proof}
Let $\{A_{j,i},A_{j,i}^\complement\}$ be a cut of minimal possible cost distinguishing $\tau$ and $\tau'$, say $A_{j,i} \in \tau$ and $A_{j,i}^\complement \in \tau'$. If $\{A_{j,i},A_{j,i}^\complement\}$ is not a local minimum, say because $\{A_{j,i-1},A_{j,i-1}^\complement\}$ has lower cost (the other case is analogous), then both $\tau$ and $\tau'$ would need to orient $\{A_{j,i-1},A_{j,i-1}^\complement\}$ similarly, and thus, since one of the two contains $A_{j,i}^\complement$, they would both need to contain $A_{j,i-1}^\complement$ by our consistency condition. However, since $\abs{A_{j,i}\cap A_{j,i-1}^\complement} < a$ this contradicts the consistency of the tangle $\tau$.
\end{proof}

\begin{lemma}
There exists a local minimum in $\Partitions_j$ if and only if $\abs{\mu_j-\nu_j}>2\sigma$. Moreover if $(A_{j,i},A_{j,i}^\complement)$ is a local minimum, then either $\nu_j<x_{j,i}<\mu_j$, or $\mu_j<x_{j,i}<\nu_j$.
\end{lemma}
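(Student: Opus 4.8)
The plan is to pass from the discrete cost ordering to the continuous one‑dimensional marginal density of the mixture along axis $j$, and then to recover the classical fact that an equal‑weight mixture of two Gaussians of common variance $\sigma^2$ is bimodal exactly when the component means are more than $2\sigma$ apart.

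First I would unwind Assumption~\ref{ass:order_metric}. Along axis $j$ the value $c(\{A_{j,i},A_{j,i}^\complement\})$ is a strictly increasing function of $f_j(x_{j,i})$, where $f_j(x)=\tfrac12\varphi_\sigma(x-\mu_j)+\tfrac12\varphi_\sigma(x-\nu_j)$ is the marginal density of the mixture on the $j$‑th coordinate and $\varphi_\sigma$ is the density of $\mathcal{N}(0,\sigma^2)$. Hence, up to a monotone reparametrisation, the cost sequence $\bigl(c(\{A_{j,i},A_{j,i}^\complement\})\bigr)_i$ is the sequence $\bigl(f_j(x_{j,i})\bigr)_i$ evaluated at the increasing sampling points $x_{j,1}<x_{j,2}<\dots$, so a cut of $\Partitions_j$ is a local minimum in the sense of the stated definition iff $f_j(x_{j,i-1})>f_j(x_{j,i})<f_j(x_{j,i+1})$. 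Since in the expectation regime underlying Assumptions~\ref{ass:order_metric} and~\ref{ass:side_sizes} the sampling points are dense wherever $f_j>0$, such an index exists iff $f_j$ has an interior local minimum; this reduces both assertions to statements about the real function $f_j$.

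Next I would analyse $f_j$. Writing $\bar x=\tfrac12(\mu_j+\nu_j)$ and $\delta=\tfrac12|\mu_j-\nu_j|$, symmetry about $\bar x$ gives $f_j'(\bar x)=0$, and with $\varphi_\sigma''(t)=\sigma^{-2}(t^2\sigma^{-2}-1)\varphi_\sigma(t)$ one gets $f_j''(\bar x)=\varphi_\sigma''(\delta)=\sigma^{-2}(\delta^2\sigma^{-2}-1)\varphi_\sigma(\delta)$, which is $>0$ iff $\delta>\sigma$, i.e.\ iff $|\mu_j-\nu_j|>2\sigma$. This proves one direction: when $|\mu_j-\nu_j|>2\sigma$ the point $\bar x$ is a strict local minimum of $f_j$, and the density of sampling points near $\bar x$ forces three consecutive points $x_{j,i-1}<x_{j,i}<x_{j,i+1}$ on which $f_j$ dips, hence a local minimum cut. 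For the converse I would show that $\delta\le\sigma$ makes $f_j$ strictly unimodal, by proving $f_j'(x)<0$ for every $x>\bar x$ (and, by symmetry, $f_j'(x)>0$ for $x<\bar x$). After recentring at $\bar x$ this is the claim $(y+\delta)\varphi_\sigma(y+\delta)+(y-\delta)\varphi_\sigma(y-\delta)>0$ for $y>0$; for $y\ge\delta$ both summands are nonnegative with the first strictly positive, and for $0<y<\delta$, writing $u=y/\delta\in(0,1)$ and $r=\delta/\sigma\le1$, the inequality reduces to $\tfrac{1+u}{1-u}\ge e^{2ur^2}$, which follows from $\ln\tfrac{1+u}{1-u}=2u+\tfrac23u^3+\dots\ge 2u\ge 2ur^2$. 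A strictly unimodal $f_j$ yields a monotone‑then‑monotone cost sequence, hence no local minimum cut.

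Finally, for the location claim I would avoid locating the modes and argue straight from the shape of $f_j$. Since $\mu_j\le\nu_j$, a sign check shows $f_j'(x)>0$ for all $x\le\mu_j$ and $f_j'(x)<0$ for all $x\ge\nu_j$ (at such $x$ the relevant component's derivative already has the right sign and the other only reinforces it), so $f_j$ is strictly increasing on $(-\infty,\mu_j]$ and strictly decreasing on $[\nu_j,\infty)$. If $(A_{j,i},A_{j,i}^\complement)$ is a local minimum and $x_{j,i}\le\mu_j$, then $x_{j,i-1}<x_{j,i}\le\mu_j$ forces $f_j(x_{j,i-1})<f_j(x_{j,i})$, contradicting the definition; similarly $x_{j,i}\ge\nu_j$ is impossible; hence $\mu_j<x_{j,i}<\nu_j$, which is the stated alternative. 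I expect the only delicate point to be the discrete‑versus‑continuous bridge in the first paragraph — turning ``$f_j$ has a strict interior local minimum'' into ``three consecutive sampled values dip'' — which is precisely what the expectation Assumptions~\ref{ass:order_metric} and~\ref{ass:side_sizes} license; the analytic core (the $2\sigma$ threshold and the elementary bound $\ln\tfrac{1+u}{1-u}\ge 2u$) is routine, and the result is consistent with Observation~\ref{fact:local_min}, the unique minimum being at $\bar x$.
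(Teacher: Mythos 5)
Your proof is correct and follows the same overall architecture as the paper's: reduce the discrete statement about cut costs to a statement about the one-dimensional marginal density via Assumption~\ref{ass:order_metric}, invoke the bimodality threshold $\abs{\mu_j-\nu_j}>2\sigma$ for an equal-weight, equal-variance two-component Gaussian mixture, and settle the location claim by monotonicity of the density on $(-\infty,\mu_j]$ and $[\nu_j,\infty)$. The one genuine difference is that the paper treats the bimodality criterion as a known classical fact and simply cites Helguerro and Schilling, whereas you prove it from scratch: the second-derivative test at the midpoint $\bar x$ for the ``only if enough separation'' direction, and the elementary inequality $\ln\tfrac{1+u}{1-u}\ge 2u\ge 2ur^2$ to establish strict unimodality when $\delta\le\sigma$. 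Your computation checks out (including the boundary case $\delta=\sigma$, where strictness of $\ln\tfrac{1+u}{1-u}>2u$ for $u>0$ still rules out an interior local minimum, consistent with the strict inequality in the lemma), so what you gain is a self-contained argument at the cost of about a page of calculus; the paper gains brevity by outsourcing exactly this step. You are also more candid than the paper about the discrete-versus-continuous bridge (that a strict interior minimum of $f_j$ must be witnessed by three consecutive sampled values); the paper elides this entirely under Assumption~\ref{ass:order_metric}, and your treatment is no weaker on this point.
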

\begin{proof}
There is a local minimum in $\Partitions_j$ if and only if the density function has a local minimum by Assumption \ref{ass:order_metric}. This is the case precisely if $\abs{\mu_j-\nu_j}>2\sigma$, see \cite{Helguerro,Schilling}. The second part is then also immediate, since the density function is monotone on the intervals $(-\infty,\mu_j)$ and $(\nu_j,\infty)$.
\end{proof}

\begin{proofof}{Theorem~\ref{thm:gausstangles}}
We consider the set of all cuts in $\bigcup \Partitions_j$ up to and including the cost of $\{A_{j, i}, A_{j,i}^\complement\}$, that is, $\{A_{j, i}, A_{j,i}^\complement\}$ is the only local minimum that we consider and
it is also the most expensive cut that we consider.

Since the $j$-axis is the only axis on which we consider a local minimum, each cut which we are considering except $(A_{j,i}, A_{j,i}^\complement)$ needs to have $\mu$ and $\nu$ on the same side, which we call the `middle'.
We define $\tau_\mu$ by orienting $\{A_{j, i}, A_{j,i}^\complement\}$ as $A_{j,i}$, and all other cuts towards this middle.

Let us now show that this orientation is consistent; a tangle.
Observe that, by Assumption~\ref{ass:side_sizes}, each side in $\tau_\mu$ contains at least $n/2$ points.
Further if we consider some three sides of cuts in $\tau_\mu$ which are along the same axis, then at most two of them are relevant to the intersection, since one of the sides will always be a superset of one of the others.
So, considering three sides of any cuts in $\tau_\mu$, we need to consider just two cases:

\emph{Case I: All cuts are along distinct axes.}
Each of the three sides is expected to contain at least $n/2$ points, as observed.
As the distributions along the distinct axes are independent, we thus have that the intersection of the three contains at least $n/8 > a$ points by Assumption~\ref{ass:side_sizes}.

\emph{Case II: Two cuts $\{A_{k,l}, A_{k,l}^\complement\}, \{A_{k,l'}, A_{k,l'}^\complement\}$ are along the same axis, the third is along another axis.}
We first determine the size of the intersection of the cuts along the same axis $k$.
We claim that one of the two is chosen at distance more than $\sigma$ from $\mu_k$. 
Indeed, if one of the two cuts equals $\{A_{j, i}, A_{j,i}^\complement\}$ then this is true by the assumption of this theorem.
Otherwise we claim that for one of the two cuts the point $x_{k,l}$ or $x_{k','l}$ has lower distance from $\nu_k$, respectively $\nu_{k'}$, than from $\mu_k$, respectively $\mu_{k'}$. Indeed, if for both of the two cuts the points $x_{k,l}$ and $x_{k','l}$ would have larger distance from $\nu_k$ and $\nu_{k'}$ than from $\mu_k$ and $\mu_{k'}$, respectively, then the size of the intersection of the sides of the two cuts contained in $\tau_\mu$ would be equal to the size of the smaller of the two respective sides, hence we do not need to consider this triplet of cuts.
Therefore, indeed we have for one of the two cuts, say for $\{A_{k,l}, A_{k,l}^\complement\}$, that $x_{k,l}$ has smaller distance from $\nu_k$ than from $\mu_k$.
Now if $x_{k,l}$ would have distance less than $\sigma$ from $\mu_k$, it would also need to have distance less than $\sigma$ from $\nu_k$, but this implies that this cut has higher costs than $\{A_{j, i}, A_{j,i}^\complement\}$, by Assumption \ref{ass:order_metric}, as $x_{j,i}$ has distance at least $\sigma$ from both, $\mu_j$ and $\nu_j$.

Thus, indeed one of the two cuts, say $\{A_{k,l}, A_{k,l}^\complement\}$, satisfies $\abs{x_{k,l} - \mu_k} > \sigma$.

By Assumption~\ref{ass:side_sizes}, in expectation, at least 84\% of the points belonging to $\mu$ -- that is at least $0.42n$ points -- lie on the same side of $\{A_{k,l}, A_{k,l}^\complement\}$ as $\tau_\mu$ chooses.
At least half of the points belonging to $\mu$ -- that is at least $0.25n$ points -- lie on the chosen side of the other cut along the same axis, $\{A_{k,l'}, A_{k,l'}^\complement\}$.
Thus in their intersection there are expected to be at least $0.17n$ points.

The third cut is along an independent axis and, by the same argument as in Case I, at most halves this number.
So there are already at least $0.085n > n/12 > a$ points in the intersection, alone from the points belonging to $\mu$.
\end{proofof}

To prove Theorem~\ref{thm:tanglesgauss} we will use the following simplification: Given a sampled cut $\{A_{j,i},A_{j,i}^\complement\}$ there exists a whole interval in $\R$ in which we can chose an $x_{j,i}$ such that $A_{j,i}=\{v\in V\mid v_j<x_{j,i}\}$. To simplify the arguments, we will assume that our Assumptions \ref{ass:order_metric} and \ref{ass:side_sizes} hold, regardless of how we have chosen $x_{j,i}$ for our given cut.
This gives us the technical possibility to consider the cut $\{B_{j,x}, B_{j,x}^\complement\}$ sampled at $x \in \R$, that is, \ $B_{j,x}:=\{v\in V\mid v_j<x\}$, for any $x\in \R$ and any dimension $j$. The result of this technicality is, that we do not have to take into account that, while we know that this cut is contained in $\Partitions_j$, we may have put it into that set for a slightly different $x$. This simplifies the arguments a lot. We are aware, that as a formal statement this is an unrealistic assumption. However as the number of points tends to $\infty$, we will converge to this assumption, and thus we expect that the consequence of this theorem still holds.\par \medskip

\begin{proofof}{Theorem~\ref{thm:tanglesgauss}}
Suppose there is a tangle $\tau$ which points neither to $\mu$ nor to $\nu$.
Let us suppose that $j$ is chosen such that $\abs{\mu_j-\nu_j}$ is maximal. 
The general strategy of this proof is as follows: We will show that there are two local minima cuts in $\tau$, one along the $j$-axis and one along another axis, such that one of the two points away from $\mu$, and the other points away from $\nu$. The cost of these local minima cuts will allow us to find a set of three cuts which $\tau$ needs to orient a certain way, but which together violate the consistency condition on $\tau$, due to the quantiles of the gaussian distribution. This will result in a contradiction to the assumption that $\tau$ is a tangle.

So let us first deal with the task of finding these local minima cuts.

As $\tau$ does not point towards $\mu$, there exist a cut $\{B_{i,x},B_{i,x}^\complement\}$ whose orientation in $\tau$ points away from $\mu$, say $B_{i,x}^\complement\in \tau$ but $\mu_i < x$. We want to show that in this case we may suppose that $\{B_{i,x},B_{i,x}^\complement\}$ is a local minimum. 

So suppose that we cannot choose $\{B_{i,x},B_{i,x}^\complement\}$ as a  local minimum. We claim that there is an $x'\ge x$ and some $\epsilon>0$ such that $B_{i,x'}^\complement\in \tau$ and $\{B_{i,x'+\epsilon'},B_{i,x'+\epsilon'}^\complement\}$ is not oriented by $\tau$ for any $\epsilon'<\epsilon$. If we do not find such an $x'$ and $\epsilon$, then $\tau$ would need to orient every $\{B_{i,x'},B_{i,x'}^\complement\}$ for any $x'>x$. However, since $B_{i,x}^\complement\in \tau$ we can conclude by the consistency condition that also $B_{i,x'}^\complement\in \tau$ for all $x'>x$ where $\abs{x'-x}$ is small enough. Applying this argument inductively, we can conclude that in fact $B_{i,x'}^\complement\in \tau$ for all $x'>x$. But, for large enough $x'$ we have that $\abs{B_{i,x'}^\complement}<\agree$ which is a contradiction to the fact that $\tau$ is a tangle.

Thus we may suppose without loss of generality that our point $x$ with $B_{i,x}^\complement\in\tau$ but $\mu\in B_{i,x}$ is chosen such that there is some $\epsilon>0$ with the property that  the cut $\{B_{i,x+\epsilon'},B_{i,x+\epsilon'}^\complement\}$ is not oriented by $\tau$ for any $\epsilon'<\epsilon$. This means that $x$ was chosen as large as possible with the property that $\{B_{i,x},B_{i,x}^\complement\}$ is still oriented differently by $\tau$ and $\mu$.

Now this choice of $x$ implies that $\mu_i< x\le \nu_i$ as otherwise, if $\mu_i\le\nu_i<x$, the cut $\{B_{i,x+\epsilon},B_{i,x+\epsilon}^\complement\}$ would, for any $\epsilon>0$, have lower costs than $\{B_{i,x},B_{i,x}^\complement\}$, contradicting the choice of $x$. Now, if $\abs{\mu_i-\nu_i}>2\sigma$, for $x'=\mu_i+\frac{\abs{\mu_i-\nu_i}}{2}$ the cut $\{B_{i,x'},B_{i,x'}^\complement\}$ is a local minimum and it is easy to see that $\tau$ contains $B_{i,x'}^\complement$ and thus this is a local minimum cut of the type that we assumed does not exists. Hence we have that $\abs{\mu_i-\nu_i} \le 2\sigma$.

Thus the density function along the $i$-axis has a unique local maximum between $\mu_i$ and $\nu_i$, and the cost of $\{B_{i,\mu_i},B_{i,\mu_i}^\complement\}$ is less than the cost of $\{B_{i,x},B_{i,x}^\complement\}$.
Now for any $y\in \R$, the cut $\{B_{j,y},B_{j,y}^\complement\}$ along the $j$-axis (recall that $j$ was chosen such that $\abs{\mu_j-\nu_j}$ is maximal) has lower cost than $\{B_{i,\mu_i},B_{i,\mu_i}^\complement\}$ as the cost of such a cut is maximal if $y$ equals either $\mu_j$ or $\nu_j$ and in that case we still have that $\abs{y-\nu_j}>2\sigma> \abs{\mu_i-\nu_i}$ or $\abs{y-\mu_j}>2\sigma> \abs{\mu_i-\nu_i}$. 
However, this implies that $\tau$ needs  to orient $\{B_{j,y},B_{j,y}^\complement\}$ for every $y\in \R$ which is not possible without violating the consistency condition. Hence the assumption that we do not have a local minimum cut in $\tau$ pointing away from $\mu$ results in a contradiction to the fact that $\tau$ is a tangle.

So, there indeed needs to be a local minimum cut in $\tau$ which points away from $\mu$.
Similarly, we find a local minimum cut in $\tau$ which points away from $\nu$. As $j$ was chosen such that $\abs{\mu_j-\nu_j}$ is maximal, $\tau$ in particular needs to orient the locally minimal cut $\{B_{j,l}, B_{j,l}^\complement\}$ in dimension $j$, since  this cut is the local minimum cut of lowest possible cost. So let us  suppose wlog.\ that $\tau$ orients this cut the same way as $\mu$, that is, $\tau$ contains $B_{j,l}$ and $\mu_j<l<\nu_j$.

Further let $\{B_{i,x},B_{i,x}^\complement\}$ be a local minimum cut that is oriented differently by $\tau$ and $\mu$, so that $B_{i,x}^\complement\in \tau$ but $\mu_i<x$. 

Since $\{B_{i,x},B_{i,x}^\complement\}$ is a local minimum, we know that $\abs{x-\mu_i}>\sigma$, as $\abs{x-\mu_i}=\abs{x-\nu_i}$ and $\abs{\mu_i-\nu_i}>2\sigma$ by the fact that there is a local minimum cut along the $i$-axis. Moreover, we can consider the cut $\{B_{j,z}, B_{j,z}^\complement\}$ corresponding to the point $z:=\mu_j-\abs{x-\mu_i}$ in dimension $j$, which, by Assumption~\ref{ass:order_metric}, is also oriented by $\tau$ since the density along the $j$-axis at $z$ is less than the density along the $i$-axis at $x$. Since $\tau$ is a tangle, it needs to be the case that $B_{j,z}^\complement$ is contained in $\tau$: If $B_{j,z}\in\tau$ this contradicts consistency, as $\tau$, by Assumption \ref{ass:order_metric}, also orients all the cuts $\{B_{j,y},B_{j,y}^\complement\}$ for $y\le z$.

Furthermore let us consider the point $z':=\mu_j+\abs{x-\mu_i}$. As $\abs{x-\mu_i}<\frac{\abs{\nu_j-\mu_j}}{2}$ we have that $\abs{z'-\mu_j}=\abs{x-\mu_i}$ and $\abs{z'-\nu_j}\ge\abs{x-\nu_i}$, and thus Assumption~\ref{ass:order_metric} again ensures that the cut $\{B_{j,z'}, B_{j,z'}^\complement\}$ in dimension $j$ corresponding to $z'$ has lower cost than $\{B_{i,x},B_{i,x}^\complement\}$ and is thus oriented by $\tau$. Since $B_{j,l}\in \tau$ it needs to be the case that $B_{j,z'}\in \tau$ as otherwise, $\tau$ would by Assumption \ref{ass:order_metric} also need to orient all the cuts $\{B_{j,y},B_{j,y}^\complement\}$ for $z'\le y\le l$ which would then contradict consistency.

We can now use Assumption \ref{ass:side_sizes} to calculate how many points are contained in $B_{j,z}^\complement\cap B_{j,z'}\cap  B_{i,x}^\complement$. Let us first consider the points from $\mu$. Let us denote the fraction of points from $\mu$ contained in $B_{i,x}^\complement$ as $p$, so $p=\abs{V_\mu\cap B_{i,x}^\complement}\frac{2}{n}$.
As $\abs{x-\mu_i}>\sigma$, we have that $p<0.16$.

By the choice of $z$ we have that $B_{j,z}^\complement$ contains $(1-p)\frac{n}{2}$ points from $\mu$ and similarly, $B_{j,z'}$ contains a fraction of $(1-p)$ of the points from $\mu$, namely $(1-p)\frac{n}{2}$ points.

So in total, by Assumption \ref{ass:side_sizes}, the set $B_{j,z}^\complement\cap B_{j,z'}\cap  B_{i,x}^\complement$ contains $p(1-p)^2\frac{n}{2}$ points from~$\mu$.

For the points from $\nu$ we observe that $B_{i,x}^\complement$ contains, by symmetry and the choice of $p$ exactly $(1-p)\frac{n}{2}$ points from~$\nu$.
$B_{j,z'}$ contains fewer points from $\nu$ than $B_{j,l}$ and $B_{j,l}$ contains $q \leq \frac{1}{2} \cdot \left(1 + \operatorname{erf}\left(\frac{-\abs{\mu_j - \nu_j}}{(2\sqrt{2\sigma^2})}\right)\right)$ of the points from $\nu$. So, again using Assumption \ref{ass:side_sizes}, we can bound the total number of points contained in $B_{j,z}^\complement\cap B_{j,z'}\cap  B_{i,x}^\complement$  by  $\frac{n}{2}(p(1-p)^2+q(1-p))$. As $p\le 0.16$ and $q<p$, this is maximized for $p=0.16$, where we obtain a value of about $n\cdot (0.42q+0.056)$, hence if $a>n\cdot (0.42q+0.056)$, $\tau$ cannot be a tangle, as claimed.
\end{proofof}

\newpage
\subsection*{Questions in the Narcissistic Personality Inventory Dataset}\label{app:npi_questions}
\begin{table}[bh]
    \centering
    \begin{tabular}{p{0.5in}||p{2.2in}||p{2.2in}}
         & statement A & statement B \\
        \hline
        \hline
        1 &  I have a natural talent for influencing people. &  I am not good at influencing people.\\
        \hline
        2 &  Modesty doesn't become me. &  I am essentially a modest person.\\
        \hline
        3 &  I would do almost anything on a dare. &  I tend to be a fairly cautious person.\\
        \hline
        4 &  When people compliment me I sometimes get embarrassed. &  I know that I am good because everybody keeps telling me so.\\
        \hline
        5 &  The thought of ruling the world frightens the hell out of me. &  If I ruled the world it would be a better place.\\
        \hline
        6 &  I can usually talk my way out of anything. &  I try to accept the consequences of my behavior.\\
        \hline
        7 &  I prefer to blend in with the crowd. &  I like to be the center of attention.\\
        \hline
        8 &  I will be a success. &  I am not too concerned about success.\\
        \hline
        9 &  I am no better or worse than most people. &  I think I am a special person.\\
        \hline
        10 &  I am not sure if I would make a good leader. &  I see myself as a good leader.\\
        \hline
        11 &  I am assertive. &  I wish I were more assertive.\\
        \hline
        12 &  I like to have authority over other people. &  I don't mind following orders.\\
        \hline
        13 &  I find it easy to manipulate people. &  I don't like it when I find myself manipulating people.\\
        \hline
        14 &  I insist upon getting the respect that is due me. &  I usually get the respect that I deserve.\\
        \hline
        15 &  I don't particularly like to show off my body. &  I like to show off my body.\\
        \hline
        16 &  I can read people like a book. &  People are sometimes hard to understand.\\
        \hline
        17 &  If I feel competent I am willing to take responsibility for making decisions. &  I like to take responsibility for making decisions.\\
        \hline
        18 &  I just want to be reasonably happy. &  I want to amount to something in the eyes of the world.\\
        \hline
        19 &  My body is nothing special. &  I like to look at my body.\\
        \hline
        20 &  I try not to be a show off. &  I will usually show off if I get the chance.\\
    \end{tabular}
\end{table}

\begin{table}[h!]
    \centering
    \begin{tabular}{p{0.5in}||p{2.2in}||p{2.2in}}
         & statement A & statement B \\
        \hline 
        \hline
        21 &  I always know what I am doing. &  Sometimes I am not sure of what I am doing.\\
        \hline
        22 &  I sometimes depend on people to get things done. &  I rarely depend on anyone else to get things done.\\
        \hline
        23 &  Sometimes I tell good stories. &  Everybody likes to hear my stories.\\
        \hline
        24 &  I expect a great deal from other people. &  I like to do things for other people.\\
        \hline
        25 &  I will never be satisfied until I get all that I deserve. &  I take my satisfactions as they come.\\
        \hline
        26 &  Compliments embarrass me. &  I like to be complimented.\\
        \hline
        27 &  I have a strong will to power. &  Power for its own sake doesn't interest me.\\
        \hline
        28 &  I don't care about new fads and fashions. &  I like to start new fads and fashions.\\
        \hline
        29 &  I like to look at myself in the mirror. &  I am not particularly interested in looking at myself in the mirror.\\
        \hline
        30 &  I really like to be the center of attention. &  It makes me uncomfortable to be the center of attention.\\
        \hline
        31 &  I can live my life in any way I want to. &  People can't always live their lives in terms of what they want.\\
        \hline
        32 &  Being an authority doesn't mean that much to me. &  People always seem to recognize my authority.\\
        \hline
        33 &  I would prefer to be a leader. &  It makes little difference to me whether I am a leader or not.\\
        \hline
        34 &  I am going to be a great person. &  I hope I am going to be successful.\\
        \hline
        35 &  People sometimes believe what I tell them. &  I can make anybody believe anything I want them to.\\
        \hline
        36 &  I am a born leader. &  Leadership is a quality that takes a long time to develop.\\
        \hline
        37 &  I wish somebody would someday write my biography. &  I don't like people to pry into my life for any reason.\\
        \hline
        38 &  I get upset when people don't notice how I look when I go out in public. &  I don't mind blending into the crowd when I go out in public.\\
        \hline
        39 &  I am more capable than other people. &  There is a lot that I can learn from other people.\\
        \hline
        40 &  I am much like everybody else. &  I am an extraordinary person.\\
    \end{tabular}
\end{table}

\end{document}